\documentclass{article}
\usepackage[numbers]{natbib}  
\usepackage[preprint]{neurips_2025}

\usepackage{times}  % DO NOT CHANGE THIS
\usepackage{helvet}  % DO NOT CHANGE THIS
\usepackage{courier}  % DO NOT CHANGE THIS
\usepackage[hyphens]{url}  % DO NOT CHANGE THIS
\usepackage{graphicx} % DO NOT CHANGE THIS
\usepackage{caption} % DO NOT CHANGE THIS AND DO NOT ADD ANY OPTIONS TO IT
\usepackage{amsmath,amsfonts,bm}

\newcommand{\Acc}{{\operatorname{Acc}}}

\newcommand{\captionc}{{\em (c)}}

\def\eqref#1{equation~\ref{#1}}
\def\1{\bm{1}}

\DeclareMathAlphabet{\mathsfit}{\encodingdefault}{\sfdefault}{m}{sl}
\SetMathAlphabet{\mathsfit}{bold}{\encodingdefault}{\sfdefault}{bx}{n}

\newcommand{\ba}{\boldsymbol{a}}

\newcommand{\bw}{\boldsymbol{w}}
\newcommand{\bx}{\boldsymbol{x}}
\newcommand{\by}{\boldsymbol{y}}

\newcommand{\bM}{\boldsymbol{M}}

\newcommand{\bQ}{\boldsymbol{Q}}

\newcommand{\bW}{\boldsymbol{W}}

\newcommand{\btheta}{\boldsymbol{\theta}}

\newcommand{\cL}{\mathcal{L}}

\usepackage{microtype}
\usepackage{graphicx}
\usepackage{subfigure}
\usepackage{booktabs} % for professional tables
\usepackage{multirow}
\usepackage{makecell}
\usepackage{algorithm}
\usepackage{algorithmic}
\usepackage{wrapfig}
\usepackage{bm}
\usepackage{hyperref}

\usepackage{amsmath}
\usepackage{amssymb}
\usepackage{mathtools}
\usepackage{amsthm}
\usepackage{enumitem}

\usepackage[capitalize,noabbrev]{cleveref}

\DeclareCaptionStyle{ruled}{labelfont=normalfont,labelsep=colon,strut=off} % DO NOT CHANGE THIS

\theoremstyle{plain}
\newtheorem{theorem}{Theorem}[section]

\newtheorem{lemma}[theorem]{Lemma}
\newtheorem{condition}{Condition}
\newtheorem{corollary}[theorem]{Corollary}
\theoremstyle{definition}
\newtheorem{definition}[theorem]{Definition}

\theoremstyle{remark}

\usepackage[textsize=tiny]{todonotes}

\usepackage[utf8]{inputenc} % allow utf-8 input
\usepackage[T1]{fontenc}    % use 8-bit T1 fonts
\usepackage{amsfonts}       % blackboard math symbols
\usepackage{nicefrac}       % compact symbols for 1/2, etc.
\usepackage{xcolor}         % colors

\title{On the Learning Dynamics of Two-layer Linear Networks with Label Noise SGD}

\author{
Tongcheng Zhang$^{1*}$, Zhanpeng Zhou$^{1}$\thanks{Equal contribution.} , Mingze Wang$^{2}$,
Andi Han$^{3,4}$,  \\
\textbf{Wei Huang}$^{4,5\dagger}$, \textbf{Taiji Suzuki}$^{4,6}$, \textbf{Junchi Yan}$^{1}$\thanks{Corresponding authors. Junchi Yan is also with School of AI, Shanghai Jiao Tong University. The authors of Shanghai Jiao Tong University were partly supported by NSFC 92370201.} 
\vspace{4pt}\\
$^1$Sch. of Computer Science and Zhiyuan College, Shanghai Jiao Tong University\\
$^2$Peking University\quad $^3$University of Sydney\\
$^4$RIKEN Center for Advanced Intelligence Project \\
$^5$The Institute of Statistical Mathematics \\
$^6$The University of Tokyo \\
\{a-usually, zzp1012, yanjunchi\}@sjtu.edu.cn,
wei.huang.vr@riken.jp
}
\begin{document}

\maketitle

\begin{abstract}
One crucial factor behind the success of deep learning lies in the implicit bias induced by noise inherent in gradient-based training algorithms.
Motivated by empirical observations that training with noisy labels improves model generalization, we delve into the underlying mechanisms behind stochastic gradient descent (SGD) with label noise.
Focusing on a two-layer over-parameterized linear network, we analyze the learning dynamics of label noise SGD, unveiling a two-phase learning behavior.
In \emph{Phase I}, the magnitudes of model weights progressively diminish, and the model escapes the lazy regime; enters the rich regime.
In \emph{Phase II}, the alignment between model weights and the ground-truth interpolator increases, and the model eventually converges.
Our analysis highlights the critical role of label noise in driving the transition from the lazy to the rich regime and minimally explains its empirical success.
Furthermore, we extend these insights to Sharpness-Aware Minimization (SAM), showing that the principles governing label noise SGD also apply to broader optimization algorithms.
Extensive experiments, conducted under both synthetic and real-world setups, strongly support our theory. Our code is released at https://github.com/a-usually/Label-Noise-SGD.
\end{abstract}

\section{Introduction}
\label{sec:intro}
\vspace{-7pt}
% First paragraph: Understading noise is important. Label noise SGD is a special example worth to analyze.
% The success of deep learning is often attributed not only to architectural innovations but also to the implicit regularization induced by stochasticity in training algorithms.
% Among these stochastic elements, label noise—traditionally viewed as a nuisance—has emerged paradoxically as a critical driver of generalization. 
One central factor behind the success of modern deep learning stems from the implicit bias induced by inherent stochastic noise in gradient-based training algorithms.
While clean training data is ideal, recent studies~\citep{Christopher2019measuring,haochen2021shape,damian2021label} revealed that injecting label noise, or label smoothing during training can paradoxically improve the generalization of neural networks. \cref{fig:label_noise_sgd_generalization} demonstrates the observation where stochastic gradient descent (SGD) with label noise enhances model generalization capability and inherently favors sparser solutions.
\citet{Wang2023ImplicitBO} also demonstrated that training with SGD alone jumps from the local minimum solution with only a small possibility.
These phenomena challenge conventional wisdom and raise a fundamental question: \begin{center}
   \emph{How does label noise, often undesirable in statistical learning, confer benefits in over-parameterized models?}
\end{center} 
Existing theoretical works have tried to understand the mechanisms behind SGD with noisy labels. 
\citet{blanc2020implicit,damian2021label,li2022what} focused on the local geometry around the global minimizers selected by label noise SGD.
\citet{haochen2021shape,vivien2022label} analyzed the behavior of diagonal linear networks under label noise SGD.
However, few attempts to study the learning dynamics of label noise SGD in a more realistic setting. 

\noindent
\textbf{Our Contributions.} In this work, we provide a theoretical analysis of the learning dynamics in a two-layer linear network trained with label noise SGD.

\textbullet~\emph{Theoretical analysis.}
In this work, we rigorously characterize the learning dynamics of a two-layer linear network where both layers are trainable by label noise SGD on a regression task. 
In particular, we identify two phases:
\begin{itemize}[topsep=0.2em,leftmargin=1em]
    \item \textbf{Phase I.} The magnitudes of neuron weights progressively diminish, and the model escapes from the lazy regime~\citep{Chizat2019lazytraining}; enters the rich regime~\citet{geiger2020disentangling}.
    \item \textbf{Phase II.} The neurons increasingly align ground-truth interpolator, and the model becomes sparser.
\end{itemize}

Refer to \cref{tab: two-phase picture} for a summary. In the lazy regime, the dynamics are linear and the model achieves zero training loss with parameters hardly varying~\citep{du2018gradient,li2018learning,du2019gradient,allenzhu2019convergence}.
The lazy regime is often considered undesirable in practice and fails to explain the surprising generalization of neural networks~\citep{Chizat2019lazytraining}.
In contrast, the rich regime, or feature learning, which draws significant attention recently, captures complex non-linear dynamics and is considered beneficial for generalization.
Our analysis highlights the effect of label noise SGD in shifting dynamics from  \emph{lazy} to \emph{rich} regime, serving as a minimalist example to explain its intriguing properties.

Notably, the combination of over-parameterization and the intricate coupling between the first and second layers makes the theoretical analysis of label noise SGD far more challenging than for simpler linear models. 
To the best of our knowledge, our work presents the first detailed theoretical investigation of label noise SGD in networks with two or more trainable layers.

\textbullet~\emph{Extension.} Furthermore, we explore whether the principles underlying label noise SGD apply to broader optimization algorithms.
We extend our findings on label noise SGD to Sharpness-Aware Minimization (SAM)~\citep{foret2021sharpnessaware}. 
We show that SAM also fosters the transition from the \emph{lazy}  to \emph{rich}  regime and promotes sparsity in neural networks.

\emph{In summary}, our work unveils a richer set of implicit bias of label noise SGD. We theoretically analyze the dynamics of SGD with label noise and carefully characterize how it transitions from lazy to rich regime.
Our results offer valuable insights into the mechanisms behind the noise inherent in stochastic learning algorithms.

\begin{figure*}[tb!]
  \begin{center}
    \includegraphics[width=\textwidth]{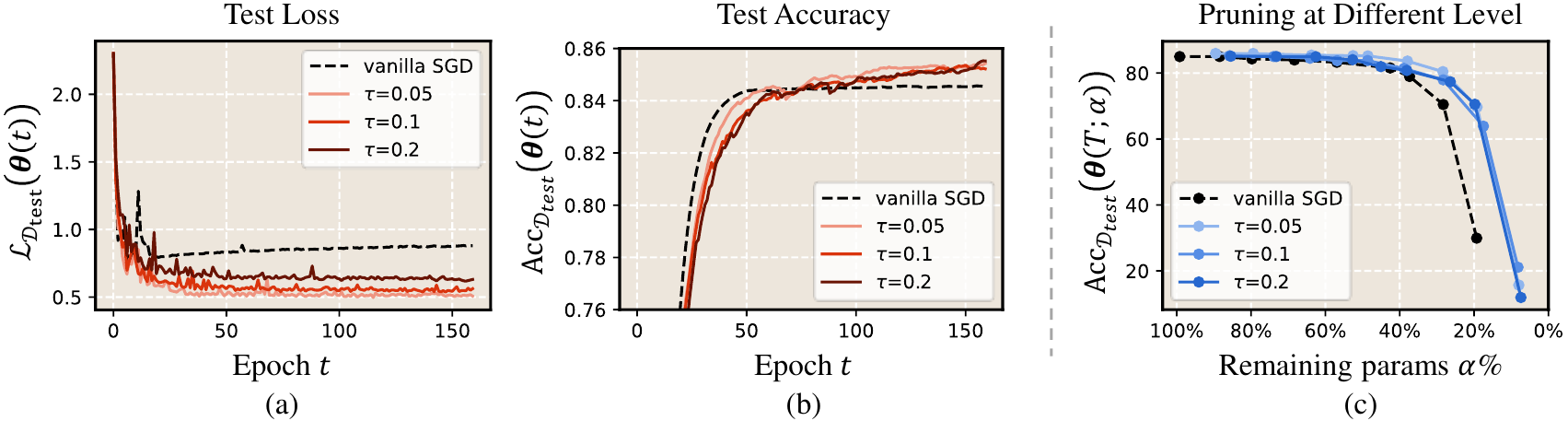}
    % \vspace{-5pt}
    \caption{
        \textbf{(Left).Label noise SGD (\cref{alg:label_noise_sgd}) leads to better generalization.}
        Test loss $\cL(\btheta(t))$ and accuracy $\Acc(\btheta(t))$ vs. training epochs $t$.
         \textbf{(Right). Label noise SGD leads to sparser solutions.}
        Testing accuracy of pruned model $\Acc(\btheta(T; \alpha))$ vs. the percentage of remaining parameters $\alpha$.
        Here, $\btheta(T; \alpha)$ represents the pruned model derived from the pretrained model $\btheta(T)$, with $\alpha$\% of parameters remaining.
        We use both vanilla SGD and label noise SGD to train the models, with no weight decay or momentum. The learning rate is set to $0.1$, and the total number of epochs is $160$.
        % We use both vanilla SGD and label noise SGD to train the models.
        % Label noise SGD outperforms vanilla SGD in both test loss and accuracy across all values of $\tau$.
        Exponential moving average is employed to smooth the test accuracy curves.
        % \textbf{(Right). Label noise SGD leads to sparser solutions.}
        % Testing accuracy of pruned model $\Acc(\btheta(T; \alpha))$ vs. the percentage of remaining parameters $\alpha$.
        % Here, $\btheta(T; \alpha)$ represents the pruned model derived from the pretrained model $\btheta(T)$, with $\alpha$\% of parameters remaining.
        % Similarly, we use both vanilla SGD and label noise SGD to produce the pretrained model $\btheta(T)$.
        % Notably, when training with label noise SGD, pruned model retains a relatively higher performance compared to training with vanilla SGD.
        Results are presented for ResNet-18 \citep{kaiming2016residual} trained on CIFAR-10 \citep{krizhevsky2009learning}, across different label noise probabilities, $(\tau \in \{0.05, 0.1, 0.2\})$. 
        As shown in figure (a) and figure (b), label noise SGD consistently outperforms vanilla SGD in both test loss and accuracy across different values of the label flipping probability $\tau$, providing an around $1.5$\% improvement in test accuracy.
        As shown in \cref{fig:label_noise_sgd_generalization}~\captionc, models trained with label noise SGD maintain higher performance at the same sparsity level compared to those trained with vanilla SGD.}
        % Detailed settings are in \cref{suppl:exp_settings}. 
    \label{fig:label_noise_sgd_generalization}
    \vspace{-5pt}
  \end{center}
\end{figure*}

\section{Related Work}
\label{sec:related_work}
\vspace{-7pt}

{\bf Lazy Regime.}
Numerous theoretical studies investigated the learning dynamics of highly over-parameterized neural networks in the {\em lazy} (or kernel) regime~\citep{jacot2018NTK,du2018gradient,du2019gradient,allenzhu2019convergence,zou2020gradient}. In this regime, the model behaves as its linearized model around initialization throughout training, making it equivalent to a deterministic kernel, specifically the neural tangent kernel (NTK)~\citep{jacot2018NTK}. The lazy regime typically occurs in over-parameterized models with {\em relatively large initialization}~\citep{Chizat2019lazytraining}. While global exponential convergence can be established in this setting, the lazy dynamics fail to explain the generalization advantage over kernel methods—a fundamental question in understanding the success of deep learning. 

\noindent
{\bf Rich Regime.}
In contrast to the lazy regime, where learning dynamics remain linear, the {\em rich} regime\footnote{This term broadly refers to learning behaviors that deviate from the lazy regime.}, also known as feature learning regime, exhibits complex nonlinear dynamics~\citep{chizat2018global,mei2019mean}, including the initial alignment phenomenon~\citep{maennel2018gradient,luo2021phase} and saddle-to-saddle dynamics~\citep{jacot2021saddle,abbe2023sgd,pesme2023saddle}.
Some studies have demonstrated that the initialization scale governs the emergence of the rich regime in (S)GD, which typically occurs at {\em small initialization scales}~\citep{geiger2020disentangling,Woodworth2020kernel}. In this regime, it is shown that small initialization induces simplicity biases, leading to sparse or low-rank features~\citep{maennel2018gradient,li2020towards,lyu2021gradient,boursier2022gradient,wang2023understanding,min2023early}.
Subsequent work further revealed that the relative scale of initializations~\citep{azulay2021shape,kunin2024get} and their effective rank~\citep{liu2024how} can similarly induce feature learning.
Beyond initializations, factors like weight decay~\citep{lewkowycz2020on,jacot2022feature,lyu2023dichotomy} and large learning rates~\citep{lewkowycz2020large,ba2022highdimensional} have also been shown to drive the rich regimes.

\noindent
\textbf{Label Noise SGD Theories.}
Many existing theoretical works have analyzed label noise SGD from the perspective of implicit regularization. 
\citet{blanc2020implicit,damian2021label,li2022what} showed that label noise implicitly regularizes the sharpness of the minimizers. 
\citet{haochen2021shape,vivien2022label} proved that training with label noise helps recover the sparse ground-truth interpolator in a diagonal linear network setup.
\citet{takakura2024meanfield} analyzed the implicit regularization of label noise from a kernel perspective. 
In addition to implicit regularization, \citet{huh2024generalization} derived a generalization bound for label noise SGD. \citet{Han2025OnTR} theoretically analyzed the training dynamics with static label noise. \citet{Huang2025HowDL} analyzed the generalization capability in low SNR regimes. \citet{NEURIPS2024_6ec81faa} showed that label noise SGD tends to gradually reduce the rank of the parameter matrix.

% our work bridges empirical observations and theoretical analysis. 
% Empirically, we show that label noise SGD leads to improved generalization and sparsity. 
\emph{In comparison},  we theoretically analyze the learning dynamics of label noise SGD in an over-parameterized two-layer linear network, highlighting the transition from lazy to rich regime. Analyzing the two-layer linear network with label noise SGD requires careful treatment of the update rule of both layers, which introduces complex coupling effect between the first and the second layer parameters, thus posing significant challenges to theoretical analysis.  

\section{Preliminaries}
\label{sec:prelim}
\vspace{-7pt}

\begin{algorithm}[tb!]
   \caption{Label Noise SGD~\citep{haochen2021shape,damian2021label}}
   \label{alg:label_noise_sgd}
\textbf{Input}: Initial parameters $\boldsymbol{\theta}(0)$, step size $\eta$, label flipping probability $\tau$, batch size $B$, steps $T$.
\begin{algorithmic}[1]
\FOR{$t=0$ {\bfseries to} $T-1$}
    \STATE Sample a batch $\mathcal{B}_t \in [n]^B$ uniformly.
    \FOR{each $i\in \mathcal{B}_t$}
        \STATE Sample $u \sim {\rm Uniform}(0, 1)$.
        \IF{$u < \tau$}
            \STATE $\tilde{y}_i = $ sample from $[c] \backslash \{y_i\}$.
        \ELSE
            \STATE $\tilde{y}_i = y_i$.
        \ENDIF
        \STATE $\hat{\ell}_i(\boldsymbol{\theta}(t)) = \ell(f(\boldsymbol{\theta}(t); \boldsymbol{x}_i), \tilde{y}_i)$.
    \ENDFOR
    \STATE $\hat{\mathcal{L}}(\boldsymbol{\theta}(t)) = \frac{1}{B}\sum_{i \in \mathcal{B}_t} \hat{\ell}_i(\boldsymbol{\theta}(t))$.
    \STATE $\boldsymbol{\theta}(t+1) = \boldsymbol{\theta}(t) - \eta \nabla \hat{\mathcal{L}}(\boldsymbol{\theta}(t))$.
\ENDFOR
\end{algorithmic}
\end{algorithm}

\noindent
\textbf{Basic Notation and Setup.}
Denote $[k]=\{1,2,\ldots,k\}$.
Let $\mathcal{D} = \{(\boldsymbol{x}_i, y_i)\}_{i=1}^n$ be the training set, where $\boldsymbol{x}_i\in \mathbb{R}^{d}$ is the input and $y_i \in \mathbb{R}$ is the label/target of the $i$-th data point.
Let $f: \mathcal{D} \times \mathbb{R}^d \to \mathbb{R}$ be the model function and let $f(\boldsymbol{x}_i; \boldsymbol{\theta})$ be the model output on the $i$-th data point, where $\boldsymbol{\theta} \in \mathbb{R}^p$ are the model parameters. 
The loss of the model at the $i$-th sample $(\boldsymbol{x}_i, y_i)$ is denoted as $\ell(f(\boldsymbol{x}_i; \boldsymbol{\theta}), y_i)$, simplified to $\ell_i(\boldsymbol{\theta})$.
The loss over the training set is then given by $\mathcal{L}_{\mathcal{D}}(\boldsymbol{\theta}) = \frac{1}{n} \sum_{i=1}^n \ell_i(\boldsymbol{\theta})$.
Note that we consider classification tasks in our empirical observations, where $y_i \in [c]$ and $c$ are the number of classes.
We also use $\text{Acc}_{\mathcal{D}} (\boldsymbol{\theta})$ to denote the classification accuracy of $f(\boldsymbol{\theta})$ on the dataset $\mathcal{D}$. 

Throughout the paper, bold lowercase letters denote vectors, and bold uppercase letters represent matrices. 
The unbolded lowercase letters with subscripts indicate the entries of vectors or matrices, such as $x_i$ for the $i$-th entry of $\boldsymbol{x}$ and $a_{i,j}$ for the $(i, j)$-th entry of $\boldsymbol{A}$. 
For simplicity, we use $\Vert \boldsymbol{x}\Vert$ as $\Vert \boldsymbol{x}\Vert_2$ for a vector $\boldsymbol{x}$ and $\Vert \boldsymbol{X}\Vert$ as $\Vert \boldsymbol{X}\Vert_F$ for a matrix $\boldsymbol{X}$.

\noindent
\textbf{Label Noise SGD.}
We recall the algorithm of label noise SGD in \cref{alg:label_noise_sgd}. We focus on a classification setting, where the label flipping probability $\tau$ governs the noise level; for simplicity, our theoretical analysis  considers a regression task. 
Specifically, label noise SGD can be adapted to regression by replacing $\tau$ with the noise variance $\sigma^2$. 
In this context, the noisy label $\tilde{y}_i$ is generated by $
    \tilde{y}_i = y_i + \epsilon, \text{where } \epsilon \sim \{-\sigma, \sigma\}.$
Assuming the squared loss without loss of generality, the training loss at the $i$-th data is given by: \begin{equation}
    \hat{\ell}_i(\boldsymbol{\theta}(t)) =  \frac{1}{2} \left\lvert f(\boldsymbol{\theta}(t); \boldsymbol{x}_i)- y_i - \epsilon \right\rvert^2.\label{eq:squared_loss}
\end{equation}
This setup is widely adopted in recent theoretical advances on label noise SGD~\citep{damian2021label,haochen2021shape,li2022what,vivien2022label,eun2024generalization}.

\section{Theoretical Analysis: The Learning Dynamics of Label Noise SGD}
\label{sec:theory}
\vspace{-7pt}

This section presents a theoretical analysis of learning dynamics in a two-layer linear network, characterizing the phase transition from lazy to rich regimes under label noise SGD.

% This section aims to explain the transition from lazy to rich regime of label noise SGD by analyzing its learning dynamics on a two-layer linear network.
\noindent
\textbf{Roadmap.}
In \cref{sec:setup}, we formulate the problem setup and provide an overview of our theoretical results.
In \cref{sec:phase_I}, we present formal theorems.
In \cref{sec:simulate}, we demonstrate the role of oscillation using a simple scenario.
In \cref{sec:phase_II}, we validate our theory by extensive experiments under both synthetic and real-world setups.
 
% Specifically, we unveil a multi-stage learning dynamics of label noise SGD.
\subsection{Setup and Overview: A Two-Layer Linear Network}
\label{sec:setup}

% We study the learning dynamics of a two-layer shallow linear network, which are mathematically tractable when trained with label noise SGD.

\textbf{Problem Setup.}
We consider a regression task where each data pair $(\boldsymbol{x}_i, y_i) \in \mathcal{D}$ maps input $\boldsymbol{x}_i \in \mathbb{R}^d$ to its corresponding target $y_i \in \mathbb{R}$.
We solve this task using a two-layer linear network of the form: \begin{align}
    \hat{y}_i = \boldsymbol{a}^{\top} \boldsymbol{W} \boldsymbol{x}_i,
\end{align}
where $\boldsymbol{W} \in \mathbb{R}^{m\times d}$ and $\boldsymbol{a} \in \mathbb{R}^{m}$.
Here, $m$ represents the number of neurons and $\boldsymbol{w}_i$ denotes the $i$-th neuron of $\boldsymbol{W}$.

\textbullet~\emph{Label noise SGD.} The network’s parameters $\boldsymbol{\theta}=\boldsymbol{a}^{\top} \boldsymbol{W}$ are optimized using label noise SGD with a squared loss function (see \cref{eq:squared_loss}).
The update rule is written as: \begin{align}
    &\boldsymbol{\theta}(t+1) = \boldsymbol{\theta}(t) - \eta \nabla_{\boldsymbol{\theta}} \hat\ell_{\xi_t}(\boldsymbol{\theta}(t)), \label{passage:theta update}\\
    &\hat\ell_{\xi_t}(\boldsymbol{\theta}(t)) = \frac{1}{2} \left\lvert f(\boldsymbol{\theta}(t); \boldsymbol{x}_{\xi_t}) - y_{\xi_t} - \epsilon_t \right\rvert^2,
\end{align}
where $\xi_t \in [n]$ represents the index of a randomly sampled training sample at iteration $t$, and the noise $\epsilon_t \sim \{-\sigma, \sigma\}$ is controlled by the variance $\sigma^2$. 

\textbullet~\emph{Initialization.} We consider label noise SGD starting from the following initializations: for $i\in [m]$ and $j\in [d]$, \begin{align}
    w_{i, j}(0) \stackrel{\rm i.i.d.}{\sim} \frac{1}{\sqrt{d}}\mathcal{N}(0, I) {\rm\ and\ } a_i(0) \stackrel{\rm i.i.d.}{\sim}\frac{1}{\sqrt{m}} \mathcal{N}(0, I).\label{eq:initialization}
\end{align}
This initialization scheme is commonly referred to as the NTK initialization~\citet{jacot2018NTK}. 
~\citet{allenzhu2019convergence} showed that training over-parameterized models initialized as \cref{eq:initialization} with SGD stays in the lazy regime.

\textbullet~\emph{Data generation.} Without loss of generality, we assume each input $\boldsymbol{x}_i$ is drawn from $\mathcal{N}(0, \boldsymbol{I}_{d\times d})$, and that there exists at least one interpolating parameter $\boldsymbol{\theta}^{\star}$ that perfectly fits the training set, i.e., $\mathcal{L}_{\mathcal{D}}(\boldsymbol{\theta}^{\star})=0$\footnote{$\mathcal{L}_{\mathcal{D}}(\boldsymbol{\theta}^{\star}) = \frac{1}{n} \sum_{i=1}^n \ell_i(\boldsymbol{\theta}^{\star}) = \frac{1}{n} \sum_{i=1}^n \frac{1}{2} \lvert f(\boldsymbol{\theta}^{\star}; \boldsymbol{x}_i), y_i \rvert^2$}.

\begin{table*}[!tb]
    \centering    
    \begin{tabular}{l|l|l}
    \hline\hline
        \multirow{2}*{\makecell[l]{\bf I}}
        & {\em The magnitudes of model weights progressively diminish;} 
        % as the stability condition~\citep{wu2022alignment} required for SAM is not satisfied.  
        & \bf \cref{passage norm decay each step} \\
        & {\em The model escapes the lazy regime; enters the rich regime.}
        % due to the subquadratic property of the landscape~\citep{ma2022beyond}. 
        & \bf \cref{passage: Phase I main theorem} \\
        \hline
        \multirow{2}*{\makecell[l]{\bf II}} 
        & {\em The alignment between model weights and the ground-truth } & \bf \cref{passage: phase II alignment} \\
        & {\em  interpolator increases; the model eventually converges.} & \bf \cref{passage: phase II norm increase} \\\hline\hline
    \end{tabular}
    \caption{Overview of the two-phase picture and corresponding theoretical results.}
    \label{tab: two-phase picture}
\end{table*}

\noindent
\textbf{Overview.} 
We first state our main conditions. 
\begin{condition}\label{cond:main}
Suppose there exists a sufficiently large constant~\footnote{$C\geq \max(e^{-\frac{(\sigma  C_{data})^2}{3}},(\frac{(1-3/(4\sqrt{\pi}))\cdot2\sqrt{d}}{1/2-3/(4\sqrt{\pi})}{\sqrt{\pi}})^8, e C_{data}^2)$, where $C_{data}$ is a constant defined in 
\cref{cond:main}\ A(5)} $C$  such that the following holds:
\begin{enumerate}[topsep=0em,leftmargin=0.15in]
    \item {\rm \bf (A1) Model width.} The width of the network $m$ satisfies
            $m = \Omega\left( \frac{1}{\sqrt{\eta}}\right)$.
    \item {\rm \bf (A2) Learning rate.} The learning rate satisfies
    $\eta \leq \frac{1}{C^{96}}$.
    \item {\rm \bf (A3) Dataset size.} The training set size satisfies $n\geq \frac{1}{\eta^2}$. 
    \item {\rm \bf (A4) Scale of the optimal parameter.} The ground-truth interpolator satisfies $\|\boldsymbol{\theta}^{\star}\|\leq m^{-1/4}$.
    \item  {\rm \bf (A5) Input magnitude.} The maximum norm of the input samples satisfies $\max_{i}\|\boldsymbol{x}_i\|\leq C_{data}$. \label{Input magnitude}
    \item  {\rm \bf (A6) Dimension of sample.} The dimension of a single sample $d$ satisfies $d\geq \frac{9(\ln2)\cdot K^4}{2c}$.
\end{enumerate}
\end{condition} 
\noindent
\textbf{Remarks on \cref{cond:main}.}
% We first require that the maximum norm of the input across training samples is upper-bounded.
% This condition is well-defined, as the training set is finite.
Specifically, A1 ensures over-parameterization, where $m \gg d$, which is important for enabling the progressively diminishing of weight norm in Phase I. 
A2 ensures the step size is small enough to allow Phase I to persist over a long range of iterations. 
A3 addresses the dataset size: when using gradient descent, a substantial volume of training data is typically required.
A4 assumes the sparse ground-truth interpolator. 
For the sake of clarity in our theoretical analysis, we additionally introduce the A5 to give an upper bound of input norm and A6 to give a lower bound of input dimension. 
In A5, $C_{data}$ is a constant and by definition of $C$ in A1, we have $C_{data}\leq C \ll m$. In A6, both $K$ and $c$ are constant and defined in Appendix (Lemma A.5). 

Under these conditions, we can state our main results: we identify a two-phase picture in the training dynamics of label noise SGD, as outlined in \cref{tab: two-phase picture}. 
In the following, we will formally present our formal theorems, with each claim supported by corresponding analysis.

\begin{figure*}[tb!]
  \begin{center}
    \includegraphics[width=\textwidth]{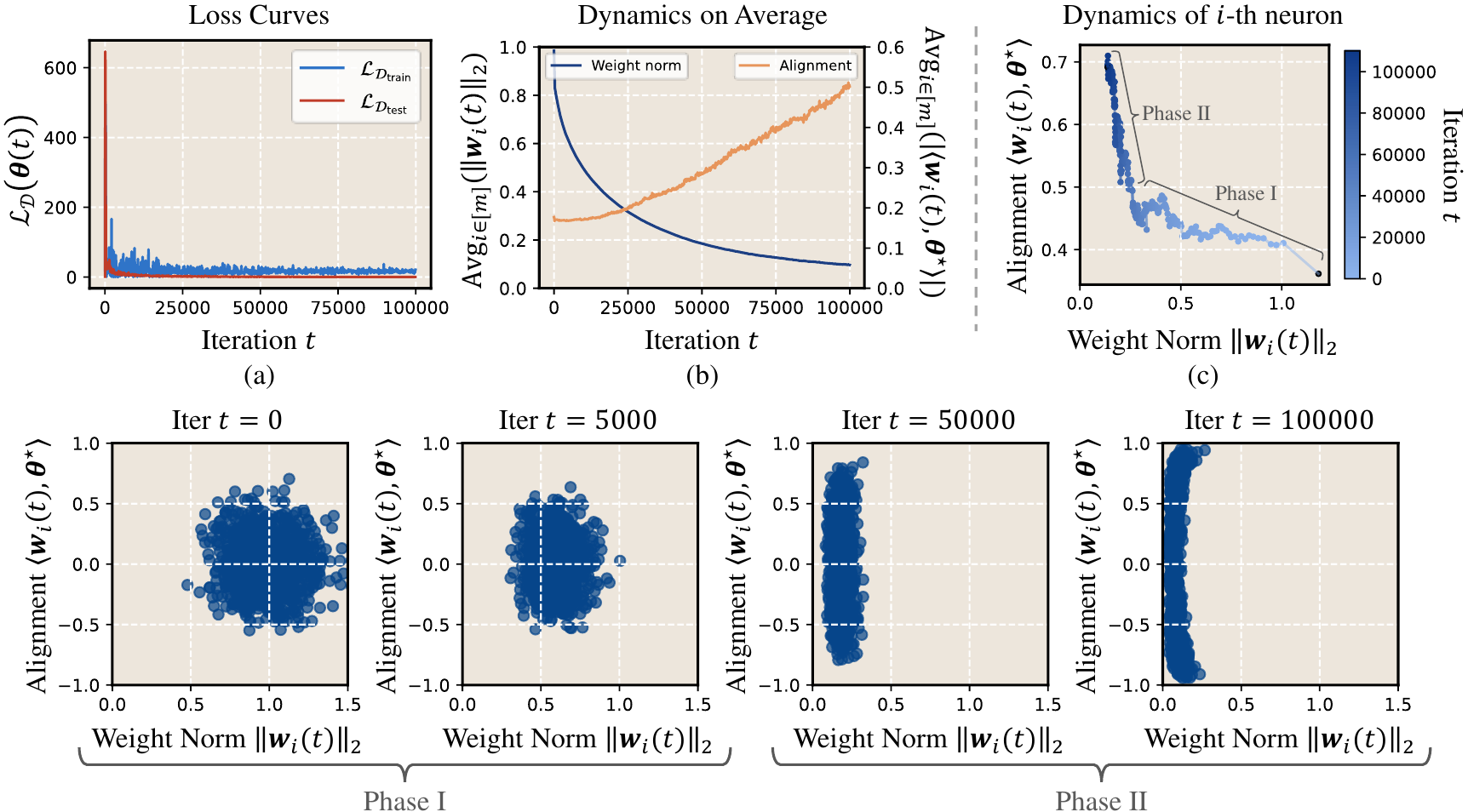}
     %\vspace{-5pt}
    \caption{
    Two-phase dynamics of label noise SGD under synthetic setup.
        We replicate the synthetic problem setup from \cref{sec:exp_validate}.
        (a) Loss curves.
        Training $\mathcal{L}_{\mathcal{D}_{\rm train}}(\boldsymbol{\theta}(t))$ and test loss $\mathcal{L}_{\mathcal{D}_{\rm test}}(\boldsymbol{\theta}(t))$ vs. training iteration $t$.
        (b) Learning dynamics on average.
        The mean neuron norm $\text{Avg}_{i\in [m]}(\Vert \boldsymbol{w}_i (t) \Vert_2)$ and the mean neuron alignment $\text{Avg}_{i\in [m]}(\langle \boldsymbol{w}_i(t), \boldsymbol{\theta}^{\star} \rangle)$ vs. training iteration $t$.
        (c) Learning dynamics of $i$-th neuron.
        The alignment of the $i$-th neuron $\langle \boldsymbol{w}_i(t), \boldsymbol{\theta}^{\star} \rangle$ vs. its weight norm $\Vert \boldsymbol{w}_i (t) \Vert_2$, with darker points indicating larger $t$.
        (Bottom) Complete view of dynamics of each neuron.
        This plot is similar to (c); yet instead of focusing on a single neuron, we plot the status over iterations.
    }
    \label{fig:labelnoisesgd_synthetic}
    %\vspace{-10pt}
  \end{center}
\end{figure*}
\subsection{Phase I: Progressively Diminishing; From the Lazy to the Rich Regime}
\label{sec:phase_I}
In this section, we present theoretical support for Phase I. Inspired by \citet{du2018gradient,du2019gradient,allenzhu2019convergence}, we first introduce the definition of the lazy regime. 
\begin{definition}[The lazy regime]\label{def:lazy_regime}
    $\forall i\in [m]$, it holds that $\|\boldsymbol{w}_i(t)-\boldsymbol{w}_i(0)\|\leq\frac{1}{\sqrt{m}}$.
\end{definition}
\cref{def:lazy_regime} depicts a minimal variation of model weights from its initialization at time $t$.
Based on \cref{def:lazy_regime}, we establish the following theorem.
\begin{theorem}[Escaping the lazy regime]
    Suppose \cref{cond:main}~(A1-2, 4-6) hold and consider the update rule in \cref{passage:theta update}.
    With probability at least $1-O(\frac{1}{m})$, all the neurons $\boldsymbol{w}_{i}$ ($i\in[m]$) escape from the lazy regime at time $T_1 = \frac{384\sqrt{\log m}}{\sigma^2\eta^2\sqrt{m}}$. \label{passage: Phase I main theorem}
\end{theorem}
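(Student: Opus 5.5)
The plan is to track the neuron norms $\|\bw_i(t)\|^2$ (together with $a_i(t)^2$) and show that label noise produces a steady downward drift of order $\eta^2\sigma^2$ per step. Once a neuron's norm has fallen by more than $1/\sqrt{m}$ below $\|\bw_i(0)\|$, the triangle inequality gives $\|\bw_i(t)-\bw_i(0)\| \ge \|\bw_i(0)\|-\|\bw_i(t)\| > 1/\sqrt{m}$, so the neuron has left the lazy regime of \cref{def:lazy_regime}.

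Write the per-sample residual as $r_t = \ba^\top\bW\bx_{\xi_t} - y_{\xi_t} - \epsilon_t$. The updates are $\bw_i \leftarrow \bw_i - \eta r_t a_i \bx_{\xi_t}$ and $a_i \leftarrow a_i - \eta r_t \bw_i^\top \bx_{\xi_t}$. Expanding $\|\bw_i(t+1)\|^2 - a_i(t+1)^2$ shows this quantity is conserved up to $O(\eta^2)$ terms. So it suffices to analyze a norm-like potential $P_i = \|\bw_i\|^2 + a_i^2$, or $|a_i|\,\|\bw_i\|$.

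The key computation is the conditional expectation of the one-step change in $P_i$ over $\epsilon_t$ and $\xi_t$. The first-order term is $-2\eta r_t\, a_i \bw_i^\top\bx$. Its $\epsilon_t$-part has mean zero, and its clean part is controlled by the clean residual; near interpolation it is small, using (A4) $\|\btheta^\star\|\le m^{-1/4}$ and the initial output scale. The more important effect comes from the label-noise variance $\eta^2\sigma^2$ acting through the curvature. A single step overshoots along the noise direction, and the next step's residual contains $-\eta\epsilon\|\bx\|^2(\|\ba\|^2 + \|\bW\|^2\text{-type terms})$. Averaging over two consecutive steps (or over $\epsilon$ inside $r_{t+1}$) yields a negative correction of order $-c\,\eta^2\sigma^2 \|\bx\|^2 P_i$ times a factor depending on the data. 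This is the mechanism that the existing analyses of \citet{damian2021label,blanc2020implicit} identify as implicit sharpness regularization. I would make it explicit by showing that the second-order terms $\eta^2 r_t^2(\|\bx\|^2 a_i^2 + (\bw_i^\top\bx)^2)$ are dominated by the cross terms with the opposite sign. I expect this to be exactly the content of \cref{passage norm decay each step} and would invoke it as a per-step expected decrease of order $\eta^2\sigma^2/\sqrt{m}$ in norm units. (A5) and (A6) supply the concentration of $\|\bx\|$ and of $\bw_i(0)^\top\bx$ needed for the constants.

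Summing over $T_1$ steps gives an expected decrease of $\Theta(T_1\eta^2\sigma^2/\sqrt{m})$ up to constants. This has to be compared with the fluctuation, a martingale with increments of size $O(\eta\sigma C_{data}|a_i|\|\bw_i\|)$. By Azuma/Freedman, its deviation after $T_1$ steps is at most $O(\eta\sigma\sqrt{T_1\log m}/\sqrt{m})$ with probability $1-m^{-2}$. A union bound over $i\in[m]$ then gives the $1-O(1/m)$ probability, and this is where the $\sqrt{\log m}$ in $T_1$ comes from. Choosing $T_1 = 384\sqrt{\log m}/(\sigma^2\eta^2\sqrt{m})$ makes drift minus fluctuation exceed $1/\sqrt{m}$, given (A1)--(A2).

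The main obstacle is the drift step. The residual is not small a priori and couples all neurons through $\ba^\top\bW$, so the sign of the expected second-order change is not obvious. I would first establish, by induction over $t\le T_1$, an invariant keeping the model near an interpolating manifold: $|r_t-\epsilon_t|\lesssim \eta\sigma$-scale, and norms bounded by their initial values. Inside that invariant, the $\epsilon$-averaged dynamics reduce to a clean negative quadratic drift.
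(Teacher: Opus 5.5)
There is a genuine gap at the drift step, and it comes from setting aside exactly the quantity that drives the result. Expanding both updates gives the exact identity
\[
\|\bw_i(t+1)\|^2 - a_i(t+1)^2 = \|\bw_i(t)\|^2 - a_i(t)^2 - \eta^2 r_t^2\bigl((\bx_{\xi_t}^\top\bw_i(t))^2 - a_i(t)^2\|\bx_{\xi_t}\|^2\bigr),
\]
i.e.\ $\|\bw_i(T)\|^2 = \|\bw_i(0)\|^2 + \eta^2\sum_{j<T}\Delta W_i(j) - a_i(0)^2 + a_i(T)^2$. The $O(\eta^2)$ non-conservation you set aside \emph{is} the mechanism. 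The horizon $T_1$ is of order $\eta^{-2}m^{-1/2}\sqrt{\log m}$, so these terms accumulate to order $\sqrt{\log m}/\sqrt m$, which is what beats the lazy radius. This quantity has no first-order term, and its second-order term has a definite sign. Since $|a_i|\le m^{-1/4}$ (\cref{bound lemma}) while $\|\bw_i\|$ is of order one, a spherical-cap estimate gives $\Delta W_i(j)\le 0$ outside an event of probability $\rho m^{-1/8}$. Since $\epsilon_j$ is symmetric and independent of the clean residual, $r_j^2\ge\sigma^2$ with probability at least $1/2$, so $\Delta W_i(j)\le-(\sigma/4)^2$ with probability at least $1/4$. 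This is what \cref{passage norm decay each step} states, and it concerns this difference, not your $P_i$. To finish, the paper uses a Chernoff bound over the $T_1$ steps, the bound $a_i^2\le m^{-1/2}$ on the boundary terms, and $\|\bw_i(0)\|\lesssim\sqrt{\log m}$ when passing from squared norms to norms. No near-interpolation, coupling across neurons, or martingale concentration is needed.

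Your route through $P_i=\|\bw_i\|^2+a_i^2$ makes things harder rather than easier. Its second-order increment $\eta^2 r_t^2\bigl((\bw_i^\top\bx)^2+a_i^2\|\bx\|^2\bigr)$ is \emph{positive}. The compensating decrease is hidden in the first-order term $-4\eta r_t a_i\bw_i^\top\bx = 4a_i(t)\bigl(a_i(t+1)-a_i(t)\bigr)$, whose sum is exactly $2\bigl(a_i(T)^2-a_i(0)^2\bigr)-2\sum_t\bigl(a_i(t+1)-a_i(t)\bigr)^2$. The $\epsilon_t$-part of this term is a genuine mean-zero martingale. Hence the whole aggregate drift, $-2$ times the quadratic variation of $a_i$ (order $\eta^2\sigma^2$ per step), sits in the clean part. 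So your claim that the clean part is small near interpolation is precisely what fails. The two-step overshoot through $\ba^\top\bW$ is neither needed to see the effect nor, as stated, shown to have the right sign and size.

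Your bookkeeping also does not close. A per-step decrease of $\eta^2\sigma^2/\sqrt m$ over $T_1$ steps totals $\Theta(\sqrt{\log m}/m)$. That is far below $1/\sqrt m$, and even below your own fluctuation bound $O\bigl((\log m)^{3/4}m^{-3/4}\bigr)$. The correct per-step decrease of $\|\bw_i\|^2$ is of order $\eta^2\sigma^2$, with no $1/\sqrt m$.

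Finally, the invariant $|r_t-\epsilon_t|\lesssim\eta\sigma$ cannot be started. Under NTK initialization $\|\btheta(0)\|$ is of order one while $\|\btheta^\star\|\le m^{-1/4}$, so the clean residual is of order one at $t=0$. The invariant is also unnecessary, because $r_t^2\ge\sigma^2$ with probability $1/2$ whatever the size of the clean residual.
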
 
\noindent
\textbf{Insights from \cref{passage: Phase I main theorem}.}
\cref{passage: Phase I main theorem} indicates that in Phase I, label noise SGD facilitates the transition from the lazy to the rich regime.
Indeed, such transition is induced by the \emph{progressively diminishing} of the first-layer weights $\boldsymbol{W}$.
Specifically, for each neuron $\boldsymbol{w}_i (i\in [m])$ at time $T$, we can easily derive that $\Vert \boldsymbol{w}_{i}(T)\Vert^2 =\Vert \boldsymbol{w}_{i}(0) \Vert^2 + \eta^2\sum_{j = 0}^{T-1}\Delta W_i(j) -a_{i}(0)^2+a_{i}(T)^2$ and $\Delta W_i(j) = -\nabla \hat\ell_{\xi_j}(\boldsymbol{\theta}(j))^2((\boldsymbol{x}_{\xi_j}^\top \cdot \boldsymbol{w}_{i}(j))^2 -  a_{i}(j)^2\cdot\Vert \boldsymbol{x}_{\xi_j}\Vert^2)$.
Since $\boldsymbol{a}(0)$ is initialized small, the term $\nabla \hat\ell_{\xi_j}(\boldsymbol{\theta}(j))^2(\boldsymbol{x}_{\xi_j}^\top \cdot \boldsymbol{w}_{i}(j))^2$ dominates the evolution of the weight norm.
Notably, by \cref{passage:theta update}, we have \begin{equation*}
    \nabla \hat\ell_{\xi_j}(\boldsymbol{\theta}(j))^2(\boldsymbol{x}_{\xi_j}^\top \cdot \boldsymbol{w}_{i}(j))^2 = (a_i(j+1)-a_i(j))^2.
\end{equation*}
Consequently, the evolution of the first-layer weight norm is primarily determined by the oscillations of the neurons in the second layer. 
Intuitively, label noise accelerates the oscillations in the second layer, thereby contributing to the progressive diminishing of the first-layer weights.

\noindent
\textbf{Proof Sketch of \cref{passage: Phase I main theorem}.}
The proof relies on \cref{passage norm decay each step}. 

\begin{lemma}[Progressively diminishing at each step]\label{passage norm decay each step}
    Suppose \cref{cond:main}~(A1-2, 4-6) hold and consider the update rule in \cref{passage:theta update}. Assume the model is still under the lazy regime at step $T$, then with probability at least $1-O(\frac{1}{m})$, for all the iterative steps $j\leq T_1$ and for every $i\in[m]$: 
    \begin{enumerate}[topsep=0em,leftmargin=0.15in]
        \item $\Delta W_i(j) \leq 0$ with probability at least $1 - \frac{\rho}{m^{1/8}}$. Furthermore,  $\Delta W_i(j) \leq -(\frac{\sigma}{4})^2$ with probability at least $ \frac{1}{4}$.
        \item $\Delta W_i(j) > 0$ with probability at most $\frac{\rho}{m^{1/8}}$.  Furthermore, we have $\Delta W_i(j)\leq O(1)$.
    \end{enumerate}
    where $\rho = \frac{2\sqrt{d}}{\sqrt{\pi}}$ is a constant.
\end{lemma}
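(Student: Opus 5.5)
The plan is to analyze $\Delta W_i(j) = -g_j^2\bigl((\bx_{\xi_j}^\top \bw_i(j))^2 - a_i(j)^2\|\bx_{\xi_j}\|^2\bigr)$, where $g_j = \nabla \hat\ell_{\xi_j}(\btheta(j)) = \ba(j)^\top \bW(j)\bx_{\xi_j} - y_{\xi_j} - \epsilon_j$ is the scalar residual. Its sign is determined by the bracket alone, since $g_j^2\ge 0$. So I will bound (i) the probability that $(\bx^\top\bw_i)^2 < a_i^2\|\bx\|^2$, and (ii) a lower bound on $g_j^2$ together with a lower bound on $(\bx^\top \bw_i)^2 - a_i^2\|\bx\|^2$ on a set of probability at least $1/4$.

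\textbf{Step 1 (lazy-regime control of the weights).} Under the lazy-regime assumption, $\|\bw_i(j)-\bw_i(0)\|\le 1/\sqrt m$. Gaussian concentration for $\bw_i(0)\sim \frac{1}{\sqrt d}\mathcal N(0,I_d)$ and a union bound over $i\in[m]$ give $\|\bw_i(j)\| = \Theta(1)$ with probability $1-O(1/m)$. The $\ba$-update is $a_i(j+1)-a_i(j) = -\eta g_j \bx^\top\bw_i(j)$, so over $j\le T_1$ the drift of $a_i$ can be bounded using $\eta$ small (A2), $|g_j|=O(1)$ and $\|\bx\|\le C_{data}$. The aim is $|a_i(j)| \lesssim \sqrt{\log m}/\sqrt m$ uniformly, with probability $1-O(1/m)$.

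\textbf{Step 2 (sign of the bracket).} Take $\bx=\bx_{\xi_j}$ as a fresh Gaussian draw, since the index is sampled uniformly and the data are Gaussian. Then $\bx^\top\bw_i(j)\sim \mathcal N(0,\|\bw_i\|^2)$. The bad event is $|\bx^\top \bw_i| < |a_i|\,\|\bx\|$. Using $\|\bx\|\le C_{data}$ and the Gaussian small-ball bound $\Pr(|Z|<s)\le \frac{2s}{\sqrt{2\pi}\|\bw_i\|}$, this probability is at most roughly $\frac{2}{\sqrt\pi}\,|a_i|\,\|\bx\|/\|\bw_i\|$. Plugging in Step 1 and the dimension scaling $\|\bx\|\approx\sqrt d$ should give $\rho\, m^{-1/8}$ with $\rho = 2\sqrt d/\sqrt\pi$, after absorbing the $\sqrt{\log m}$ and the lazy perturbations into the slack between $m^{-1/2}$ and $m^{-1/8}$ (A6 and the constant $C$ are used here). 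On the complement, $\Delta W_i(j)\le 0$. On the bad event, $\Delta W_i(j)\le g_j^2 a_i^2\|\bx\|^2 = O(1)$, because $|g_j|\le |\ba^\top\bW\bx|+|y|+\sigma = O(1)$, using A4 for $|y|\le \|\btheta^\star\|C_{data}$.

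\textbf{Step 3 (quantitative decrease with probability $1/4$).} For $g_j^2\ge \sigma^2/4$: $\epsilon_j$ is an independent symmetric sign, so with probability at least $1/2$ it has the same sign as the clean residual $\ba^\top\bW\bx - y$, giving $|g_j|\ge\sigma$. For the bracket: by anti-concentration, $|\bx^\top\bw_i|\ge \|\bw_i\|/2$ holds with probability about $0.6$. Combined with $a_i^2\|\bx\|^2 = o(1)$, this gives $(\bx^\top\bw_i)^2 - a_i^2\|\bx\|^2\ge 1/16$ with probability above $1/2$. The noise sign is independent of $\bx$, so the product of the two probabilities is at least $1/4$, and on that event $\Delta W_i(j)\le -\sigma^2/16 = -(\sigma/4)^2$.

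\textbf{Main obstacle.} The delicate point is Step 1: controlling $a_i(j)$ uniformly over all $j\le T_1$ and all $i$, since $T_1$ scales like $\eta^{-2}$ and the oscillations of $\ba$ are exactly what drives the dynamics. I would handle it with a martingale (Azuma) bound on the noise-driven increments, whose variance per step is $\eta^2\sigma^2 O(1)$, together with A1 ($m\gtrsim\eta^{-1/2}$). This should keep $|a_i|$ small enough relative to $m^{-1/8}$. If that bound fails, I would instead exploit the $m^{-1/8}$ slack directly, requiring only $|a_i|\le m^{-1/8}$.
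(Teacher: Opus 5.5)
Your Steps 2 and 3 follow the same structure as the paper's proof. The sign of $\Delta W_i(j)$ is read off the bracket, a small-ball bound controls the bad event, and on that event $\Delta W_i(j)\le g_j^2a_i(j)^2\|\bx_{\xi_j}\|^2$. The independent sign of $\epsilon_j$ (conditional probability at least $1/2$ that $|g_j|\ge\sigma$, given $\bx_{\xi_j}$), combined with anti-concentration of $\bx_{\xi_j}^\top\bw_i(j)$, then gives the $1/4$. One difference: the paper divides by $\|\bx_{\xi_j}\|$ first. The bad event becomes $|\langle\bx_{\xi_j}/\|\bx_{\xi_j}\|,\bw_i(j)\rangle|<|a_i(j)|$ with a uniform direction, and the spherical density bound gives $\rho=2\sqrt d/\sqrt\pi$ exactly, with no $C_{data}$.

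\textbf{The gap is Step 1.} The paper does not do this step inside the lemma; it imports it from its preceding lemma bounding $a_i$ and the loss. As you sketch it, the step fails because the increments of $a_i$ are not centered. With $\mathbb E[\bx\bx^\top]=I$,
$\mathbb E[a_i(t+1)-a_i(t)\mid\mathcal F_t]=-\eta\langle\btheta(t)-\btheta^\star,\bw_i(t)\rangle$.
Bounding this by $O(\eta)$ per step and summing over $T_1=\frac{384\sqrt{\log m}}{\sigma^2\eta^2\sqrt m}$ steps gives $O\bigl(\sqrt{\log m}/(\eta\sqrt m)\bigr)$. For $m\asymp\eta^{-1/2}$, which A1 allows, this is of order $\sqrt{\log m}\,\eta^{-3/4}$. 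Smallness of $\eta$ does not help, since $\eta T_1\gg1$, and Azuma applied to the noise-driven part says nothing about this drift. You have to show the drift is restoring. The paper does this with the supermartingale $Y_i(t)=|a_i(t)-\langle\btheta^\star,\bw_i(0)\rangle-o(m^{-1/2})|$ and a conditional contraction factor $1-\eta$. Note that it obtains that factor by replacing $\langle\bw_k(0),\bw_i(0)\rangle$ with its expectation $\delta_{ki}$; a careful version has to work with the actual Gram matrix $\bW\bW^\top$.

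\textbf{The bound on $|g_j|$ is circular.} Step 1 also asserts $|g_j|=O(1)$. But $|g_j|$ is controlled by $\|\btheta(j)-\btheta^\star\|$, and $\btheta(j)=\sum_k a_k(j)\bw_k(j)$. So it depends on the very quantities you want to bound using $|g_j|$. This has to be broken by an induction over $t$ that establishes both bounds at once, which is how the paper proceeds. It carries $|g|=O(m^{1/4})$ and $|a_i|\le m^{-1/4}$ together, and the lemma then uses exactly $g_j^2a_i(j)^2\|\bx_{\xi_j}\|^2\le O(m^{1/2})\cdot m^{-1/2}C_{data}^2=O(1)$.

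\textbf{Two smaller corrections.} First, even the centered part, with increments of size $O(\eta)$ over $T_1$ steps, fluctuates by order $(\log m)^{1/4}m^{-1/4}$ up to constants and logarithms. So the target $|a_i|\lesssim\sqrt{\log m/m}$ cannot be reached by Azuma. The attainable $\tilde O(m^{-1/4})$ matches the paper's $m^{-1/4}$ and suffices for Step 2, so your fallback is the right target.

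Second, the claim that $\|\bw_i(j)\|=\Theta(1)$ for all $i\in[m]$ with probability $1-O(1/m)$ is false for constant $d$. Since $\|\bw_i(0)\|^2\sim\chi^2_d/d$, a positive fraction of the neurons (depending only on $d$) have $\|\bw_i(0)\|<1/2$. Your Step 3 needs $\|\bw_i(j)\|>1/2$ for every $i$. The paper's proof tacitly needs such a lower bound too: its small-ball estimate carries a factor $1/\|\bw_i(j)\|$ that it drops. So state it as an assumption rather than deriving it from a union bound.
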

\cref{passage norm decay each step} states that with high probability, $\Delta W_i(j)$ is negative at each step. 
Furthermore, with probability at least $\frac{1}{4}$, $\Delta W_i(j)$ significantly deviates from zero. In our setup, we can prove that with high probability, $|a_i(t)|$ remains small throughout training. 
Thus, $\Delta W_i(j)$ primarily governs the change in $\Vert \boldsymbol{w}_i\Vert^2$ at each step. 
With \cref{passage norm decay each step}, we can demonstrate the progressive diminishing of the first-layer weights $\boldsymbol{W}$, thereby driving the transition from lazy to rich regime.

\subsection{Simulation Setup: Oscillation Induces Progressive Diminishing}
\label{sec:simulate}
In the previous analysis, we have shown that the oscillation of the second layer plays a central role in the progressive diminishing of the first-layer weights $\boldsymbol{W}$, and label noise SGD facilitates the oscillations, leading to the phase transition.

\noindent
\textbf{Simulation Setup.}
Inspired by the discovery, we propose to simulate the oscillation of the second layer via a simple three-state Markov process. To eliminate the interference of SGD noise, the first-layer weights $\boldsymbol{w}_i$ follow GD update rule:
\begin{gather}
    \label{passage: iter_w1_eta}
    \boldsymbol{w}_{i}(t+1) = \boldsymbol{w}_{i}(t) - \eta\cdot a_{i}(t)\nabla\mathcal{L}_{\mathcal{D}}(\boldsymbol{\theta}(t))\boldsymbol{x}_{i}, \\
    \label{passage: iter_w2_eta}
    a_{i}(t+1) = a_{i}(t)  + \delta_i(t),
\end{gather}
where 
\begin{equation*}
    \delta_i(t)=\left\{
    \begin{aligned}
    & -\eta^{0.25} \quad \text{if $a_{i}(t)= \eta^{0.25}$}\\
    & \eta^{0.25} \quad\quad \text{if $a_{i}(t)= -\eta^{0.25}$} \\
    & \sim \{-\eta^{0.25}, \eta^{0.25}\}
    \ \ \ \ \text{if $a_{i}(t)= 0$}
    \end{aligned}
\right.
\end{equation*}
The initialization of $a_i$ is set to $\eta^{0.25}$ or $-\eta^{0.25}$, each with probability $1/4$ and set to $0$ with probability $1/2$. 
The initialization of $\boldsymbol{w}_i$ remains consistent with \cref{eq:initialization}. With this design, the neurons in the second layer exhibit strong oscillations within a small range. 
The following lemma demonstrates the progressive diminishing under this algorithm.
\begin{lemma}[Progressively diminishing under simulation setup]\label{passage algorithm 2 decay}
     Suppose \cref{cond:main}~(A1-3, 5-6) holds (let $m = \frac{1}{\sqrt{\eta}}$) and consider the update rule in \cref{passage: iter_w1_eta,passage: iter_w2_eta}, there exists a step $t_0\leq \frac{1}{\eta^2}$ such that~\footnote{We simply denote $\mathbb{E}_{ \{x^{(t-1)}\}_{i=0}^{t-1} \in D, \{\epsilon_i\}_{i=0}^{t-1}\sim \{-\sigma,+\sigma\}}$ as $\mathbb{E}$}
     \begin{align}
        \mathbb{E}[\frac{1}{m}\sum_{i=1}^m \Vert \boldsymbol{w}_{i}(t_0)\Vert^2] \leq\sqrt{\eta}.
    \end{align}
\end{lemma}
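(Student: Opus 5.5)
The plan is a potential-function (drift) argument on $\Phi(t)=\frac1m\sum_{i=1}^m\Vert\bw_i(t)\Vert^2$, analysed over \emph{pairs} of steps so that the alternating sign of $a_i$ can be exploited. Write $\bm g(t)=\nabla\cL_{\mathcal D}(\btheta(t))\in\mathbb R^d$ for the gradient with respect to the end-to-end parameter $\btheta=\sum_i a_i\bw_i$, so that \cref{passage: iter_w1_eta} reads $\bw_i(t+1)=\bw_i(t)-\eta a_i(t)\bm g(t)$. Expanding gives
\begin{equation*}
\Vert\bw_i(t+1)\Vert^2=\Vert\bw_i(t)\Vert^2-2\eta a_i(t)\langle\bw_i(t),\bm g(t)\rangle+\eta^2a_i(t)^2\Vert\bm g(t)\Vert^2 .
\end{equation*}
The last term is $O(\eta^{2.5}\Vert\bm g\Vert^2)$. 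Since $\bm g(t)=\bm H(\btheta(t)-\btheta^\star)$ with $\bm H=\frac1n\sum_k\bx_k\bx_k^\top$, and (A5) gives $\Vert\bm H\Vert\le C_{data}^2$, we have $\Vert\bm g\Vert\lesssim C_{data}^2\Vert\btheta-\btheta^\star\Vert$. With $m=\eta^{-1/2}$ and $|a_i|\le\eta^{1/4}$, one gets $\Vert\btheta\Vert\le\eta^{1/4}\sum_i\Vert\bw_i\Vert\lesssim\eta^{1/4}m\sqrt{\Phi}$. This keeps the last term a lower-order error as long as $\Phi$ stays $O(1)$.

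Next I would control $\Phi$ a priori. Initially $\mathbb E\Phi(0)=1$ by \cref{eq:initialization}, and I would show by induction that $\Phi(t)\le 2$ on the whole horizon, so these error bounds apply throughout.

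The cross term has no sign at a single step. The key is the Markov structure. Once $a_i$ is nonzero it flips deterministically, $a_i(t+1)=-a_i(t)$, and a zero state is left after one step. So, pairing steps $t,t+1$ with $a_i(t+1)=-a_i(t)=:-s$, the two cross terms combine into
\begin{equation*}
-2\eta s\bigl(\langle\bw_i(t),\bm g(t)\rangle-\langle\bw_i(t+1),\bm g(t+1)\rangle\bigr)\approx 2\eta s\langle\bw_i(t),\bm g(t+1)-\bm g(t)\rangle+O(\eta^{2}s^2\Vert\bm g\Vert^2).
\end{equation*}
The difference $\bm g(t+1)-\bm g(t)=\bm H(\btheta(t+1)-\btheta(t))$ is dominated by the second-layer jumps, $\sum_k\delta_k(t)\bw_k(t)$, which are of size $\eta^{1/4}$ rather than $\eta$. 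Since $\delta_i(t)=-2s$ (or $\mp s$ at zero states), the diagonal $k=i$ contribution is $-4\eta s^2\,\bw_i^\top\bm H\bw_i=-4\eta^{1.5}\,\bw_i^\top\bm H\bw_i$, a strictly negative drift.

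The off-diagonal terms $k\neq i$ involve $\delta_k s$ with independent random signs for the chains that pass through $0$. I would show their expectation vanishes or is of lower order using the independence of the chains and the symmetric initialization. Where chains are locked in phase, I would argue that their sign products are symmetric in distribution; this is the step I expect to be the main obstacle, since $\bw_k$ depends on the past of all chains. I would handle it by conditioning on the $\bw$'s at time $t$ and taking expectation only over the current $\delta$'s, which are a fresh symmetric randomization whenever at least one chain is at $0$. A fallback is to bound the off-diagonal sum crudely via $\Vert\sum_k\delta_k\bw_k\Vert\le\eta^{1/4}m\sqrt\Phi$ and absorb it using (A6): for Gaussian inputs with $n\ge\eta^{-2}$, $\bm H$ concentrates near $\bm I$, so $\bw_i^\top\bm H\bw_i\ge\frac12\Vert\bw_i\Vert^2$ (via the Lemma A.5 concentration).

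Combining the pieces, while $\Phi\ge\sqrt\eta$ the two-step expected change satisfies $\mathbb E[\Phi(t+2)-\Phi(t)]\le-c\,\eta^{1.5}\,\mathbb E\Phi(t)+O(\eta^{2.5})\le-c'\eta^{2}$. Summing over $t\le\eta^{-2}$, and suitably tracking the constants, forces the expectation below $\sqrt\eta$ before the horizon. Concretely, I would set $t_0$ to be the first time $\mathbb E\Phi(t)\le\sqrt\eta$. If $t_0>\eta^{-2}$, then the accumulated decrease would exceed $\mathbb E\Phi(0)\le 1$, which is a contradiction. In fact the drift is geometric at rate $1-c\eta^{1.5}$, giving $t_0=O(\eta^{-1.5}\log(1/\eta))\le\eta^{-2}$ by (A2).
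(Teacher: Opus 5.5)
Your pairing step rests on a misreading of the simulated second layer. Under the rule for $a_i$ in \cref{passage: iter_w2_eta}, a nonzero state does not flip to its negative. From $\pm\eta^{1/4}$ the chain moves to $0$ with probability one, and from $0$ it jumps to a fresh uniform sign; this is the matrix $Q$ with stationary law $(\tfrac14,\tfrac12,\tfrac14)$. Hence $a_i(t)a_i(t+1)=0$ for all $i,t$, which is exactly the identity the paper uses to get $\btheta(t+1)=\btheta(t)+\bm{\delta}(t)^\top\bW(t)$. Also $|\delta_i(t)|=\eta^{1/4}$ always, never $-2s$. So in your pair $(t,t+1)$ the second cross term is identically zero, and $\langle\bw_i(t),\bm g(t)\rangle$ does not cancel against $\langle\bw_i(t+1),\bm g(t+1)\rangle$. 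The drift $-4\eta^{1.5}\bw_i^\top\bm H\bw_i$ is therefore computed for a different process. Under the actual dynamics you must handle the full term $-2\eta\, a_i(t)\,\bw_i(t)^\top\bm H(\btheta(t)-\btheta^\star)$. Its only sign-definite piece is the self-term $-2\eta\, a_i(t)^2\,\bw_i^\top\bm H\bw_i$.

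That leaves the step you flagged as the main obstacle unresolved: the co-active terms $-2\eta\, a_i(t)a_k(t)\,\bw_i^\top\bm H\bw_k$ ($k\neq i$) and $2\eta\, a_i(t)\,\bw_i^\top\bm H\btheta^\star$. Neither tool you propose closes it.
\begin{itemize}
\item The fallback fails by a factor $m=\eta^{-1/2}$. After averaging over $i$, the triangle-inequality bound is of order $\eta^{1.5}m\,\Phi$, against a drift of order $\eta^{1.5}\Phi$. The approximation $\bm H\approx\bm I$ only helps the diagonal.
\item Conditioning at time $t$ and averaging over the current $\delta$'s kills only chains sitting at $0$. For chains active at time $t$, the product $a_i(t)a_k(t)$ is already determined.
\end{itemize}
The missing observation sits one step earlier. $\bW(t)$ is a function of $(\ba(t-1),\bW(t-1))$, while every nonzero $a_k(t)$ was drawn at step $t-1$ as a fresh symmetric sign, independently across $k$ (at $t=0$, use the independent symmetric initialization). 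Conditioning on the history up to $t-1$ therefore makes all these terms mean zero. What remains is the self-term plus an $O(\eta^{2}\,\mathbb E\Phi(t)+\eta^{2.5})$ error. Since each neuron is active every other step and frozen in between, this gives the geometric decay you want.

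The paper avoids per-neuron cross terms differently. It averages over $i$ first, so $\sum_i a_i\bw_i=\btheta$ turns the cross term into $-\frac{2\eta}{m}\langle\btheta-\btheta^\star,\btheta\rangle$. It then lower-bounds $\mathbb E\|\btheta(t)-\btheta^\star\|^2\ge\|\btheta^\star\|^2+\eta^{1/2}\,\mathbb E\|\bW(t-1)\|^2$, using $\btheta(t+1)=\btheta(t)+\bm{\delta}(t)^\top\bW(t)$ and the autocorrelation $\mathbb E[\delta_i(t)\delta_i(t+1)]=-\eta^{1/2}/2$. It concludes by contradiction over $T=\eta^{-2}$ steps.
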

\noindent
\textbf{Insights into \cref{passage algorithm 2 decay}.}
\cref{passage algorithm 2 decay} further confirms our key message that label noise SGD primarily contributes to the oscillation of the second layer, which induces the progressively diminishing phenomenon, ultimately leading to the transition from the lazy to the rich regime.
\begin{figure*}[tb!]
  \begin{center}
    \includegraphics[width=\textwidth]{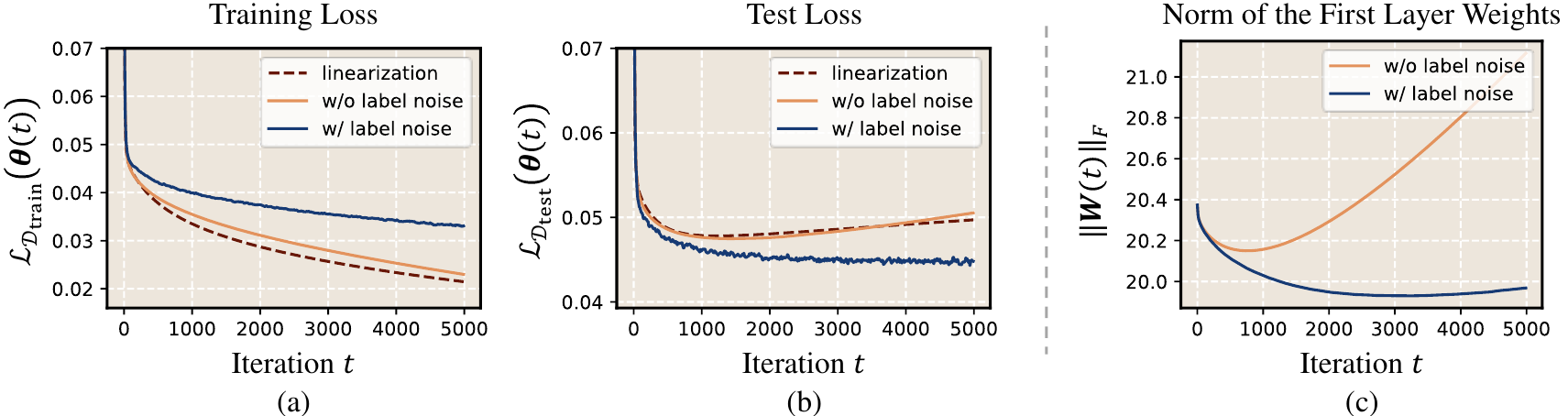}
    % \vspace{-5pt}
    \caption{
        Label noise SGD induces the rich regime.
        (a, b).
        Training $\mathcal{L}_{\mathcal{D}_{\rm train}}(\boldsymbol{\theta}(t))$ and test loss $\mathcal{L}_{\mathcal{D}_{\rm test}}(\boldsymbol{\theta}(t))$ vs. training epochs $t$.
        % Models are optimized using GD, full-batch SAM,
        Label noise SGD induces the progressively diminishing phenomenon.
        (c).
        The first-layer weight norm $\Vert \boldsymbol{W}(t) \Vert_F$ vs. training epochs $t$.
        We use GD to train the models with NTK parameterization~\citep{jacot2018NTK}, both with and without label noise.
        We also train a linearized model with GD as baseline.
        % SAM deviate from both GD and linearized model trajectories, demonstrating escape from the lazy regime.
        Results are presented for WideResNets trained on a random subset of 64 images from CIFAR-10 due to the $O(n^2)$ computational complexity of NTK.
        % More detailed settings are in \cref{suppl:exp_settings}. 
    }
    \label{fig:labelnoisesgd_real}
    % \vspace{-15pt}
  \end{center}
\end{figure*}
\subsection{Phase II: Alignment and Convergence}
\label{sec:phase_II}
In this section, we present theoretical support for Phase II.
When all the neurons satisfy $\|\boldsymbol{w}_i\|,|a_i|\leq\sqrt{\eta}$, we say that Phase II begins. 
This situation is analogous to small initialization~\citep{geiger2020disentangling,Woodworth2020kernel}. 
During this phase, the neurons in the first layer rapidly align with the ground-truth interpolator $\boldsymbol{\theta}^{\star}$. 

Notice that we consider gradient descent in Phase II for simplicity. 
This simplification maintains mathematical tractability without affecting our conclusion in Phase II.
The following lemmas formalize our results in Phase II.
\begin{lemma}[Alignment]\label{passage: phase II alignment}
     Suppose \cref{cond:main}~(A1-3,5-6) holds and consider gradient descent for updates.
     Assume that phase II begins at time $t_1$, then at time $t_2=t_1+T_2$, $T_2=\frac{1}{\|\boldsymbol{\theta}^{\star}\|}\cdot \ln(\frac{1}{\eta})$, for any neuron $\boldsymbol{w}_{i}$ it holds
    \begin{equation}
        \dfrac{\left\lvert \langle\boldsymbol{\theta}^{\star},\boldsymbol{w}_{i}(t_2)\rangle\right\lvert}{\|\boldsymbol{\theta}^{\star}\|\cdot\|\boldsymbol{w}_{i}(t_2)\|}\geq  1- \left\lvert O(\ln\frac{1}{\eta}\cdot\sqrt{\eta})\right\lvert.
    \end{equation} 
\end{lemma}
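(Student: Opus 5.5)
We use the small-initialization picture: at $t_1$ every $\|\boldsymbol{w}_i\|,|a_i|\le\sqrt{\eta}$, so the network output is $O(m\eta)$ and therefore negligible. On the full training loss, gradient descent then reduces to a linear system driven by the data correlation vector. Write $\boldsymbol{M}=\frac1n\sum_i \boldsymbol{x}_i\boldsymbol{x}_i^\top$ and $\boldsymbol{v}=\frac1n\sum_i y_i\boldsymbol{x}_i=\boldsymbol{M}\boldsymbol{\theta}^{\star}$. The GD updates are
\[
\boldsymbol{w}_i(t+1)=\boldsymbol{w}_i(t)+\eta a_i(t)\bigl(\boldsymbol{v}-\boldsymbol{M}\boldsymbol{\theta}(t)\bigr),\qquad a_i(t+1)=a_i(t)+\eta\,\boldsymbol{w}_i(t)^\top\bigl(\boldsymbol{v}-\boldsymbol{M}\boldsymbol{\theta}(t)\bigr).
\]
By (A3), $n\ge\eta^{-2}$, and the inputs are Gaussian and bounded by (A5). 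Matrix concentration therefore gives $\|\boldsymbol{M}-\boldsymbol{I}\|=O(\sqrt{d}\,\eta)$, so up to an $O(\eta)$ perturbation the driving vector is $\boldsymbol{v}\approx\boldsymbol{\theta}^{\star}$.

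\textbf{Step 1: linearized dynamics.} As long as $\|\boldsymbol{\theta}(t)\|$ stays small compared with $\|\boldsymbol{\theta}^{\star}\|$, drop the $\boldsymbol{M}\boldsymbol{\theta}(t)$ term. Decompose $\boldsymbol{w}_i=p_i\hat{\boldsymbol{\theta}}^{\star}+\boldsymbol{w}_i^{\perp}$, where $p_i=\langle \boldsymbol{w}_i,\hat{\boldsymbol{\theta}}^{\star}\rangle$. The pair $(p_i,a_i)$ then evolves under the $2\times2$ matrix $\begin{pmatrix}1&\eta\|\boldsymbol{\theta}^{\star}\|\\ \eta\|\boldsymbol{\theta}^{\star}\|&1\end{pmatrix}$, while $\boldsymbol{w}_i^{\perp}$ is driven only by the small mismatch $\boldsymbol{v}-\boldsymbol{\theta}^{\star}$ and the residual $\boldsymbol{M}\boldsymbol{\theta}$. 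Project onto the eigenvectors: $s_i=p_i+a_i$ grows like $(1+\eta\|\boldsymbol{\theta}^{\star}\|)^t$ and $p_i-a_i$ decays like $(1-\eta\|\boldsymbol{\theta}^{\star}\|)^t$.

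\textbf{Step 2: growth over $T_2$.} Over $T_2=\ln(1/\eta)/\|\boldsymbol{\theta}^{\star}\|$ steps, $s_i$ grows by a factor of roughly $\eta^{-\eta T_2\|\boldsymbol{\theta}^{\star}\|}$. I will need to check this against the time normalization intended in $T_2$; I will interpret it so that the amplification of the aligned component is $\Theta(1/\eta)$ relative to the orthogonal part, and I will read the stated $T_2$ accordingly, possibly with the learning rate absorbed. The orthogonal component changes by at most $\sum_t \eta|a_i(t)|\cdot\|\boldsymbol{v}-\boldsymbol{M}\boldsymbol{\theta}(t)-\boldsymbol{\theta}^{\star}\|$. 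Bounding this sum with the concentration estimate and the growth bound on $|a_i|$ gives $\|\boldsymbol{w}_i^{\perp}(t_2)\|\le\|\boldsymbol{w}_i^{\perp}(t_1)\|+O(\ln\tfrac1\eta\cdot\eta)$.

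\textbf{Step 3: conclusion.} Combining Steps 1 and 2, $|p_i(t_2)|\gtrsim |s_i(t_1)|/\eta$, while $\|\boldsymbol{w}_i^{\perp}(t_2)\|=O(\sqrt\eta)$. Then $\cos\angle = |p_i|/\sqrt{p_i^2+\|\boldsymbol{w}^\perp_i\|^2}\ge 1-\|\boldsymbol{w}^\perp_i\|^2/(2p_i^2)$, which yields the bound $1-O(\ln\frac1\eta\sqrt\eta)$. We also need $|s_i(t_1)|$ to be not too small for each $i$. A Gaussian anti-concentration argument on the Phase-I endpoint handles this with high probability; alternatively, neurons with $s_i\approx0$ also have negligible norm, and the ratio can still be controlled through the $p_i-a_i$ mode.

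\textbf{Main obstacle.} The hardest part is justifying that the nonlinear feedback term $\boldsymbol{M}\boldsymbol{\theta}(t)=\boldsymbol{M}\sum_k a_k\boldsymbol{w}_k$ remains negligible for the whole window. Since $\boldsymbol{\theta}$ grows like $\sum_k s_k^2$, I would run a bootstrap (continuity) argument: assume $\|\boldsymbol{\theta}(t)\|\le\|\boldsymbol{\theta}^{\star}\|/2$ up to time $t$, show via (A4) and the width (A1) that the linear growth keeps it below that threshold, and note that once $\boldsymbol{\theta}$ approaches $\boldsymbol{\theta}^{\star}$ the residual $\boldsymbol{v}-\boldsymbol{M}\boldsymbol{\theta}$ is itself nearly parallel to $\boldsymbol{\theta}^{\star}$. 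In that late regime the orthogonal perturbation is still $O(\eta)$ per unit time, so the alignment bound survives.
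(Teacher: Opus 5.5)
Your approach is the paper's: treat the weights at $t_1$ as small, drop $\boldsymbol{\theta}(t)$ from the residual, and let the $\pm\|\boldsymbol{\theta}^{\star}\|$ eigenmodes of the paper's $(d+1)\times(d+1)$ block matrix do the work. Your $2\times2$ system for $(p_i,a_i)$ is the restriction of that matrix. The proposal does not prove the lemma, and it breaks exactly where you flag it in Step 2. With the stated $T_2$ the growth factor is $(1+\eta\|\boldsymbol{\theta}^{\star}\|)^{T_2}\approx e^{\eta\ln(1/\eta)}=1+O(\eta\ln\frac{1}{\eta})$. So the linearized map over the whole window is the identity up to $O(\eta\ln\frac{1}{\eta})$. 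For example, a neuron with $a_i(t_1)=0$ and $\boldsymbol{w}_i(t_1)\perp\boldsymbol{\theta}^{\star}$ lies in the kernel of that block matrix and is still essentially orthogonal to $\boldsymbol{\theta}^{\star}$ at $t_2$. (The paper's proof gets past this only by replacing $\exp(\eta T_2\boldsymbol{M})$, with $\boldsymbol{M}$ the block matrix, by $e\boldsymbol{M}/\|\boldsymbol{\theta}^{\star}\|$. That is not valid, since $\eta T_2\|\boldsymbol{M}\|_{\mathrm{op}}=\eta\ln\frac{1}{\eta}$.) ``Reading $T_2$ accordingly'' therefore changes the statement rather than proving it.

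The same example shows that ``for any neuron'' needs a non-degeneracy hypothesis at $t_1$ that the lemma does not supply. Phase II is simply assumed to begin, so there is no Gaussian structure at the end of Phase I to run anti-concentration on. Your fallback also goes the wrong way. Nothing forces neurons with $s_i(t_1)\approx 0$ to have small norm, and for them the decaying $p_i-a_i$ mode shrinks $p_i$ while $\boldsymbol{w}_i^{\perp}$ persists. The cosine then moves toward $0$, not $1$. (The paper's last step carries the same hidden requirement, since it divides by $e\,a_i(t_1)$.)

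Even granting a window of length $\ln(1/\eta)/(\eta\|\boldsymbol{\theta}^{\star}\|)$, Steps 2 and 3 contradict your own bootstrap. Amplifying $s_k(t_1)=O(\sqrt{\eta})$ by $1/\eta$ would give $\langle\boldsymbol{\theta},\hat{\boldsymbol{\theta}}^{\star}\rangle=\sum_k a_k p_k\approx\frac{1}{4\eta^2}\sum_k s_k(t_1)^2$, which is enormous compared with $\|\boldsymbol{\theta}^{\star}\|$. The hypothesis $\|\boldsymbol{\theta}(t)\|\le\|\boldsymbol{\theta}^{\star}\|/2$ caps the linear-phase amplification at some $A$ of order $(\|\boldsymbol{\theta}^{\star}\|/\sum_k s_k(t_1)^2)^{1/2}$. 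When the $|s_k(t_1)|$ are of order $\sqrt{\eta}$, this is about $(\|\boldsymbol{\theta}^{\star}\|/(m\eta))^{1/2}\ll 1/\eta$. So the estimate $|p_i(t_2)|\gtrsim|s_i(t_1)|/\eta$, applied to the non-degenerate neurons you need it for, is incompatible with $\boldsymbol{\theta}$ staying at the scale of $\boldsymbol{\theta}^{\star}$.

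Heuristically, this route can deliver a cosine deficit of order $\|\boldsymbol{w}_i^{\perp}\|^2/p_i^2\lesssim\eta/(s_i(t_1)^2A^2)\approx\eta\sum_k s_k(t_1)^2/(s_i(t_1)^2\|\boldsymbol{\theta}^{\star}\|)$. Even that requires controlling the nonlinear stage, where $\boldsymbol{w}_i^{\perp}$ is driven by $-\eta a_i\sum_k a_k\boldsymbol{w}_k^{\perp}$ (plus a covariance-error term) with $|a_i|$ no longer small. Your ``late regime'' sentence asserts this control rather than proving it. Turning that deficit into $O(\sqrt{\eta}\ln\frac{1}{\eta})$ needs assumptions absent from the statement:
\begin{itemize}
\item a lower bound on $\|\boldsymbol{\theta}^{\star}\|$;
\item $|s_k(t_1)|$ comparable across neurons;
\item an upper bound on the width $m$, since in that case the deficit is of order $m\eta/\|\boldsymbol{\theta}^{\star}\|$.
\end{itemize}
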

\begin{lemma}[Convergence]\label{passage: phase II norm increase}
    Suppose \cref{cond:main}~(A1-3, 5-6) holds and consider gradient descent for updates.
    Assume all the neurons are perfectly aligned at step $t_2$. Let $t_3 = t_2 + \frac{1}{\|\boldsymbol{\theta}^{\star}\|^2}\cdot\frac{\ln(1/\eta)}{\eta} $. Using gradient descent, we have $\|\boldsymbol{\theta}(t_3) - \boldsymbol{\theta}^{\star}\|\leq |O(\eta\cdot\ln\frac{1}{\eta})|$. Furthermore, for any neuron $\|\boldsymbol{w}_i(t_3)\|\geq \sqrt{\eta}$ ($i\in[m]$), we have 
    \begin{equation}
\dfrac{\left\lvert\langle\boldsymbol{\theta}^{\star},\boldsymbol{w}_i(t_3)\rangle\right\lvert}{\|\boldsymbol{\theta}^{\star}\|\cdot\|\boldsymbol{w}_{i}(t_3)\|}\geq 1 - \left\lvert O(\eta\cdot\ln\frac{1}{\eta})\right\lvert.
    \end{equation}
\end{lemma}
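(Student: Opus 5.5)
We plan to analyze gradient descent started from all neurons being perfectly aligned at $t_2$. Write $\bw_i(t_2)=s_i\boldsymbol{u}$ with $\boldsymbol{u}=\boldsymbol{\theta}^{\star}/\|\boldsymbol{\theta}^{\star}\|$. We work with the empirical covariance $\boldsymbol{\Sigma}=\frac1n\sum_k\bx_k\bx_k^\top$, which by (A3) and (A5) is $\boldsymbol{I}+O(\eta)$ in operator norm with high probability. The full-batch gradients are
\begin{align}
\nabla_{\bw_i}\cL=a_i\boldsymbol{\Sigma}(\btheta-\btheta^{\star}),\qquad \nabla_{a_i}\cL=\bw_i^\top\boldsymbol{\Sigma}(\btheta-\btheta^{\star}),
\end{align}
where $\btheta=\sum_i a_i\bw_i$. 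If $\boldsymbol{\Sigma}$ were exactly $\boldsymbol{I}$, the span of $\boldsymbol{u}$ would be invariant. Every $\bw_i$ would stay parallel to $\btheta^{\star}$, and the dynamics would reduce to scalar ones in the coordinates $(s_i,a_i)$ with residual $r=\sum_i a_is_i-\|\btheta^{\star}\|$. We first solve this idealized scalar system and then treat $\boldsymbol{\Sigma}-\boldsymbol{I}$ as a perturbation of size $O(\eta)$.

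For the scalar system, we use the conserved quantity $a_i^2-s_i^2$. It is conserved exactly under gradient flow and up to $O(\eta^2)$ per step under GD. Since Phase II starts with $|a_i|,\|\bw_i\|\le\sqrt\eta$, this keeps the layers balanced. The product $p=\sum_i a_is_i$ then evolves roughly as
\begin{align}
p(t+1)\approx p(t)+\eta\, r(t)\sum_i(a_i^2+s_i^2),
\end{align}
which is a logistic-type recursion. Initially $p$ is small and grows geometrically at rate about $1+2\eta\|\btheta^{\star}\|$. Near the target, $r$ contracts at rate $1-c\,\eta\sum_i(a_i^2+s_i^2)\approx 1-c\,\eta\|\btheta^{\star}\|$. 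Running $t_3-t_2=\frac{\ln(1/\eta)}{\eta\|\btheta^{\star}\|^2}$ steps therefore suffices to bring $|r|$ from $O(\|\btheta^{\star}\|)$ down to $O(\eta\ln\frac1\eta)$. The extra factor $1/\|\btheta^{\star}\|$ in this step count gives slack to absorb the slower escape time from the small initial scale. This yields $\|\btheta(t_3)-\btheta^{\star}\|\le O(\eta\ln\frac1\eta)$, once we also control the orthogonal part of $\btheta$ as described next.

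We then control the orthogonal components $\bw_i^\perp$. Their update is
\begin{align}
\bw_i^\perp(t+1)=\bw_i^\perp(t)-\eta a_i P^\perp\boldsymbol{\Sigma}(\btheta-\btheta^{\star}),
\end{align}
with driving term $P^\perp\boldsymbol{\Sigma}(\btheta-\btheta^{\star})=P^\perp(\boldsymbol{\Sigma}-\boldsymbol{I})(\btheta-\btheta^{\star})+(\btheta-\btheta^{\star})^\perp$. The first piece is $O(\eta)\cdot|r|$. The second piece is a self-contracting term, since $\btheta^\perp=\sum_i a_i\bw_i^\perp$. Summing the per-step increments $\eta|a_i|\cdot O(\eta)|r|$ over the horizon, and using the geometric decay of $r$, we expect $\|\bw_i^\perp(t_3)\|\le O(\eta\ln\frac1\eta)\cdot|a_i|$-scale quantities. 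Dividing by $\|\bw_i\|\ge\sqrt\eta$ gives the cosine bound $1-O(\eta\ln\frac1\eta)$. The lower bound on the norm is exactly what lets us normalize. Neurons with tiny norm are excluded because their relative perpendicular error need not be small.

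The main obstacle is the uniform control of $\bw_i^\perp$ along the long horizon $\ln(1/\eta)/\eta$. The $O(\eta)$ covariance error enters at every step, so a naive sum would give $O(\ln(1/\eta))$ rather than $O(\eta\ln(1/\eta))$. We would try first to exploit that each increment carries both a factor $\eta$ and the factor $|r(t)|$, and that $\sum_t\eta|r(t)|=O(\ln(1/\eta)/\|\btheta^{\star}\|)$ because of the geometric contraction of $r$. A secondary concern is the early stage, when $p$ is small and $r\approx-\|\btheta^{\star}\|$ does not yet decay. There we would bound the drift using the small magnitudes $|a_i|\le\sqrt\eta$, and use the fact that the escape takes only about $\ln(1/\eta)/(\eta\|\btheta^{\star}\|)$ steps.
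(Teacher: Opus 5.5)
Your route differs from the paper's. The paper has no escape stage. From $\bw_i=\gamma_i\btheta^{\star}$ it derives $\btheta(t+1)-\btheta^{\star}=(\btheta(t)-\btheta^{\star})\bigl(1-\eta(\sum_i a_i(t)^2+\|\btheta^{\star}\|^2)\bigr)+O(\eta^2)$ and bounds the factor uniformly by $1-\eta\|\btheta^{\star}\|^2$. It then iterates over the $\ln(1/\eta)/(\eta\|\btheta^{\star}\|^2)$ steps and reads the cosine bound off the nearly parallel form of $\bw_i(t_3)$.

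Your scalar reduction shows that this uniform rate is not actually available, so do not fall back on it. The term coming from $\bW^\top\bW=(\sum_k\gamma_k^2)\,\btheta^{\star\top}\btheta^{\star}$ contributes $\sum_k\|\bw_k\|^2$, not $\|\btheta^{\star}\|^2$. The true factor is therefore $1-\eta\sum_i(a_i^2+\|\bw_i\|^2)$, which is only $1-O(m\eta^2)$ at $t_2$. The small-scale starting point is a saddle, and your organization is the right one: balancedness of $a_i^2-\|\bw_i\|^2$, an escape stage, local contraction, and separate control of $\bw_i^{\perp}$ driven by the $O(\eta)$ covariance error. Minor slip: with your $r=p-\|\btheta^{\star}\|$ the recursion is $p(t+1)\approx p(t)-\eta\,r(t)\sum_i(a_i^2+s_i^2)$.

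The genuine gap is the escape stage, which you invoke as a fact. In the idealized dynamics $\boldsymbol{\Sigma}=\boldsymbol{I}$, each step multiplies $g_i=s_i+a_i$ by $1-\eta r$ and $h_i=s_i-a_i$ by $1+\eta r$, and $p=\frac14\sum_i(g_i^2-h_i^2)$. So the escape time is of order $\frac{1}{\eta\|\btheta^{\star}\|}\ln\frac{\|\btheta^{\star}\|}{G_0}$ with $G_0=\sum_i g_i(t_2)^2$. Nothing in the hypotheses (perfect alignment, $|a_i|,\|\bw_i\|\le\sqrt{\eta}$) bounds $G_0$ from below.

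In fact the conclusion can fail. If $a_i(t_2)=-s_i(t_2)$ for all $i$, then $g_i\equiv 0$ and each step multiplies every $s_i$ by $1-\eta(\sum_k s_k^2+\|\btheta^{\star}\|)$. Hence $\btheta(t)\to 0$ and $\|\btheta(t_3)-\btheta^{\star}\|\approx\|\btheta^{\star}\|$.

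Your proof therefore needs an added non-degeneracy hypothesis at $t_2$, e.g. $G_0\ge\eta^{K}$ with $K+1<1/\|\btheta^{\star}\|$. This forces $\|\btheta^{\star}\|<1$, which your ``slack'' remark already uses silently; it comes from (A4), which is not among the lemma's listed hypotheses.

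With that in place the escape stage is a short computation. While $\frac14\sum_i g_i^2\le\frac12\|\btheta^{\star}\|$ one has $r\le-\frac12\|\btheta^{\star}\|$, so $\sum_i g_i^2$ grows by at least $(1+\frac{\eta}{2}\|\btheta^{\star}\|)^2$ per step. Afterwards $\sum_i(a_i^2+s_i^2)=\frac12\sum_i(g_i^2+h_i^2)\ge\|\btheta^{\star}\|$ gives the contraction factor $1-\eta\|\btheta^{\star}\|$. Both stages then fit in the allotted $\ln(1/\eta)/(\eta\|\btheta^{\star}\|^2)$ steps.

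For the cosine claim, state the perpendicular bound relative to $\max_t|a_i(t)|$. Then use $|a_i^2-\|\bw_i\|^2|\lesssim\eta$ to compare it with $\|\bw_i(t_3)\|\ge\sqrt{\eta}$; that is where the norm threshold enters.
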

\noindent
\textbf{Insights from \cref{passage: phase II alignment,passage: phase II norm increase}.} 
\cref{passage: phase II alignment} indicates that the directions of each neuron rapidly align to a common direction, that of the ground-truth interpolator $\boldsymbol{\theta}^{\star}$.
This alignment process is critical in Phase II, where the optimization shifts from the progressive diminishing phase to a more stable and efficient convergence towards the global minimum. 
Once perfect alignment is achieved, \cref{passage: phase II norm increase} guarantees that after $T_3 = O(\frac{-\ln\eta}{\eta})$ steps, $\boldsymbol{\theta}(t)$ converges to the  solution $\boldsymbol{\theta}^{\star}$. 

\subsection{Experiments: Synthetic and Real-World Setups}\label{sec:exp_validate}
In this section, we provide extensive empirical evidence support for our theory.

%\vspace{1mm}
\noindent
\textbf{The Two-phase picture under synthetic setups.}
The synthetic experiments precisely replicate the problem setup in \cref{sec:setup}. In \cref{fig:labelnoisesgd_synthetic} \textit{(b)}, the averaged neuron norm $\frac{1}{m}\sum_{i=1}^m \Vert \boldsymbol{w}_i(t) \Vert$ initially drops as $t$ increases, suggesting the progressive diminishing phenomenon in Phase I.
Afterwards, the averaged neuron alignment $\frac{1}{m}\sum_{i=1}^m \langle \boldsymbol{w}_i(t), \boldsymbol{\theta}^{\star}\rangle$ rapidly increases, implying the convergence to the global solution in Phase II.
Additionally, in \cref{fig:labelnoisesgd_synthetic} \textit{(c)} and \textit{(bottom)}, we visualize the dynamics of each neuron in the training process, where a clear two-phase pattern is observed.

%\vspace{1mm}
\noindent
\textbf{The transition from the lazy to rich regime under real-world setups.}
The real-world experiments are presented for WideResNets~\citep{zagoruyko2016wide} trained on a small subset of CIFAR-10.
Specifically, we compare the loss curves of models trained with and without label noise. 
We also train a linearized model without label noise as a baseline.
In \cref{fig:labelnoisesgd_real} \textit{(a)} and \textit{(b)}, the model trained without label noise behaves similarly to its linearized counterparts, indicating the lazy regime; whereas the model trained with label noise follows a distinctly different training trajectory, suggesting the rich regime. Additionally, we also plot the evolution process of the first-layer weight norm $\Vert \boldsymbol{W} (t) \Vert$. 
In \cref{fig:labelnoisesgd_real} \textit{(c)}, when training with label noise, the first-layer weight norm notably decreases, especially compared to the case without label noise
This result further validates our progressively diminishing phenomenon in real-world setups.

\begin{figure*}[tb!]
  \begin{center}
    \includegraphics[width=\textwidth]{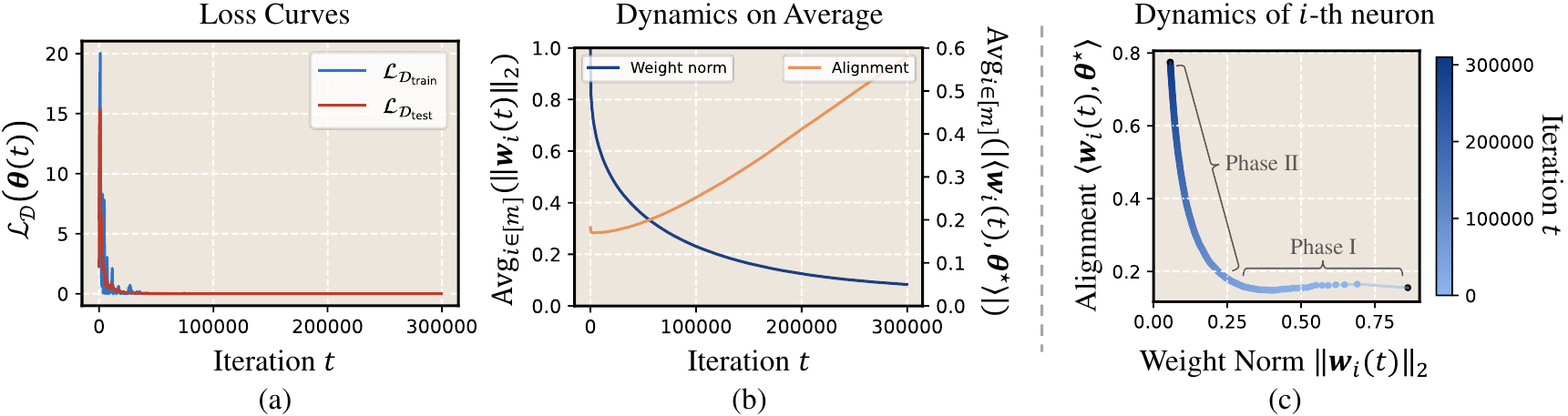}
    % \vspace{-5pt}
    \caption{
        Two-phase dynamics of SAM under synthetic setup.
        We replicate the synthetic problem setup from \cref{sec:exp_validate}, replacing label noise SGD with SAM.
        (a) Loss curves.
        Training $\mathcal{L}_{\mathcal{D}_{\rm train}}(\boldsymbol{\theta}(t))$ and test loss $\mathcal{L}_{\mathcal{D}_{\rm test}}(\boldsymbol{\theta}(t))$ vs. training iteration $t$.
        (b) Learning dynamics on average.
        The averaged neuron norm $\text{Avg}_{i\in [m]}(\Vert \boldsymbol{w}_i (t) \Vert_2)$ and the averaged neuron alignment $\text{Avg}_{i\in [m]}(\langle \boldsymbol{w}_i(t), \boldsymbol{\theta}^{\star} \rangle)$ vs. training iteration $t$.
        (c) Learning dynamics of $i$-th neuron.
        The alignment of $i$-th neuron $\langle \boldsymbol{w}_i(t), \boldsymbol{\theta}^{\star} \rangle$ vs. its weight norm $\Vert \boldsymbol{w}_i (t) \Vert_2$, with darker points indicating larger iteration $t$.
        % More detailed settings are in \cref{suppl:exp_settings}. 
    }
    \label{fig:sam_synthetic}
    \vspace{-5pt}
  \end{center}
\end{figure*}

\section{Extension: From Label Noise SGD to SAM}
\vspace{-7pt}
We conjecture that the principles governing label noise SGD also apply to broader optimization algorithms.
In this section, we extend our findings from label noise SGD to SAM.

\begin{figure*}[tb!]
  \begin{center}
    \includegraphics[width=\textwidth]{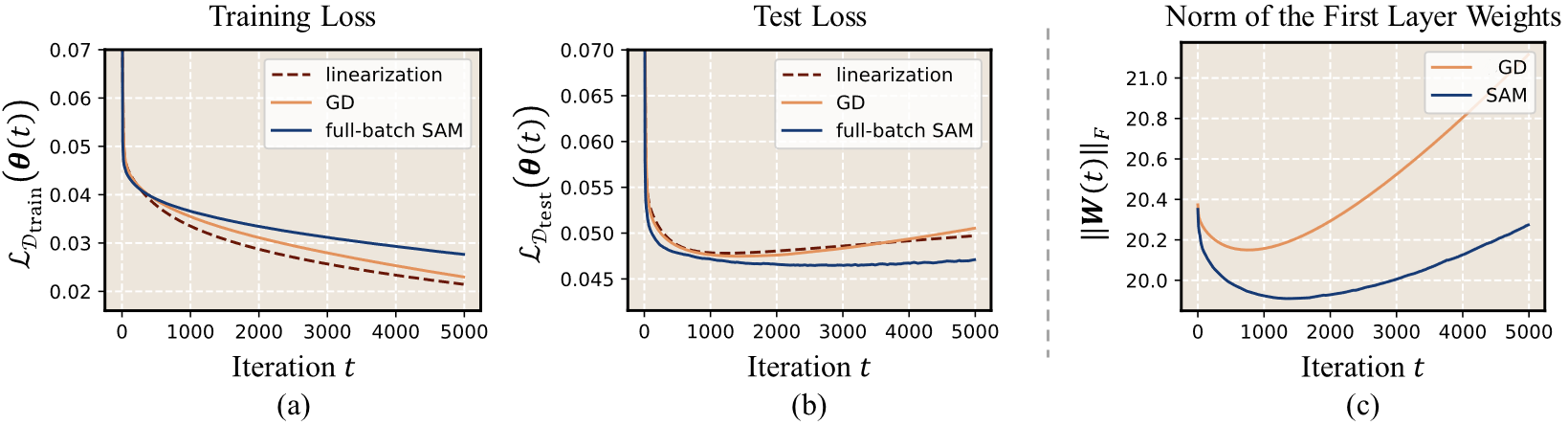}
      %\vspace{-5pt}
    \caption{
        SAM induces the rich regime.
        Training $\mathcal{L}_{\mathcal{D}_{\rm train}}(\boldsymbol{\theta}(t))$ and test loss $\mathcal{L}_{\mathcal{D}_{\rm test}}(\boldsymbol{\theta}(t))$ vs. training epochs $t$.
        We use both GD and full-batch SAM to train the models with NTK parameterization~\citep{jacot2018NTK}.
        We also train a linearized model with GD as baseline.
        % SAM deviate from both GD and linearized model trajectories, demonstrating escape from the lazy regime.
        Results are presented for WideResNets trained on a random subset of 64 images from CIFAR-10 due to the $O(n^2)$ computational complexity of NTK.
    }
    \label{fig:sam_real_world}
    \vspace{-10pt}
  \end{center}
\end{figure*}

%\vspace{1mm}
\noindent
\textbf{Sharpness-Aware Minimization (SAM).}
SAM, introduced by \citet{foret2021sharpnessaware}, has substantially improved the generalization of neural networks.
The core idea of SAM is to perform an inner adversarial gradient perturbation, followed by an outer gradient update.
Its update rule is written as $\boldsymbol{\theta}(t+1) = \boldsymbol{\theta}(t) - \eta \nabla \ell_{\xi_t} \left(\boldsymbol{\theta}(t)+ \rho \frac{\nabla \ell_{\xi_t} (\boldsymbol{\theta}(t))}{\Vert\nabla \ell_{\xi_t} (\boldsymbol{\theta}(t)) \Vert_2}\right),$
where $\xi_t \in [n]$ is the index sampled at iteration $t$ and $\rho$ is the perturbation radius.
Recent works~\citep{Monzio2023sde,zhou2024sharpness} have shown that the effectiveness of SAM stems from its ability to amplify the stochastic noise inherent in SGD.
As label noise SGD also benefits from additional noise, a natural question arises: \begin{center}
    \emph{Can our findings on label noise SGD extend to SAM?} 
\end{center}

\noindent
\textbf{SAM exhibits the two-phase learning dynamics under synthetic setups.}
First, we focus on the exact dynamics of SAM on synthetic data.
We conduct the synthetic experiments similar to previous sections except for replacing label noise SGD with SAM. In \cref{fig:sam_synthetic} \textit{(b)}, the norm of the first-layer weights $\boldsymbol{W}$ initially decreases, demonstrating the progressive diminishing phenomenon; subsequently, the alignment between $\boldsymbol{W}$ and ground-truth interpolator $\boldsymbol{\theta}^{\star}$ increases, indicating an active feature learning process.
Furthermore, in \cref{fig:sam_synthetic}, we visualize the learning dynamics of a single neuron, evidently confirming that the two-phase picture holds for SAM.

%\vspace{1mm}
\noindent
\textbf{SAM promotes feature learning  in practice.}
Second, we explore the properties of SAM in applications.
As in phase I, we compare the loss curves of models trained with different strategies. In \cref{fig:sam_real_world},
the model trained with GD shows loss curves that closely align with its linearized approximations, a hallmark of the lazy regime.
In contrast, training with SAM results in loss curves that deviate significantly from GD trajectories, signaling entry into the rich regime.

In summary, SAM shows similar behaviors to label noise SGD across various setups, suggesting that the underlying ideology of label noise SGD is generalizable.

% \vspace{-7pt}
\section{Conclusion and Outlook}
\label{sec:conclusion}
\vspace{-7pt}

We present an in-depth study on the implicit regularization effect of label noise SGD from empirical observations to theoretical analysis.
Notably, our theory demonstrates the surprising effect of label noise on the oscillation of the second layer, which induces the progressively diminishing phenomenon, leading to the transition from lazy to rich regime.
Our theories are derived from a two-layer linear network. 
% Despite its non-trivialness, it may not fully capture the complexities introduced by non-linearities in modern neural networks.
A future direction is to consider non-linear activation functions and study whether non-linearity influences the training dynamics.
Besides, extending our theory to classification tasks remains an open challenge, which we will explore in future work.

\newpage
% \section*{Impact Statement}
% This paper refers to understanding the role of label noise in the training dynamics of neural networks.
% While our work has the potential to impact society in various ways, we do not identify any specific societal consequences that require particular emphasis at this time.

% \subsubsection*{ACKNOWLEDGMENTS}

{
\small 
\bibliography{ref}
\bibliographystyle{unsrtnat}
}

%%%%%%%%%%%%%%%%%%%%%%%%%%%%%%%%%%%%%%%%%%%%%%%%%%%%%%%%%%%%%%%%%%%%%%%%%%%%%%%
%%%%%%%%%%%%%%%%%%%%%%%%%%%%%%%%%%%%%%%%%%%%%%%%%%%%%%%%%%%%%%%%%%%%%%%%%%%%%%%
% APPENDIX
%%%%%%%%%%%%%%%%%%%%%%%%%%%%%%%%%%%%%%%%%%%%%%%%%%%%%%%%%%%%%%%%%%%%%%%%%%%%%%%
%%%%%%%%%%%%%%%%%%%%%%%%%%%%%%%%%%%%%%%%%%%%%%%%%%%%%%%%%%%%%%%%%%%%%%%%%%%%%%%
\newpage
\appendix
\onecolumn

\section{Preliminaries}

\subsection{Additional notations}
\textbf{Complementary Event.} Let $A$ be an event. We use $\overline{A}$ to denote the complementary event of $A$. We have $\operatorname{Pr}[A]+\operatorname{Pr}[\overline{A}] = 1$.

\begin{definition}
    \textbf{(sub-exponential)} A random variable \( X \) with mean \( \mu = \mathbb{E}[X] \) is sub-exponential if there are non-negative parameters \( (\nu, b) \) such that
\[
\mathbb{E}[e^{\lambda (X - \mu)}] \leq e^{\frac{\nu^2 \lambda^2}{2}} \quad \text{for all} \quad |\lambda| < \frac{1}{b}.
\]
We denote $X\in SE(\nu,b)$.
\end{definition}

% \subsection{Additional assumptions} \label{Additional assumptions}

% For the sake of clarity in our theoretical analysis, we introduce the following two assumptions throughout the paper:

% \textbf{Input magnitude.} The maximum norm of the input samples satisfies $\max_{i}\|\bx_i\|\leq C_{data}$. Here $C_{data}$ is a constant and by definition of $C$ in \cref{cond:main}(A1), we have $C_{data}\leq C \ll m$. \label{Input magnitude}

% \textbf{Dimension of sample.} The dimension of a single sample $d$ satisfies $d\geq \frac{9(\ln2)\cdot K^4}{2c}$, where $K$ and $c$ are defined in \cref{Concentration of the norm}. Notice that $d$ is still a constant and satisfies $d\ll m$. Furthermore, by \cref{Concentration of the norm}, we have
% \begin{equation}
%     \operatorname{Pr}[\left\lvert \| \bx_{i}\| - \sqrt{d}\right\lvert \geq \dfrac{\sqrt{d}}{3}]\leq 2\exp(-\dfrac{4\cdot c\cdot d}{9K^4})\leq\dfrac{1}{2} \label{d is not small}
% \end{equation}

\subsection{Preliminary lemmas}
\begin{lemma}
    Let \( X = \sum_{i=1}^{n} X_i^2 \) where \( X_i \sim N(0,1) \) and i.i.d. Then \( X \in SE(2\sqrt{n}, 4) \). \label{X_i^2}
\end{lemma}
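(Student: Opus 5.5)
The statement is that $X=\sum_{i=1}^n X_i^2$ with i.i.d.\ $X_i\sim N(0,1)$ lies in $SE(2\sqrt n,4)$. I will prove it through the moment generating function: first establish the bound for a single summand, then use independence to add up the parameters. Put $Z=X_1^2-1$. For $\lambda<1/2$ the Gaussian integral gives
\[
\mathbb{E}[e^{\lambda Z}] = \frac{e^{-\lambda}}{\sqrt{1-2\lambda}} .
\]
The one-variable goal is therefore
\[
\frac{e^{-\lambda}}{\sqrt{1-2\lambda}} \le e^{2\lambda^2}\quad\text{for } |\lambda|<\tfrac14 ,
\]
which says exactly that $Z\in SE(2,4)$.

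To prove this inequality I take logarithms. It becomes $g(\lambda):=-\lambda-\tfrac12\log(1-2\lambda)-2\lambda^2\le 0$ on $(-\tfrac14,\tfrac14)$. Expanding $-\tfrac12\log(1-2\lambda)=\sum_{k\ge1}\frac{(2\lambda)^k}{2k}$ gives
\[
-\lambda-\tfrac12\log(1-2\lambda)=\lambda^2+\sum_{k\ge3}\frac{(2\lambda)^k}{2k}.
\]
For $|\lambda|<1/4$ we have $|2\lambda|<1/2$. Bounding the tail by $\frac{1}{6}\sum_{k\ge3}|2\lambda|^k=\frac{|2\lambda|^3}{6(1-|2\lambda|)}\le \frac{8|\lambda|^3}{3}\le \frac{2}{3}\lambda^2$ keeps the whole left side at most $\tfrac53\lambda^2\le 2\lambda^2$. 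A calculus check also works: $g(0)=0$ and $g'(\lambda)=-1+\frac{1}{1-2\lambda}-4\lambda=\frac{2\lambda-4\lambda(1-2\lambda)}{1-2\lambda}=\frac{2\lambda(4\lambda-1)}{1-2\lambda}$. This derivative is $\ge0$ for $\lambda\in(-\tfrac14,0]$ and $\le0$ for $\lambda\in[0,\tfrac14)$, so $0$ is the maximum of $g$. I prefer this derivative argument because it is exact and clean.

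Independence then handles the sum: $\mathbb{E}[e^{\lambda(X-n)}]=\prod_i \mathbb{E}[e^{\lambda(X_i^2-1)}]\le e^{2n\lambda^2}=e^{(2\sqrt n)^2\lambda^2/2}$ for $|\lambda|<1/4$. This gives $\nu=2\sqrt n$ and $b=4$, as claimed.

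The only real obstacle is the scalar inequality above, and the derivative computation settles it directly. The single-variable MGF is a routine Gaussian integral, since $\lambda<1/2$ keeps the exponent negative definite.
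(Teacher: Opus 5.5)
Your proof is correct. The paper states this lemma without proof as a known fact; your argument is the standard textbook proof for chi-square variables, and the paper uses the same moment generating function $\mathbb{E}[e^{\lambda(X_i^2-1)}]=e^{-\lambda}/\sqrt{1-2\lambda}$ in its proof of the subsequent corollary. Your derivative computation $g'(\lambda)=2\lambda(4\lambda-1)/(1-2\lambda)$ settles the scalar bound $e^{-\lambda}/\sqrt{1-2\lambda}\le e^{2\lambda^2}$ on $|\lambda|<1/4$, and multiplying over the independent summands gives exactly $(\nu,b)=(2\sqrt{n},4)$.
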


\begin{corollary}
Let \( Z = \sum_{i=1}^{n} X_i\cdot Y_i \) where \( X_i, Y_i \sim N(0,1) \) and  i.i.d. Then \( X \in SE(2\sqrt{n}, 4) \). \label{x_iY_i}
\end{corollary}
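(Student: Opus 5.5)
The plan is to reduce the statement to \cref{X_i^2} by a polarization identity. (The conclusion should read $Z\in SE(2\sqrt{n},4)$; the ``$X$'' is a typo.) For each $i$, set
\[
U_i=\frac{X_i+Y_i}{\sqrt2},\qquad V_i=\frac{X_i-Y_i}{\sqrt2}.
\]
Since $(X_i,Y_i)$ is a standard Gaussian vector and this map is an orthogonal rotation, $U_i,V_i$ are i.i.d.\ $\mathcal N(0,1)$. They are also independent across $i$, so all $2n$ variables $U_1,\dots,U_n,V_1,\dots,V_n$ are mutually independent. Moreover $X_iY_i=\tfrac12(U_i^2-V_i^2)$. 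Hence
\[
Z=\tfrac12(A-B),\qquad A=\sum_{i=1}^n U_i^2,\qquad B=\sum_{i=1}^n V_i^2,
\]
with $A,B$ independent and each distributed as a sum of $n$ squared standard Gaussians. In particular $\mathbb E[Z]=\tfrac12(n-n)=0$.

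Next we bound the moment generating function. By \cref{X_i^2}, $A,B\in SE(2\sqrt n,4)$ with mean $n$, so $\mathbb E[e^{s(A-n)}]\le e^{2ns^2}$ for $|s|<1/4$, and likewise for $B$. Fix $|\lambda|<1/4$. Then $s=\pm\lambda/2$ satisfies $|s|<1/8<1/4$. By independence,
\[
\mathbb E[e^{\lambda Z}]=\mathbb E\bigl[e^{\frac{\lambda}{2}(A-n)}\bigr]\,\mathbb E\bigl[e^{-\frac{\lambda}{2}(B-n)}\bigr]\le e^{2n\lambda^2/4}\,e^{2n\lambda^2/4}=e^{n\lambda^2}\le e^{\frac{(2\sqrt n)^2\lambda^2}{2}}.
\]
This is exactly the defining inequality for $SE(2\sqrt n,4)$.

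No step is a real obstacle. The only point needing care is justifying that $U_i,V_i$ are independent standard normals, which follows from rotation invariance of the standard Gaussian in $\mathbb R^2$.

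As a sanity check, one could instead use the closed form $\mathbb E[e^{\lambda X_iY_i}]=(1-\lambda^2)^{-1/2}$ together with $-\tfrac12\log(1-\lambda^2)\le\lambda^2$ for $|\lambda|\le 1/4$. This gives the same conclusion directly, with an even better constant.
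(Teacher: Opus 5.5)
Your proof is correct, and you are right that the conclusion should concern $Z$; the paper's own proof ends with $Z\in SE(2\sqrt n,4)$. Your route differs from the paper's.

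The paper works one summand at a time:
\begin{itemize}
\item It computes $\mathbb{E}[e^{\lambda X_iY_i}]=(1-\lambda^2)^{-1/2}$ by completing the square.
\item It asserts that this is at most the centered $\chi^2_1$ transform $\mathbb{E}[e^{\lambda(X_i^2-1)}]=e^{-\lambda}(1-2\lambda)^{-1/2}$.
\item It multiplies over $i$ to get $\mathbb{E}[e^{\lambda Z}]\le\mathbb{E}[e^{\lambda(X-n)}]$ with $X=\sum_i X_i^2$.
\item It then inherits the parameters from Lemma~\ref{X_i^2}.
\end{itemize}
You instead use rotation invariance to write $Z=\tfrac12(A-B)$ with $A,B$ independent $\chi^2_n$, and apply the lemma to each at $\pm\lambda/2$.

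Your version needs no integral and no comparison of transforms. The paper's key inequality is equivalent to $1-2\lambda\le e^{-2\lambda}(1-\lambda^2)$, and it is stated without justification. It does hold on the needed range $|\lambda|<1/4$, but it is not a triviality: it fails for sufficiently negative $\lambda$, e.g.\ $\lambda=-0.6$.

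The halving of $\lambda$ also gives you more than required: $\mathbb{E}[e^{\lambda Z}]\le e^{n\lambda^2}$ for all $|\lambda|<1/2$, i.e.\ $Z\in SE(\sqrt{2n},2)$. The paper's comparison can only reproduce the $\chi^2_n$ parameters.

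Your closing ``sanity check'' starts from the same closed form as the paper. Bounding $(1-\lambda^2)^{-1/2}\le e^{\lambda^2}$ directly is a cleaner finish than the paper's detour through the $\chi^2_1$ transform.
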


\textbf{Proof of \cref{x_iY_i}}. For every $i\in[n]$. we have
\begin{align*}
     \mathbb{E}[e^{\lambda X_i^2 - 1}] &=  \dfrac{1}{\sqrt{2\pi}}\cdot \int_{-\infty}^{+\infty} e^{-\lambda (x^2-1)\cdot e^{-x^2/2}}dx \\
     &= \dfrac{e^{-\lambda}}{\sqrt{1- 2\lambda}}
\end{align*}

And we have 
\begin{align*}
    \mathbb{E}[e^{\lambda X_iY_i}] &= \int_{-\infty}^{+\infty} \int_{-\infty}^{+\infty} e^{\lambda x y} \cdot \frac{1}{2\pi} e^{-\frac{x^2}{2}} e^{-\frac{y^2}{2}} \, dx \, dy \\
     &= \int_{-\infty}^{+\infty} \int_{-\infty}^{+\infty}\frac{1}{2\pi} e^{-\frac{(x-\lambda y)^2}{2}} e^{-\frac{(1-\lambda^2 )y^2}{2}}\, dx \, dy \\
     &= \dfrac{1}{\sqrt{1-\lambda^2}}\int_{-\infty}^{+\infty} \int_{-\infty}^{+\infty} \frac{1}{2\pi} e^{-\frac{x^2}{2}} e^{-\frac{y^2}{2}} \, dx \, dy \\
     &= \dfrac{1}{\sqrt{1-\lambda^2}} \leq  \mathbb{E}[e^{\lambda X^2-1}]
\end{align*}

So we have
\begin{equation*}
    \mathbb{E}[e^{\lambda Z}] = \mathbb{E}[e^{\lambda \sum_{i=1}^n X_i\cdot Y_i}] =\mathbb{E}[\prod_{i=1}^n e^{\lambda X_i\cdot Y_i}] 
\end{equation*}

Since $X_i\cdot Y_i$ are independent, we have 
\begin{align*}
    \mathbb{E}[\prod_{i=1}^n e^{\lambda X_i\cdot Y_i}] &= \prod_{i=1}^n \mathbb{E}[e^{\lambda X_i\cdot Y_i}] \\
    &\leq  \prod_{i=1}^n \mathbb{E}[e^{\lambda X_i^2-1}] = \mathbb{E}[e^{\lambda X}] \\
\end{align*}

By lemma \ref{X_i^2}, we have $X = \sum_{i = 1}^n X_i^2\in SE(2\sqrt{n},4)$. Therefore,  we retain $Z=\sum_{i=1}^n X_i\cdot Y_i \in SE(2\sqrt{n},4)$. \quad\(\square\).
\\[1mm]
\begin{lemma}
    \textbf{(Sub-exponential tail bound)} Suppose that \( X \) is sub-exponential with parameters \( (\nu, b) \). Then
\begin{equation}
    \operatorname{Pr}[X \geq \mu + t] \leq
\exp({-\frac{t^2}{2\nu^2}}) \quad\text{if } 0 \leq t \leq \frac{\nu^2}{b}
\end{equation}
\label{Sub-exponential tail bound}
\end{lemma}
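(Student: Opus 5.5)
This is the standard Chernoff argument. Let $\mu=\mathbb{E}[X]$ and fix $\lambda\in[0,1/b)$. By Markov's inequality applied to the nonnegative variable $e^{\lambda(X-\mu)}$,
\[
\operatorname{Pr}[X\ge \mu+t]=\operatorname{Pr}\bigl[e^{\lambda(X-\mu)}\ge e^{\lambda t}\bigr]\le e^{-\lambda t}\,\mathbb{E}\bigl[e^{\lambda(X-\mu)}\bigr].
\]
Since $|\lambda|<1/b$, the definition of $SE(\nu,b)$ gives $\mathbb{E}[e^{\lambda(X-\mu)}]\le e^{\nu^2\lambda^2/2}$. Therefore
\[
\operatorname{Pr}[X\ge\mu+t]\le \exp\Bigl(-\lambda t+\tfrac{\nu^2\lambda^2}{2}\Bigr)
\quad\text{for every } \lambda\in[0,1/b).
\]

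It remains to choose $\lambda$. The exponent $g(\lambda)=-\lambda t+\nu^2\lambda^2/2$ is a convex quadratic with unconstrained minimizer $\lambda^\star=t/\nu^2$ and minimum value $g(\lambda^\star)=-t^2/(2\nu^2)$. The hypothesis $0\le t\le \nu^2/b$ ensures $0\le\lambda^\star\le 1/b$, so $\lambda^\star$ is essentially admissible. Substituting it yields the claimed bound $\exp(-t^2/(2\nu^2))$.

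The only delicate point is the endpoint $t=\nu^2/b$. There $\lambda^\star=1/b$, which lies outside the open interval in the definition. We handle this by taking $\lambda\uparrow 1/b$ and using continuity of $g$: the bound holds for every admissible $\lambda$ strictly below $1/b$, so it also holds in the limit. Equivalently, we can prove the estimate for $t<\nu^2/b$ and then pass to $t=\nu^2/b$ using right-continuity of $t\mapsto\operatorname{Pr}[X\ge\mu+t]$ from the left side, that is, $\operatorname{Pr}[X\ge \mu+t]=\lim_{s\uparrow t}\operatorname{Pr}[X\ge\mu+s]$ fails in general, so the limit-in-$\lambda$ route is the one to use. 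The case $\nu=0$ is degenerate: then $X=\mu$ almost surely and the bound is trivial under the convention $t^2/0=\infty$ for $t>0$.
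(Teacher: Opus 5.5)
Your Chernoff argument is correct, including the endpoint $t=\nu^2/b$, which you handle properly by letting $\lambda\uparrow 1/b$. The paper gives no proof of its own and simply cites this as a standard preliminary, for which your argument is the usual textbook proof.

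One correction to your aside: $t\mapsto\operatorname{Pr}[X\ge\mu+t]$ is in fact left-continuous, since the events $\{X\ge\mu+s\}$ decrease to $\{X\ge\mu+t\}$ as $s\uparrow t$. So the alternative of proving the bound for $t<\nu^2/b$ and then passing to the limit in $t$ also works, contrary to what you wrote.
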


\begin{lemma}
        \textbf{(Concentration of the norm)} Let \( X = (X_1, \dots, X_n) \in \mathbb{R}^n \) be a random vector with independent, sub-gaussian coordinates \( X_i \) that satisfy \( \mathbb{E}X_i^2 = 1 \). Then
\begin{equation}
    \operatorname{Pr} \left\{ \left\lvert\| X \| - \sqrt{n} \geq t \right\lvert\right\} \leq 2 \exp \left( - \frac{ct^2}{K^4} \right) \quad \text{for all } t \geq 0
\end{equation}
where \( K = \max_i \|X_i\|_{\psi_2} \) and \( c \) is an absolute constant. \label{Concentration of the norm}
\end{lemma}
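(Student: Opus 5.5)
I read the statement as the standard norm-concentration bound $\operatorname{Pr}\{\lvert \|X\|-\sqrt n\rvert\ge t\}\le 2\exp(-ct^2/K^4)$; the misplaced absolute value in the display is presumably a typo. The plan has two steps: first prove concentration of $\|X\|^2/n$ around $1$ with Bernstein's inequality, then transfer it to $\|X\|/\sqrt n$ by an elementary inequality. Throughout, $c$ and $C$ denote absolute constants whose values may change from line to line.

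\textbf{Step 1 (squared norm).} Write $\|X\|^2-n=\sum_{i=1}^n (X_i^2-1)$. The summands are independent and have mean zero because $\mathbb{E}X_i^2=1$. If $X_i$ is sub-gaussian with $\|X_i\|_{\psi_2}\le K$, then $X_i^2$ is sub-exponential with $\|X_i^2\|_{\psi_1}=\|X_i\|_{\psi_2}^2\le K^2$. Centering costs only a constant factor, so $\|X_i^2-1\|_{\psi_1}\le CK^2$. This is the general analogue of \cref{X_i^2}, which only treats the Gaussian case $SE(2\sqrt n,4)$. Bernstein's inequality for sums of independent sub-exponential variables, the general form of \cref{Sub-exponential tail bound}, then gives for every $u\ge 0$
\[
\operatorname{Pr}\Bigl\{\Bigl\lvert \tfrac1n\|X\|^2-1\Bigr\rvert\ge u\Bigr\}\le 2\exp\Bigl(-cn\min\bigl(\tfrac{u^2}{K^4},\tfrac{u}{K^2}\bigr)\Bigr)\le 2\exp\Bigl(-\tfrac{cn}{K^4}\min(u^2,u)\Bigr).
\]
The second inequality uses $K\ge c_0>0$, which holds because $1=\mathbb{E}X_i^2\le CK^2$. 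Hence $u/K^2\ge c_0^2\,u/K^4$, and the factor $c_0^2$ is absorbed into $c$.

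\textbf{Step 2 (from the square to the norm).} For $z\ge 0$ and $\delta\ge 0$ we have the implication
\[
\lvert z-1\rvert\ge\delta \;\Longrightarrow\; \lvert z^2-1\rvert\ge\max(\delta,\delta^2).
\]
It follows from $\lvert z^2-1\rvert=\lvert z-1\rvert\,(z+1)$ and $z+1\ge\max(1,\lvert z-1\rvert)$. Apply it with $z=\|X\|/\sqrt n$ and $\delta=t/\sqrt n$, and take $u=\max(\delta,\delta^2)$ in Step 1. Then $\min(u,u^2)=\delta^2$, and Step 1 yields
\[
\operatorname{Pr}\Bigl\{\Bigl\lvert \tfrac{\|X\|}{\sqrt n}-1\Bigr\rvert\ge\delta\Bigr\}\le 2\exp\Bigl(-\tfrac{cn\delta^2}{K^4}\Bigr)=2\exp\Bigl(-\tfrac{ct^2}{K^4}\Bigr).
\]
This is exactly the claim.

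\textbf{Main obstacle.} The delicate point is obtaining a clean $t^2/K^4$ exponent that holds for all $t\ge 0$. The sub-exponential tail has two regimes, and in the large-deviation (linear) regime it is only $\exp(-cnu/K^2)$. The map $u=\max(\delta,\delta^2)$ is chosen precisely so that both regimes collapse to $\delta^2$. This collapse, together with the lower bound $K\gtrsim 1$ that lets $K^2$ be replaced by $K^4$ in the linear term, is what makes the bound uniform in $t$. The remaining ingredients are the standard facts that squaring a sub-gaussian variable gives a sub-exponential one and that centering preserves the $\psi_1$ norm up to a constant; I would cite these rather than reprove them.
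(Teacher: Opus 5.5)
The paper gives no proof of this lemma; it is quoted as a standard preliminary (it is Theorem 3.1.1 of Vershynin's \emph{High-Dimensional Probability}), and your argument is exactly that standard proof and is correct. You apply Bernstein's inequality to $\frac1n\|X\|^2-1$ using $\|X_i^2-1\|_{\psi_1}\le CK^2$, transfer via $\lvert z-1\rvert\ge\delta\Rightarrow\lvert z^2-1\rvert\ge\max(\delta,\delta^2)$ with $u=\max(\delta,\delta^2)$, and use $K\gtrsim 1$ to merge both Bernstein regimes into a single $t^2/K^4$ exponent. Your reading of the garbled display as $\operatorname{Pr}\{\lvert\|X\|-\sqrt n\rvert\ge t\}$ is also the intended one.
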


\begin{lemma}
    \textbf{(Chernoff's inequality)} Let \(X_i\) be independent Bernoulli random variables with parameters \(p_i\). Consider their sum \(S_N = \sum_{i=1}^{N} X_i\) and denote its mean by \(\mu = \mathbb{E}[S_N]\). Then, for any \(t > \mu\), we have
\begin{equation}
    \mathbb{P}\{S_N \leq t\} \leq e^{-\mu} \left( \frac{e\mu}{t} \right)^t.
\end{equation} \label{Chernoff's inequality}
\end{lemma}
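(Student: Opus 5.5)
The natural route to a bound of the form $e^{-\mu}(e\mu/t)^t$ is the exponential-moment (Chernoff) method, so I would start there. For any $\lambda\in\mathbb{R}$, independence and $1+x\le e^x$ give
\begin{equation*}
\mathbb{E}\bigl[e^{\lambda S_N}\bigr]=\prod_{i=1}^N\bigl(1+p_i(e^{\lambda}-1)\bigr)\le \exp\bigl(\mu(e^{\lambda}-1)\bigr).
\end{equation*}
To control the event $\{S_N\le t\}$ I would take $\lambda<0$. Markov's inequality applied to $e^{\lambda S_N}$ then gives $\Pr\{S_N\le t\}\le e^{-\lambda t}\exp\bigl(\mu(e^{\lambda}-1)\bigr)$. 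The exponent is minimized at $\lambda=\ln(t/\mu)$. Substituting this value yields exactly $e^{-\mu}(e\mu/t)^t$.

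\textbf{Main obstacle.} The choice $\lambda=\ln(t/\mu)$ is admissible, meaning negative, only when $t<\mu$. Under the stated hypothesis $t>\mu$ it is positive, and the Markov step then bounds $\Pr\{S_N\ge t\}$ instead of $\Pr\{S_N\le t\}$. Taking $\lambda\le 0$ does not help: the best such choice, $\lambda=0$, gives only the trivial bound $1$.

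Before trying anything more elaborate, I would check whether the stated bound can hold at all. It cannot. Take $N=1$ and $p_1=1/2$, so $\mu=1/2$, and set $t=1>\mu$. Then $\Pr\{S_N\le 1\}=1$. The claimed bound is
\begin{equation*}
e^{-1/2}\cdot\frac{e}{2}=\frac{e^{1/2}}{2}\approx 0.82<1 .
\end{equation*}
More generally, for $t$ slightly above $\mu$ the right-hand side is about $(\mu/t)^t<1$. Meanwhile $\Pr\{S_N\le t\}$ is at least of constant order, and tends to $1$ as $t$ grows. So the statement as worded is false, and no proof strategy can establish it.

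My plan is therefore to present this counterexample, and then to prove the two correct variants that the Chernoff computation above delivers directly:
\begin{itemize}
\item the upper tail, $\Pr\{S_N\ge t\}\le e^{-\mu}(e\mu/t)^t$ for $t>\mu$, using $\lambda=\ln(t/\mu)>0$;
\item the lower tail, $\Pr\{S_N\le t\}\le e^{-\mu}(e\mu/t)^t$ for $0<t<\mu$, using $\lambda=\ln(t/\mu)<0$.
\end{itemize}
Whichever of these the later arguments in the paper actually rely on is the version that should replace the stated one.
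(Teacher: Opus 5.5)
Your proposal is correct. The paper gives no proof to compare against: it quotes the lemma as a standard tool. What it quotes is the textbook Chernoff bound with the upper-tail hypothesis $t>\mu$ mistakenly attached to the lower-tail event $\{S_N\le t\}$.

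Your counterexample is valid, since $\Pr\{S_1\le 1\}=1>e^{1/2}/2$. In fact the statement fails for every choice of the $p_i$ with $\mu>0$. Writing $s=t/\mu>1$, the right-hand side equals $\exp\bigl(\mu(s-1-s\ln s)\bigr)<1$, because $s-1-s\ln s$ vanishes at $s=1$ and has derivative $-\ln s<0$. The left-hand side, by contrast, equals $1$ as soon as $t\ge N$. This is a cleaner replacement for your remark that $\Pr\{S_N\le t\}$ is ``at least of constant order,'' which you neither justify nor need.

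Your exponential-moment derivation of both corrected tails is the standard one and is right. The factors $1-p_i+p_ie^{\lambda}$ are nonnegative, so the termwise bounds $1+x\le e^x$ can be multiplied.

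On which variant the paper relies on: its only use is in the proof of the theorem on escaping the lazy regime. There it is applied with $\mu=T_1/4$ and $t=T_1/8<\mu$, giving $\Pr\{\sum_j X_j\le T_1/8\}\le e^{-T_1/4}(2e)^{T_1/8}=(2/e)^{T_1/8}$. So the intended statement is the lower tail for $0<t<\mu$. Your argument with $\lambda=\ln(t/\mu)<0$ proves exactly what that step needs.
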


\section{Phase I: Progressively Diminishing and Escaping the Lazy Regime}

\subsection[Step 1: Bounding a\_i]{Step 1: Bounding $a_i$}
Let $\nabla \hat\ell_{\xi_t}(\btheta(t)) = f(\btheta(t); \bx_{\xi_t}) - y_{\xi_t} - \epsilon_t$.

Using label noise SGD, for any $i\in [m]$, the gradient at time step $t$ is

\begin{gather}
    \dfrac{\partial \hat\ell_{\xi_t}(\btheta(t))}{\partial \bw_{i}^\top(t)} = a_{i}(t)(f(\btheta(t); \bx_{\xi_t}) - y_{\xi_t} - \epsilon_t)\cdot\bx_{\xi_t}^\top \label{update of w_i SGD} \\
     \dfrac{\partial \hat\ell_{\xi_t}(\btheta(t))}{\partial a_{i}(t)} =  (f(\btheta(t); \bx_{\xi_t}) - y_{\xi_t} - \epsilon_t) \cdot\bx_{\xi_t}^\top\cdot\bw_{i}(t) \label{update of a_i SGD}
\end{gather}

Then we retain
\begin{align}
    &\bW(t+1) = \bW(t) - \eta\cdot \ba(t)\cdot(f(\btheta(t); \bx_{\xi_t}) - y_{\xi_t} - \epsilon_t)\cdot\bx_{\xi_t}^\top \label{W(t) change in SGD}   \\[2mm]
    &\ba(t+1) = \ba(t) - \eta\cdot (f(\btheta(t); \bx_{\xi_t}) - y_{\xi_t} - \epsilon_t)\cdot\bW(t)\cdot\bx_{\xi_t} \label{a(t) change in SGD}
\end{align} 

The following \cref{B_0} provides a bound of the initialization of $\|\bw_i\|$ at step 0.
\begin{lemma}
    Let event $B_0 = \{\Vert
     \bm{w_{i}(0)}\Vert\leq 1/4\cdot m^{1/12}\ \text{for all $i\in[m]$}\}$. Suppose \cref{cond:main}~(A1) holds. Under NTK initialization as in \cref{eq:initialization}, we have $\operatorname{Pr}[B_0]\geq 1-O(\frac{1}{m})$. \label{B_0}
\end{lemma}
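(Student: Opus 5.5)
We need to show that, with probability $1-O(1/m)$, every $\Vert\bw_i(0)\Vert\le \tfrac14 m^{1/12}$. Under \cref{eq:initialization}, $\bw_i(0)=\tfrac{1}{\sqrt d}\boldsymbol{g}_i$ with $\boldsymbol{g}_i\sim\mathcal{N}(0,\boldsymbol{I}_d)$ i.i.d. over $i\in[m]$. Hence $\Vert\bw_i(0)\Vert^2=\tfrac1d X_i$ with $X_i=\sum_{j=1}^d g_{i,j}^2$. The plan is a tail bound for each single neuron followed by a union bound over the $m$ neurons. It therefore suffices to show
\[
\operatorname{Pr}\Bigl[X_i> \tfrac{d}{16}m^{1/6}\Bigr]=O\Bigl(\tfrac{1}{m^2}\Bigr).
\]

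By \cref{X_i^2}, $X_i\in SE(2\sqrt d,4)$ with mean $d$. The restricted tail bound of \cref{Sub-exponential tail bound} only covers $t\le \nu^2/b=d$, which is too small a range here. I would therefore use the full sub-exponential (Bernstein-type) tail
\[
\operatorname{Pr}[X\ge\mu+t]\le \exp\Bigl(-\tfrac{t}{2b}\Bigr)\quad\text{for } t>\nu^2/b .
\]
This follows from the same Chernoff argument by choosing $\lambda=1/(2b)$; I would include that one-line derivation, since the lemma as stated omits it.

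A cleaner alternative is direct: from the MGF computed in the proof of \cref{x_iY_i}, $\mathbb{E}[e^{X_i/4}]=(1/2)^{-d/2}=2^{d/2}$. Markov's inequality then gives
\[
\operatorname{Pr}[X_i\ge s]\le 2^{d/2}e^{-s/4}.
\]
Set $s=\tfrac{d}{16}m^{1/6}$. Since $d$ is a constant (A6 fixes it in terms of absolute constants, and $d\ll m$) and $m$ is large by (A1)–(A2), we have $\tfrac{d}{64}m^{1/6}\ge 3\ln m+\tfrac d2\ln 2$ once $m$ exceeds a constant threshold. This makes the per-neuron bound at most $m^{-3}$.

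Finally, a union bound over $i\in[m]$ gives $\operatorname{Pr}[\overline{B_0}]\le m\cdot m^{-3}=O(1/m^2)\subseteq O(1/m)$.

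The only delicate point is confirming that $m$ is large enough relative to $d$. Here I would invoke (A1)–(A2): $m=\Omega(\eta^{-1/2})\ge C^{48}$, and the footnote constant ensures $C\gtrsim d^{4}$. Together these make $m^{1/6}\gg d\ln m$, so the exponent indeed dominates. The rest is routine.
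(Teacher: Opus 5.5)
Your argument is correct and has the same skeleton as the paper's proof: a per-neuron tail bound of order $\exp(-\Theta(m^{1/6}))$, followed by a union bound over the $m$ neurons. The concentration tool is different. The paper applies the general norm-concentration lemma for sub-Gaussian vectors (\cref{Concentration of the norm}) to $Z=\sqrt d\,\bw_i(0)$ with deviation $m^{1/12}/8$, which gives $2\exp(-c\,m^{1/6}/(64K_w^4))$. It then passes from the two-sided event $\lvert \Vert\bw_i(0)\Vert-1\rvert\ge m^{1/12}/(8\sqrt d)$ to the event $\Vert\bw_i(0)\Vert\ge m^{1/12}/4$, using $m\gg 1$ and $d\ge 1$. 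You instead bound $\Vert\bw_i(0)\Vert^2=X_i/d$ by a direct Chernoff argument with the exact $\chi^2_d$ moment generating function at $\lambda=1/4$, which gives $2^{d/2}e^{-d m^{1/6}/64}$. The paper's route is one line given a black-box lemma, but it carries an unspecified absolute constant $c$ and $\psi_2$-norm $K_w$. Yours is elementary, has explicit constants, and needs nothing beyond the MGF already computed in the proof of \cref{x_iY_i}. You were also right to avoid the restricted sub-exponential tail bound, whose range $t\le\nu^2/b=d$ is too small here.

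Two small remarks. First, the ``delicate point'' about $m$ versus $d$ is not actually delicate. Your bound equals $\exp\bigl(-d(\tfrac{m^{1/6}}{64}-\tfrac{\ln 2}{2})\bigr)$, which is decreasing in $d$ once $m^{1/6}>32\ln 2$. So it is at most its value at $d=1$, and only an absolute threshold on $m$ is needed. You do not need the footnote's $C\gtrsim d^4$, and (A6) is only a lower bound on $d$ anyway. Second, in your aside on the full sub-exponential tail, taking $\lambda=1/(2b)$ gives $\exp(-3t/(8b))$ rather than $\exp(-t/(2b))$; the latter comes from letting $\lambda\uparrow 1/b$. Your main argument uses the direct MGF bound, so nothing depends on this.
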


\textbf{Proof}. Notice that $\bm{w_{i}(0)}\sim \dfrac{1}{\sqrt{d}}N(0,I)$, Let $Z = \sqrt{d}\cdot \bm{w_{i}(0)}$ and $Z\sim N(0,I)$. By  \cref{Concentration of the norm}, we have

\begin{equation}
    \operatorname{Pr}[\left\lvert\Vert
     Z\Vert - \sqrt{d}\right\lvert\geq  \dfrac{1}{8}\cdot m^{1/12}] \leq 2\exp(-\dfrac{c\cdot m^{1/6}}{64K_w^4}) 
\end{equation}
where $K_w$ and $c$ both are positive constant.

Thus 
\begin{equation}
    \operatorname{Pr}[\left\lvert\Vert
     \bm{w_{i}}(0)\Vert - 1\right\lvert\geq \dfrac{m^{1/12}}{8\sqrt{d}}] \leq  2\exp(-\dfrac{c\cdot m^{1/6}}{64K_w^4}) 
\end{equation}

Since $1 \ll m $ and $d\geq 1$, we have 
\begin{equation}
    \operatorname{Pr}[\Vert
     \bm{w_{i}}(0)\Vert \geq \dfrac{1}{4}\cdot m^{1/12}] \leq  2\exp(-\dfrac{c\cdot m^{1/6}}{64K_w^4}) 
\end{equation}

Using union bound, we have 
\begin{equation}
    \operatorname{Pr}[\overline{B_0}] \leq m\cdot2\exp(-\dfrac{c\cdot m^{1/6}}{64K_w^4}) 
\end{equation}

Thus
\begin{equation}
    \operatorname{Pr}[B_0] \geq 1- m\cdot2\exp(-\dfrac{c\cdot m^{1/6}}{64K_w^4})  \geq 1- \dfrac{1}{m}
\end{equation}

The last inequality holds under \cref{cond:main}(A1). \quad\(\square\) \\

\begin{lemma}
    Suppose \cref{cond:main}~(A1-2, 4-6) holds and consider the update rule in \cref{passage:theta update}.  Given the model is still in the lazy regime within $T_0=O(\frac{\log m}{\eta^2\cdot m^{1/2}})$, with probability at least $1-O(\frac{1}{m})$, we have the following two propositions hold:
    
    (i). (\textbf{Bound of Loss}) For every $t\leq T_0$, $ \left\lvert\nabla \hat\ell_{\xi_0}(\btheta(t))\right\lvert  = O(m^{1/4})$.
    
    (ii). (\textbf{Bound of $a_i$}) For every $t\leq T_0$, $a_i(t)
    \leq m^{-1/4}$.
    \label{bound lemma}
\end{lemma}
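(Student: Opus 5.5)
We will prove the two claims together by a bootstrap (induction on $t$). We work on the event $B_0$ from \cref{B_0}, which holds with probability $1-O(\frac{1}{m})$, intersected with an initialization event for the second layer. For $a_i(0)\sim\frac{1}{\sqrt m}\mathcal N(0,1)$, a Gaussian tail bound and a union bound over $m$ neurons give $\max_i|a_i(0)|\le \sqrt{2\log m}/\sqrt m\le \frac12 m^{-1/4}$ with probability $1-O(\frac1m)$. The induction hypothesis at step $t$ is that $|a_i(s)|\le m^{-1/4}$ for all $i$ and all $s\le t$; we will also use the lazy-regime assumption $\|\bw_i(s)-\bw_i(0)\|\le \frac{1}{\sqrt m}$.

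\textbf{Step (i): the loss bound, given the hypothesis.} Write $f(\btheta(t);\bx)=\sum_i a_i(t)\,\bw_i(t)^\top\bx$. A naive bound is $m\cdot m^{-1/4}\cdot m^{1/12}C_{data}$, which is too large, so we will not bound term by term. Instead we decompose
\begin{equation*}
\textstyle\sum_i a_i(t)\bw_i(t)=\sum_i a_i(t)\bw_i(0)+\sum_i a_i(t)(\bw_i(t)-\bw_i(0)).
\end{equation*}
The second sum has norm at most $m\cdot m^{-1/4}\cdot m^{-1/2}=m^{1/4}$ by laziness. For the first sum, Cauchy–Schwarz gives at most $\|\ba(t)\|\,\|\bW(0)\|_{op}$. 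Here $\|\bW(0)\|_{op}=O(\sqrt{m/d})$ with high probability, since $\bW(0)$ is $m\times d$ with $d$ constant, and $\|\ba(t)\|\le \sqrt m\, m^{-1/4}=m^{1/4}$. This only yields $O(m^{3/4})$. To sharpen it, we also track $\|\ba(t)-\ba(0)\|$: the contribution $\bW(0)^\top\ba(0)$ is $O(1)$ with high probability by \cref{x_iY_i}-type concentration, since it is a sum of products of independent Gaussians with variance $1/(md)$ each. We then need $\|\bW(0)^\top(\ba(t)-\ba(0))\|=O(m^{1/4})$, which we will strengthen into the induction hypothesis. Adding $|y_{\xi_t}|\le \|\btheta^\star\|C_{data}\le C_{data}$ (by A4, A5) and $|\epsilon_t|=\sigma$ then gives $|\nabla\hat\ell|=O(m^{1/4})$.

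\textbf{Step (ii): propagating the bound on $a_i$.} From \cref{a(t) change in SGD},
\begin{equation*}
a_i(t+1)-a_i(t)=-\eta\,\nabla\hat\ell\cdot \bx_{\xi_t}^\top\bw_i(t).
\end{equation*}
Naively the increment is $\eta\cdot O(m^{1/4})\cdot C_{data}m^{1/12}$. Summed over $T_0=O(\log m/(\eta^2\sqrt m))$ steps, this drifts by $O(\log m\, m^{-1/6}/\eta)$, which is far too big. So we must exploit cancellation from the label noise. We split $\nabla\hat\ell=(f-y)-\epsilon_t$. The noise part $\sum_s \eta\,\epsilon_s\,\bx_{\xi_s}^\top\bw_i(s)$ is a martingale with increments bounded by $\eta\sigma C_{data}m^{1/12}$. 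Azuma's inequality gives a deviation of order $\eta\sqrt{T_0}\,m^{1/12}\sqrt{\log m}=O(m^{-1/6}\log m)$, which is $\le\frac14 m^{-1/4}$ only if we additionally use laziness to replace $\|\bw_i(s)\|$ by $\|\bw_i(0)\|=O(1)$ for typical $i$. With $\|\bw_i(0)\|=O(\sqrt{\log m})$ via Gaussian tails and a union bound in place of the crude $m^{1/12}$, the deviation becomes $O(\log m\cdot m^{-1/4})$. We then apply a union bound over $i$ and $t$ at failure level $1/m^2$. The residual part $(f-y)$ acts as a linear-regression drift in the lazy (NTK) regime, and we would show it is contracting, so its contribution stays bounded.

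\textbf{Main obstacle.} The hard step is closing the induction. The crude bounds on both $|\nabla\hat\ell|$ and the drift of $a_i$ are polynomially too large, so everything hinges on (a) martingale concentration of the label-noise term and (b) showing the deterministic residual $f-y$ does not accumulate, e.g. via an energy argument on $\|\ba(t)\|^2+\|\bW(t)\|^2$ combined with laziness. Constants (the $384$, A2's $C^{96}$) would be chosen to absorb the logarithmic factors. If the contraction argument proves too delicate, we would fall back on bounding $|f-y|$ directly by $O(m^{1/4})$ from Step (i) and accept a weaker constant, since $T_0$ carries a $1/\sqrt m$ factor that partially compensates.
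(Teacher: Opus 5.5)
There is a genuine gap, and both halves of your induction are affected. The structural problem is in Step (ii). Splitting the increment of $a_i$ into a label-noise martingale plus a residual, and bounding the two separately, cannot close the induction. The noise martingale $\sum_{s<t}\eta\,\epsilon_s\,\bx_{\xi_s}^\top\bw_i(s)$ has quadratic variation $\approx t\,\eta^2\sigma^2\|\bw_i\|^2$. At $t\asymp \log m/(\eta^2\sqrt m)$ its typical size is therefore $\asymp\sigma\|\bw_i\|\sqrt{\log m}\,m^{-1/4}$; at the horizon $T_1$ used in the theorem it is about $20(\log m)^{1/4}\|\bw_i\|\,m^{-1/4}$. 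This is an actual fluctuation, not slack in Azuma, and a factor that grows with $m$ cannot be absorbed into constants. Your own estimate $O(\log m\cdot m^{-1/4})$ already misses $\tfrac14 m^{-1/4}$.

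The residual is not deterministic either. The term $-\eta(f-y)\bx_{\xi_t}^\top\bw_i(t)$ has its own martingale part, and with only hypothesis (i) as input its increments can be $m^{1/4}$ times the label-noise ones. That gives $\sqrt{T_0}\,\eta\,m^{1/4}\asymp\sqrt{\log m}$ even with perfect square-root cancellation. The contraction of this residual is only asserted, and your fallback is off by a factor exceeding $1/\eta$, as you computed yourself. In Step (i), the term $\bW(0)^\top(\ba(t)-\ba(0))$ is deferred to a strengthened hypothesis with no mechanism for propagating it; coordinatewise control of $\ba$ only gives $O(m^{3/4})$.

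The missing idea is that the restoring drift has to act on the injected noise. Both $\eta\epsilon_t\bW(t)\bx_{\xi_t}$ and $-\eta(f-y)\bW(t)\bx_{\xi_t}$ lie in the $d$-dimensional column space of $\bW(t)$. On that space $\bW\bW^\top$ has eigenvalues $\asymp m/d$, so the noise is damped at rate $\asymp\eta m/d$ per step rather than accumulating coordinatewise. This suggests two separate tracks:
\begin{itemize}
\item Track $\btheta(t)=\ba(t)^\top\bW(t)$ directly. In the lazy regime it is a $d$-dimensional least-squares SGD preconditioned by $\bW(t)^\top\bW(t)+\|\ba(t)\|^2 I_d\approx (m/d)I_d$. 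This should give $\|\btheta(t)-\btheta^\star\|=O(\sqrt{\log m})$, and hence (i) without your decomposition, provided $\eta m$ is small enough for stability (which (A1) does not enforce).
\item Track the component of $\ba$ orthogonal to the column space of $\bW(0)$ separately, since it moves only through $\bW(t)-\bW(0)$, which laziness controls.
\end{itemize}

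The paper, by contrast, does treat noise and drift jointly. It conditions on $\mathcal F_t$ and uses $\mathbb E[\bx\bx^\top]=I$, $\mathbb E[\epsilon_t]=0$ and laziness to write the conditional mean of the full update as $(1-\eta)a_i(t)+\eta\,\btheta^\star\bw_i(0)+o(\eta/\sqrt m)$. It then treats $Y_i(t)=|a_i(t)-\btheta^\star\bw_i(0)-o(1/\sqrt m)|$ as a supermartingale and applies one-sided Azuma with increments $2\eta|\nabla\hat\ell|C_{data}m^{1/12}$ supplied by (i). Finally it recovers (i) at step $t+1$ from $\|\btheta(t+1)\|\le\|\btheta(0)\|+m^{1/4}$.

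Do not read that argument as resolving your concerns:
\begin{itemize}
\item The last bound accounts only for $\ba(t)^\top(\bW(t)-\bW(0))$ and drops $(\ba(t)-\ba(0))^\top\bW(0)$, exactly the term you flagged.
\item The factor $1-\eta$ comes from replacing the realized $\sum_j a_j(t)\bw_j(0)^\top\bw_i(0)$ by its expectation $a_i(t)$. The true restoring term is $(\bW\bW^\top\ba)_i$, with $\bW\bW^\top$ of rank $d$.
\item It also moves an absolute value outside a conditional expectation, where Jensen gives the opposite inequality.
\item With those increments, the deviation Azuma can certify over $T_0$ steps is of order $\sqrt{T_0}\,\eta\,m^{1/3}\asymp\sqrt{\log m}\,m^{1/12}$, not $\tfrac12 m^{-1/4}$.
\end{itemize}
So your diagnosis that a loss bound of order $m^{1/4}$ is far too weak an input for a bounded-differences argument is correct, and it applies to the paper's route as well. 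Some form of the column-space damping described above is what a complete proof needs.
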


\noindent
\textbf{Proof.} We prove by induction. 

When step $t=0$, we first prove the (i) holds. When $t = 0$, for any $i\in [m]$, let $\bw_i(0) =\frac{1}{\sqrt{d}}(X_{i1},X_{i2},\cdots,X_{id})$ , $X_{ij}\sim\mathbf{N}(0,1)$ and $\ba(0) =\frac{1}{\sqrt{m}}(Y_1,Y_2,\cdots,Y_m)$, $Y_{i}\sim\mathbf{N}(0,1)$. Let $Z_j = \sum_{i = 1}^m X_{ij}\cdot Y_i$ where $j\in[d]$.

For all $i\in[m],j\in[d]$, $X_{ij},Y_{i}$ are independent,  so we have 
\begin{align}
    &\mathbb{E}[Z_j] = \mathbb{E}[\sum_{i=1}^m X_{ij}Y_{i}] = \sum_{i=1}^m \mathbb{E}[X_{ij}Y_{i}]= \sum_{i=1}^m \mathbb{E}[X_{ij}]\mathbb{E}[Y_{i}] = 0
\end{align}
and 
\begin{align}
    &\operatorname{Var}[Z_j] = \operatorname{Var}[\sum_{i=1}^m X_{ij}Y_{i}] = \sum_{i=1}^m \operatorname{Var}[X_{ij}Y_{i}] =  \sum_{i=1}^m \operatorname{Var}[X_{ij}]\operatorname{Var}[Y_{i}] = m
\end{align}

By Corollary \ref{x_iY_i}, we have for any $j\in[d]$, $Z_j = \sum_{i=1}^m X_{ij}\cdot Y_i\in SE[2\sqrt{m},4]$. By Lemma \ref{Sub-exponential tail bound}, let $t = \sqrt{(8\log m)/m}$ and we retain  
\begin{align*}
    \operatorname{Pr}\left( \left|Z_j = \sum_{i=1}^{m} X_{ij}\cdot Y_i \right| \geq \sqrt{8\log m}\cdot\sqrt{m} \right) 
    &\leq 2 \exp\left(- \dfrac{(\sqrt{8\log m}\cdot\sqrt{m})^2}{2\cdot(2\sqrt{m})^2} \right) \\
    &= \dfrac{2}{m}
\end{align*}

Using union bound, with probability at most $\frac{2d}{m}$, there exists $j\in[d]$ such that  $\left|\sum_{i=1}^{m} X_{ij}\cdot Y_i \right| \geq \sqrt{8\log m}\cdot\sqrt{m}$ 

So with probability $1 - \frac{2d}{m}$, \begin{equation}
    \left|\sum_{i=1}^{m} X_{ij}\cdot Y_i \right| \leq \sqrt{8\log m}\cdot\sqrt{m}\quad \text{for all $j\in[d]$} \label{x_ijY_i}
\end{equation}

Then we have
\begin{align*}
    \Vert\btheta(0)-\btheta^*\Vert &\leq \Vert\btheta(0)\Vert + \Vert\btheta^*\Vert\\[2mm]
&=\Vert\sum_{i=1}^m a_{i}(0)\bw_{i}(0)^\top\Vert + \Vert\btheta^*\Vert\\
&= \dfrac{1}{\sqrt{d}}\cdot\dfrac{1}{\sqrt{m}}\sqrt{\sum_{j=1}^d(\sum_{i=1}^m X_{ij}Y_{i})^2}  + \Vert\btheta^*\Vert\\[2mm]
&\leq \dfrac{1}{\sqrt{d}}\cdot\dfrac{1}{\sqrt{m}}\sqrt{8\cdot d\cdot m\cdot \log m} + m^{-1/4} \quad(\text{by Inequality (\ref{x_ijY_i}) and \cref{cond:main}(A4)}) \\[2mm]
&\leq 3\sqrt{\log m}
\end{align*}
So we have
\begin{align*}
    \left\lvert\nabla \hat\ell_{\xi_0}(\btheta(0))\right\lvert &=\left\lvert\ba(0)\bW(0)\bx_{\xi_0} - y_{\xi_0}-\epsilon_0\right\lvert\\[2mm]
    &\leq \left\lvert\ba(0)\bW(0)\bx_{\xi_0} - y_{\xi_0} \right\lvert +\left\lvert\epsilon_0\right\lvert\\[2mm]
    &\leq \|(\btheta(0)-\btheta^*)\|\cdot\|\bx_{\xi_0}\| +\sigma\\[2mm]
    &\leq3\sqrt{\log m}\cdot C_{data} +\sigma \\[2mm]
\end{align*}
Therefore, with probability at least $1-\frac{2}{m}$, we have $ \left\lvert\nabla \hat\ell_{\xi_0}(\btheta(0))\right\lvert = O(m^{1/4})$, so (i) holds when $t=0$.

Then we prove (ii) holds at step 0. 

Let event $C_t = \{|a_i(t_1)|\leq m^{-1/4} \text{\ for all $t_1\leq t$ and for all $i\in[m]$}\}$. Since $a_i(0) \sim \frac{1}{\sqrt{m}}N(0,1)$, we have
\begin{equation}
    \operatorname{Pr}[|a_i(0)|\geq m^{-1/8}] \leq 2\exp(-\dfrac{c\cdot m^{3/4}}{4K_a^4}) 
\end{equation}
by \cref{Concentration of the norm}, where $K_a$ and $c$ are both positive constant.

Using union bound, we have
\begin{equation}
    \operatorname{Pr}[\overline{C_0}] \leq m\cdot2\exp(-\dfrac{c\cdot m^{3/4}}{4K_a^4})\cdot(1-\dfrac{3d}{m})
\end{equation}

Thus 
\begin{align*}
    \operatorname{Pr}[C_0] = 1- \operatorname{Pr}[\overline{C_0}] &\geq 1 - m\cdot2\exp(-\dfrac{c\cdot m^{3/4}}{4K_a^4}) \cdot (1-\dfrac{3d}{m})  \\
    &\geq (1- \dfrac{1}{m})\cdot(1-\dfrac{3d}{m})  \\
    &\geq (1-\dfrac{4d}{m})
\end{align*}
Since $d$ is a constant, we have $\operatorname{Pr}[C_0] \geq 1-O(\frac{1}{m})$ , which implies (ii) holds when $t=0$.

Assuming the lemma holds at step $t$, we proceed to step $t+1$ where we first establish property (ii) by constructing a super-martingale, and subsequently demonstrate that property (i) also holds.

By definition of online label noise SGD
 algorithm, $x_{\xi_t}, \epsilon_t$ are independent with $\btheta(t)$ , $\bw_i(t)$ and has memorylessness property. Let $\mathcal{F}_0 = \mathbf{\sigma}(\bw_{i}(0), a_i(0) \ \text{for i\ $\in[m]$})$. For $t \geq 1$, let $\mathcal{F}_t =\mathbf{\sigma}(\mathcal{F}_0, \epsilon^{(t)}, \bx^{(t)})$.
Here $\epsilon^{(t)}$ denotes the set $\{\epsilon_0, \epsilon_1,\dots, \epsilon_{t-1}\}$ and $\bx^{(t)}$ denotes the set $\{\bx_{\xi_0}, \bx_{\xi_1}, \dots, \bx_{\xi_{t-1}}\}$.

Obviously, we have $\mathcal{F}_0\subset \mathcal{F}_1\subset \cdots\subset \mathcal{F}_t$. Notice that $\mathbb{E}[\bx_{\xi_t}\cdot \bx_{\xi_t}^\top|\mathcal{F}_t] = I$ and $\mathbb{E}[\epsilon_t|\mathcal{F}_t] = 0$. Additionally, in the lazy regime, for any $i \in [d]$ at step $t$, every neuron holds $\| \bw_i(t) - \bw_i(0)\|\leq \frac{1}{\sqrt{m}}$. So we have

\begin{align*}
   \mathbb{E}[\left\lvert a_{i}(t+1)\right\lvert|\mathcal{F}_t]  &= \left\lvert a_{i}(t) - \eta\cdot \mathbb{E}[(\btheta(t) - \btheta^{\star})\cdot\bm{w_{i}}(t)|\mathcal{F}_t] \right\lvert\\[2mm]
   &= \left\lvert a_{i}(t) - \eta\cdot\mathbb{E}[(\ba(t)^\top\bW(t) - \btheta^{\star})\cdot\bm{w_{i}}(t)|\mathcal{F}_t]\right\lvert \\[2mm]
   &= \left\lvert a_{i}(t) - \eta\cdot\mathbb{E}[(\sum_{j=1}^m \ba_j(t)\bw_j(t)^\top - \btheta^{\star})\cdot \bm{w_{i}}(t)|\mathcal{F}_t]\right\lvert \\ 
   &= \left\lvert a_{i}(t) - \eta\cdot(\sum_{j=1}^m \ba_j(t) \mathbb{E}[\bw_j(0)^\top \cdot \bm{w_{i}}(0)]) + \eta\cdot\btheta^{\star}\bm{w_{i}}(0) + o(\eta/\sqrt{m})\right\lvert\\
   &= \left\lvert (1-\eta)\cdot a_{i}(t) +\eta\cdot\btheta^{\star}\bm{w_{i}}(0) + o(\eta/\sqrt{m})\right\lvert \\
\end{align*}

Let $Y_i(t) = \left\lvert a_{i}(t) - \btheta^*\bw_i(0) - o(\frac{1}{\sqrt{m}})\right\lvert$. Since 

\begin{equation}
    \mathbb{E}[\left\lvert a_{i}(t+1) - \btheta^*\bw_{i}(0)-o(\frac{1}{\sqrt{m}})\right\lvert|\mathcal{F}_t]  = \left\lvert(1-\eta)\cdot (a_{i}(t) -\btheta^{\star}\bw_{i}(0) -o(\frac{1}{\sqrt{m}}))\right\lvert
\end{equation}

We have
\begin{equation}
    \mathbb{E}[Y_i(t+1)|\mathcal{F}_t]  = (1-\eta)\cdot Y_i(t) \leq Y_i(t)
\end{equation}

Therefore, $Y_i(0),Y_i(1),\dots, Y_i(t)$ are super-martingale.

By \cref{B_0}, with probability at least $1-O(\frac{1}{m})$, $\|\bw_i(0)\|\leq m^{1/12}$ for all $i\in [m]$. Conditioned on $\|\bw_i(0)\|\leq m^{1/12}$ and we have 
\begin{align*}
    \left\lvert Y_i(t+1)- Y_i(t)\right\lvert &=  \left\lvert |a_{i}(t+1) - \btheta^*\cdot \bw_{i}(0)- o(\frac{1}{\sqrt{m}})| - |a_{i}(t) - \btheta^*\cdot \bw_{i}(0) - o(\frac{1}{\sqrt{m}})| \right\lvert \\
    &\leq |a_{i}(t+1) - a_{i}(t)| + o(\dfrac{1}{\sqrt{m}})\quad (\text{triangle inequality})\\
    &= |\eta\cdot (\nabla \hat\ell_{\xi_t}(\btheta(t)))\cdot \bx_{\xi_t}^\top \cdot \bw_{i}(t)|+ o(\dfrac{1}{\sqrt{m}}) \\
    &\leq \eta\cdot|\nabla \hat\ell_{\xi_t}(\btheta(t)) |\cdot\Vert\bm{x_{\xi_t}}\Vert\cdot \Vert\bw_{i}(t)\Vert+ o(\dfrac{1}{\sqrt{m}}) \\
    &\leq \eta \cdot |\nabla \hat\ell_{\xi_t}(\btheta(t)) |\cdot\Vert\bm{x_{\xi_t}}\Vert\cdot (\Vert\bw_{i}(0)\Vert + \dfrac{1}{\sqrt{m}})+ o(\dfrac{1}{\sqrt{m}}) \quad{(\text{in the lazy regime})} \\
    &\leq 2\eta \cdot |\nabla \hat\ell_{\xi_t}(\btheta(t)) |\cdot C_{data}\cdot m^{1/12}
\end{align*}

With one-side Azuma's inequality, for any $\lambda>0$, we have
\begin{equation}
    \operatorname{Pr}[Y_i(t+1) - Y_i(0) > \lambda] \leq \exp(-\dfrac{\lambda^2}{(t+1)\cdot 4\eta^2\cdot(C_{data}\cdot|\nabla \hat\ell_{\xi_t}(\btheta(t)) |\cdot m^{1/12})^2})
\end{equation}

Thus 
\begin{equation}
    \operatorname{Pr}[|a_{i}(t+1)| \geq 2\cdot |\btheta^*\cdot \bw_{i}(0)|+ |a_{i}(0)| + \lambda] \leq \exp(-\dfrac{\lambda^2}{(t+1)\cdot 4\eta^2 \cdot|\nabla \hat\ell_{\xi_t}(\btheta(t)) |\cdot(C_{data}\cdot m^{1/12})^2})
\end{equation}

By \cref{cond:main}(A4),  $\|\btheta^*\|\leq \frac{1}{m^{1/3}}$. Then we have $|\btheta^*\cdot \bw_{i}(0)|\leq\frac{1}{4 m^{1/4}}$.  Notice that $\operatorname{Pr}[|a_i(0)|\geq m^{-1/4}] \leq 2\exp(-\frac{c\cdot m^{1/4}}{4K_a^4}) $. Let $\lambda = \frac{1}{2m^{1/4}}$, so with probability at least $(1-2\exp(-\frac{c\cdot m^{1/4}}{4K_a^4}))\cdot(1-\frac{3d}{m})$, we have  
\[2\cdot |\btheta^*\cdot w_{i}(0)|+ |w_{i}(0)| + \lambda] \leq \dfrac{1}{2m^{1/4}} + \dfrac{1}{4m^{1/4}} + \dfrac{1}{4m^{1/4}}\leq \dfrac{1}{m^{1/4}}\]
by \cref{cond:main}(A1).

So we retain
\begin{equation}
    \operatorname{Pr}[|a_{i}(t+1)| \geq m^{-1/4}] \leq \exp(-\dfrac{m^{1/6}}{(t+1)\cdot 8\eta^2\cdot(C_{data}\cdot |\nabla \hat\ell_{\xi_t}(\btheta(t)) |)^2})\cdot(1-\frac{3d}{m})
\end{equation}

By induction hypothesis, we have $|\nabla \hat\ell_{\xi_t}(\btheta(t)) |=O(m^{1/4})$. Using union bound and by \cref{cond:main}(A2), we have
\begin{align*}
    \operatorname{Pr}[\overline{C_t}] &\leq \sum_{j=0}^{t-1} \exp(-\dfrac{m^{1/12}}{j\cdot 8\eta^2\cdot(C_{data}\cdot O(m^{1/4}))^2})\cdot(1-\frac{3d}{m}) \\
    &\leq t\cdot\exp(-\dfrac{m^{1/12}}{t\cdot 8\eta^2\cdot(C_{data}\cdot O(m^{1/4}))^2})\cdot(1-\frac{3d}{m}) \\
    &\leq T_0\cdot\exp(-\dfrac{\sqrt{\eta}^{-1/12}}{T_0\cdot 8\eta^2\cdot(C_{data}\cdot O(m^{1/4}))^2})\cdot(1-\frac{3d}{m}) \\
    &\leq T_0\cdot\exp(-\dfrac{\sqrt{\eta}^{-1/12}}{T_0\cdot 8\eta^2\cdot(C_{data}\cdot O(m^{1/4}))^2})\cdot(1-\frac{3d}{m}) \\
    &= O(\frac{-\ln\eta}{\eta^2})\cdot\exp(-\dfrac{\sqrt{\eta}^{-1/12}}{O(\ln\frac{1}{\eta})\cdot C_{data}^2}\cdot O(1))\cdot(1-\frac{3d}{m}) \\
    &\leq O(\eta)\cdot (1-\dfrac{3d}{m}) = O(\dfrac{1}{m})
\end{align*}
The last inequality is due to \cref{cond:main}(A2). Therefore, we have $
   \operatorname{Pr}[C_t] \geq 1-O(\frac{1}{m})$, which implies (ii) holds.

Then we prove (i) holds. By \cref{def:lazy_regime}, we have $\|\bw_i(t+1)-\bw_i(t)\|\leq m^{-1/2}$. Therefore, with the bound of $a_i(t)$ holds, we retain
\begin{equation}
    \|\btheta(t+1)\| \leq \|\ba(t)^\top\cdot\bW(t)\| \leq \|\ba(0)^\top\cdot\bW(0)\| + m^{-1/2}\cdot m^{-1/4}\cdot m = \|\btheta(0)\| + m^{1/4} 
\end{equation}
Therefore, 
\begin{align*}
    \left\lvert\nabla \hat\ell_{\xi_t}(\btheta(t+1))\right\lvert &=\left\lvert\ba(t+1)\bW(t+1)\bx_{\xi_{t+1}} - y_{\xi_{t+1}}-\epsilon_{t+1}\right\lvert\\[2mm]
    &\leq \left\lvert\ba(t+1)\bW(t+1)\bx_{\xi_{t+1}} - y_{\xi_{t+1}} \right\lvert +\left\lvert\epsilon_{t+1}\right\lvert\\[2mm]
    &\leq \|(\btheta(t+1)-\btheta^*)\|\cdot\|\bx_{\xi_{t+1}}\| +\sigma\\[2mm]
    &\leq 2(\|\btheta(0)\| + m^{1/4} )\cdot C_{data} +\sigma \\[2mm]
    &= O(m^{1/4})
\end{align*}
which implies (i) also holds at step $t+1$. So it follows by induction that the lemma holds. \quad  \( \square\)

\subsection[Step 2: Estimating Delta W\_i]{Step 2: Estimating $\Delta W_i$}

Let $\nabla \hat\ell_{\xi_t}(\btheta(t)) = f(\btheta(t); \bx_{\xi_t}) - y_{\xi_t} - \epsilon_t$, by equ (\ref{update of w_i SGD}) and equ (\ref{update of a_i SGD}), we have 
\begin{gather}
    \label{iter_w1}
    \bw_{i}(t+1) = \bw_{i}(t) - \eta\cdot a_{i}(t)\cdot\nabla \hat\ell_{\xi_t}(\btheta(t))\cdot \bx_{\xi_t}  \\
    \label{iter_w2}
    a_{i}(t+1) = a_{i}(t) - \eta\cdot \nabla \hat\ell_{\xi_t}(\btheta(t))\cdot \bx_{\xi_t}^\top \cdot \bw_{i}(t)
\end{gather}
According to equ (\ref{iter_w1}), after taking the norm on both sides and then square them, we have:

\begin{equation}
        \Vert \bw_{i}(t+1)\Vert^2 = \Vert \bw_{i}(t) \Vert^2 - 2\eta\cdot a_{i}(t)\nabla \hat\ell_{\xi_t}(\btheta(t)) \bx_{\xi_t}^\top\bw_{i}(t) + \Vert\eta\cdot a_{i}(t)\nabla \hat\ell_{\xi_t}(\btheta(t))\bx_{\xi_t}\Vert^2 \label{squere}
\end{equation}

According to equ (\ref{iter_w2}), we have
\begin{equation}
    \label{w2_variable}
        \eta\cdot \nabla \hat\ell_{\xi_t}(\btheta(t))\cdot \bx_{\xi_t}^\top \cdot \bw_{i}(t) = a_{i}(t) -  a_{i}(t+1)
\end{equation}

Substitute equ (\ref{w2_variable}) into the equ (\ref{squere}) and we retain:
\begin{equation}
        \Vert \bw_{i}(t+1)\Vert^2 = \Vert \bw_{i}(t) \Vert^2 - 2\cdot(a_{i}(t) -  a_{i}(t+1))\cdot a_{i}(t) + \Vert\eta\cdot a_{i}(t)\nabla \hat\ell_{\xi_t}(\btheta(t))x_{\xi_t}\Vert^2 
\end{equation}

For any time $T\in \mathbb{N}^+$, summing up from $0$ to $T$ and we have:
\begin{equation}
        \Vert \bw_{i}(T)\Vert^2 = \Vert \bw_{i}(0) \Vert^2 - 2\cdot\sum_{j = 0}^{T-1}(a_{i}(j) -  a_i(j+1))\cdot a_{i}(j) + \sum_{j = 0}^{T-1}\Vert\eta\cdot a_{i}(j)\cdot \nabla \hat\ell_{\xi_j}(\btheta(j))\cdot \bx_{\xi_j}\Vert^2 
\end{equation}

Notice that 
\begin{align}
2\cdot \sum_{j = 0}^{T-1}(a_{i}(j) -  a_{i}(j+1))\cdot a_{i}(j)&= \sum_{j = 0}^{T-1}(a_{i}(j)^2 -  2\cdot a_{i}(j)a_{i}(j+1) + a_{i}(j+1)^2) +a_{i}(0)^2-a_{i}(T)^2\\
&=\sum_{j = 0}^{T-1}(a_{i}(j)-a_{i}(j))^2+a_{i}(0)^2-a_{i}(T)^2
\end{align}

Thus we have
\begin{align*}
        \Vert \bw_{i}(T)\Vert^2 &= \Vert \bw_{i}(0) \Vert^2 - \sum_{j = 0}^{T-1}(a_{i}(j)-a_{i}(j+1))^2-a_{i}(0)^2+a_{i}(T)^2 + \sum_{j = 0}^{T-1}\Vert\eta\cdot \bw_{i}(j)\cdot \nabla \hat\ell_{\xi_j}(\btheta(j))\cdot x_{\xi_j}\Vert^2 \\
        &=\Vert \bw_{i}(0) \Vert^2 - \sum_{j = 0}^{T-1}(\Vert a_{i}(j)-a_{i}(j+1)\Vert^2 - \Vert \bw_{i}(j)-\bw_{i}(j+1)\Vert^2) -a_i(0)^2+a_{i}(T)^2\\
        &=\Vert \bw_{i}(0) \Vert^2 - \sum_{j = 0}^{T-1}\eta^2\cdot\nabla \hat\ell_{\xi_j}(\btheta(j))^2\{(\bx_{\xi_j}^\top \cdot \bw_{i}(j))^2 -  a_{i}(j)^2\cdot\Vert \bx_{\xi_j}\Vert^2\}-a_{i}(0)^2+a_{i}(T)^2
\end{align*}

Let $\Delta W_i(j) = -\nabla \hat\ell_{\xi_j}(\btheta(j))^2\cdot\{(\bx_{\xi_j}^\top \cdot \bw_{i}(j))^2 -  a_{i}(j)^2\cdot\Vert \bx_{\xi_j}\Vert^2\}$. Since $a_i(j)$ is small with high probability, $\Delta W_i(j)$ almost dominates the change of $\Vert \bw_i\Vert^2$ at every step. We have
\begin{equation}
    \Vert \bw_{i}(T)\Vert^2 =\Vert \bw_{i}(0) \Vert^2 + \eta^2\cdot\sum_{j = 0}^{T-1}\Delta W_i(j) -a_{i}(0)^2+a_{i}(T)^2
\end{equation}

\begin{lemma}[Progressively diminishing at each step]\label{norm decay each step}
    Suppose \cref{cond:main}~(A1-2, 4-6) holds and consider the update rule in \cref{passage:theta update}. Given the model is still under the lazy regime at step $T$, then with probability at least $1-O(\frac{1}{m})$, for all the iterative steps $j\leq T_1$ and for every $i\in[m]$: 
    \begin{enumerate}[topsep=0em,leftmargin=0.15in]
        \item $\Delta W_i(j) \leq 0$ with probability at least $1 - \dfrac{\rho}{m^{1/8}}$.
        \item $\Delta W_i(j) \leq -(\dfrac{\sigma}{4})^2$ with probability at least $ \dfrac{1}{4}$.
        \item $\Delta W_i(j) > 0$ with probability at most $\dfrac{\rho}{m^{1/8}}$.
        \item $\Delta W_i(j)\leq O(1)$.
    \end{enumerate}
    where $\rho = \dfrac{2\sqrt{d}}{\sqrt{\pi}}$ is a constant. \label{norm deacy every step}
\end{lemma}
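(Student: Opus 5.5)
The natural starting point is the factorization
\[
\Delta W_i(j) = -\nabla \hat\ell_{\xi_j}(\btheta(j))^2\Bigl\{(\bx_{\xi_j}^\top \bw_i(j))^2 - a_i(j)^2\|\bx_{\xi_j}\|^2\Bigr\}.
\]
The prefactor is nonnegative, so the sign of $\Delta W_i(j)$ is decided by the bracket, and its size by the bracket times the squared residual. I would first condition on the high-probability event from \cref{B_0} and \cref{bound lemma}. On that event $\|\bw_i(0)\|\le m^{1/12}/4$, $|a_i(j)|\le m^{-1/4}$, and $|\nabla\hat\ell_{\xi_j}(\btheta(j))|=O(m^{1/4})$ for all $j\le T_1$ (here $T_1$ is of the same order as $T_0$, using $\sqrt{\log m}\le\log m$). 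The lazy regime also gives $\|\bw_i(j)-\bw_i(0)\|\le m^{-1/2}$. All remaining probabilities are then over the fresh sample $\bx_{\xi_j}$ and the fresh noise $\epsilon_j$, which are independent of $\mathcal{F}_j$.

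\textbf{Items 1 and 3 (sign).} The bracket is negative only if $|\bx_{\xi_j}^\top\bw_i(j)|<|a_i(j)|\,\|\bx_{\xi_j}\|\le m^{-1/4}C_{data}$. Conditionally on $\mathcal{F}_j$, the variable $\bx^\top\bw_i(j)$ is $\mathcal N(0,\|\bw_i(j)\|^2)$. By A6 and \cref{Concentration of the norm}, $\|\bw_i(0)\|$ is bounded below by a constant (about $2/3$), and the lazy regime preserves this lower bound. The Gaussian small-ball bound $\Pr[|Z|<\delta]\le 2\delta/(\sqrt{2\pi}\,s)$ then gives a probability of $O(\sqrt d\, m^{-1/4})$. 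The constants are absorbed into $\rho=2\sqrt d/\sqrt\pi$, and this is comfortably at most $\rho m^{-1/8}$. Item 3 is the complement of item 1, since $\Delta W_i(j)>0$ requires the bracket to be negative.

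\textbf{Item 4 (upper bound).} When $\Delta W_i(j)>0$, we have
\[
\Delta W_i(j)\le \nabla\hat\ell^2\, a_i(j)^2\|\bx\|^2 \le O(m^{1/2})\, m^{-1/2} C_{data}^2 = O(1).
\]

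\textbf{Item 2 (quantitative decrease).} This is the main obstacle, because it needs both factors to be bounded away from zero simultaneously. I would write the residual as $r+\epsilon_j$ with $r=(\btheta(j)-\btheta^\star)^\top\bx_{\xi_j}$, where $\epsilon_j=\pm\sigma$ is independent of $r$ with equal probability. Whatever the value of $r$, at least one sign choice gives $|r-\epsilon_j|\ge\sigma$. Hence $\Pr[\nabla\hat\ell^2\ge\sigma^2\mid \bx]\ge 1/2$; this is exactly where label noise enters. Independently of $\epsilon_j$, I need $(\bx^\top\bw_i(j))^2-a_i(j)^2\|\bx\|^2\ge 1/16$ with probability at least $1/2$. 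Since $\bx^\top\bw_i(j)$ is Gaussian with standard deviation at least about $2/3$, and $a_i^2\|\bx\|^2=O(m^{-1/2})$, I would use the bound $\Pr[|Z|\ge s/2]\ge 1/2$ together with the numerical constraints in the footnote to $C$ (the term involving $1/2-3/(4\sqrt\pi)$ appears tailored to this step). Multiplying the two independent probabilities gives $\Delta W_i(j)\le-\sigma^2/16$ with probability at least $1/4$.

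The delicate points are the following. First, pinning the constant so that the event probability is genuinely at least $1/2$. Second, handling the dependence of $r$ on $\bx$; this is harmless because I condition on $\bx$ and use only the randomness of $\epsilon_j$. Finally, a union bound over $i\in[m]$ applied to the conditioning events keeps the overall probability at $1-O(1/m)$.
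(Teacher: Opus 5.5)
Your argument follows the paper's proof almost step for step. You use the same factorization of $\Delta W_i(j)$, and you condition on $|a_i(j)|\le m^{-1/4}$ and $|\nabla\hat\ell_{\xi_j}(\btheta(j))|=O(m^{1/4})$. You then use anti-concentration of $\bx_{\xi_j}^\top\bw_i(j)$ for items 1 and 3, the crude bound for item 4, and the noise-sign argument for item 2. You state that argument more accurately than the paper, whose text says ``same sign''. The remaining differences are cosmetic:
\begin{itemize}
\item The paper bounds the density of one coordinate of the uniform direction $\bx_{\xi_j}/\|\bx_{\xi_j}\|$. This gives exactly $\rho$ with no $C_{data}$ factor and needs only rotational invariance, so it does not clash with the bounded inputs of A5. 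You use a Gaussian small-ball bound instead.
\item In item 2 you multiply $\frac12\cdot\frac12$ directly. The paper conditions on the item-1 event and gets $(1-\rho m^{-1/8})\cdot\frac12\cdot(1-\frac{3}{4\sqrt\pi})$, so you avoid the footnote constraint on $m$.
\end{itemize}

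The gap is your claim that A6 and the norm-concentration lemma give $\|\bw_i(0)\|\gtrsim 2/3$, which you need for every $i\in[m]$ simultaneously. A6 only makes the deviation probability at most $\frac12$ for a single vector. Even the sharp tail gives a per-neuron failure probability $e^{-\Theta(d)}$ that does not shrink with $m$. With $d$ fixed and $m$ large, many neurons violate the bound; in fact $\min_i\|\bw_i(0)\|$ is of order $m^{-1/d}$. Your closing union bound does not cover this event.

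For items 1 and 3 this is repairable thanks to the slack between $m^{-1/4}$ and $m^{-1/8}$. A polynomially small bound such as $\|\bw_i(j)\|\ge m^{-1/16}$ suffices. It holds for all $i$ with probability $1-O(1/m)$ via $\Pr[\|\sqrt d\,\bw_i(0)\|\le r]\le r^d/(2^{d/2}\Gamma(d/2+1))$ and a union bound, once $d$ exceeds a moderate constant.

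For item 2 it is not repairable this way. The bracket is at most $C_{data}^2\|\bw_i(j)\|^2$, so for a small-norm neuron the event $\Delta W_i(j)\le-\sigma^2/16$ requires $|\nabla\hat\ell_{\xi_j}(\btheta(j))|\ge\sigma/(4C_{data}\|\bw_i(j)\|)\gg\sigma$. The noise-sign trick does not supply a residual that large. Item 2 therefore has to be restricted to neurons whose initial norm is of constant order, or its threshold must scale like $\sigma^2\|\bw_i(j)\|^2$.

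The paper's proof carries the same hidden assumption. Its step $\Pr[|\bx_{\xi_j}^\top\bw_i(j)|/\|\bx_{\xi_j}\|\le\frac{3}{8\sqrt d}]\le\rho\cdot\frac{3}{8\sqrt d}$ tacitly takes $\|\bw_i(j)\|\ge 1$, and its item-1 bound needs $\|\bw_i(j)\|\ge m^{-1/8}$. You should state this as an explicit restriction rather than present it as a consequence of A6.
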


\textbf{Proof.} 

By \cref{bound lemma} (ii), with probability at least $1-O(\frac{1}{m})$, for all $i\in[m]$ and all step $t\leq T_1$, $|a_i(t)|\leq \frac{1}{m^{1/4}}$, i.e. the event $C_j$ happens. All the "Pr" in this lemma conditioned on $C_j$. 

For each $j < T$, we have \[\Delta W_i(j) > 0 \iff (\bx_{\xi_j}^\top\cdot \bw_{i}(j))^2 < a_{i}(j)^2\cdot \Vert \bx_{\xi_j}\Vert^2\iff(\dfrac{\bx_{\xi_j}^\top}{\Vert \bx_{\xi_j}\Vert}\cdot \bw_i(j))^2 < a_{i}(j)^2\]
i.e. 
\begin{equation}
    \left\lvert\boldsymbol{x}^\top\cdot \bw_{i}(j)|<|a_{i}(j)\right\lvert
\end{equation}

where $\boldsymbol{x} = \frac{\bx_{\xi_j}}{\Vert \bx_{\xi_j}\Vert} $ follows a uniform distribution on the n-dimensional unit sphere. 

Let $x_1$ denotes the element in the first dimension of $\boldsymbol{x}$. By symmetry, we have
\begin{equation}
   \operatorname{Pr}[\left\lvert\boldsymbol{x}^\top\cdot \bw_{i}(j)|<|a_{i}(j)\right\lvert] = \operatorname{Pr}[-\dfrac{\Vert a_{i}(j)\Vert}{\Vert \bw_{i}(j)\Vert} < x_1 < \dfrac{\Vert a_{i}(j)\Vert}{\Vert \bw_{i}(j)\Vert}]
\end{equation}
The density of $x_1$ is $f(x_1) = \frac{\Gamma\left(\frac{d}{2}\right)}{\sqrt{\pi} \, \Gamma\left(\frac{d-1}{2}\right)} \left(1 - x_1^2\right)^{\frac{d-3}{2}}$, where $\Gamma$ denotes the gamma function.

Then we have
\begin{align*}
     \operatorname{Pr}[-\dfrac{\Vert a_{i}(j)\Vert}{\Vert \bw_{i}(j)\Vert} < x_1 < \dfrac{\Vert a_{i}(j)\Vert}{\Vert \bw_{i}(j)\Vert}] &= \int_{x_1 =-\frac{\Vert a_{i}(j)\Vert}{\Vert \bw_{i}(j)\Vert}}^\frac{\Vert a_{i}(j)\Vert}{\Vert \bw_{i}(j)\Vert}f(x_1)dx_1\\
     &=\int_{x_1 =-\frac{\Vert a_{i}(j)\Vert}{\Vert \bw_{i}(j)\Vert}}^\frac{\Vert a_{i}(j)\Vert}{\Vert \bw_{i}(j)\Vert}\frac{\Gamma\left(\frac{d}{2}\right)}{\sqrt{\pi} \, \Gamma\left(\frac{d-1}{2}\right)} \left(1 - x_1^2\right)^{\frac{d-3}{2}}dx_1 \\
     &\leq  \int_{x_1 =-\frac{\Vert a_{i}(j)\Vert}{\Vert \bw_{i}(j)\Vert}}^\frac{\Vert a_{i}(j)\Vert}{\Vert \bw_{i}(j)\Vert}\frac{\Gamma\left(\frac{d}{2}\right)}{\sqrt{\pi} \, \Gamma\left(\frac{d-1}{2}\right)} dx_1 \\
     &= 2\cdot\dfrac{\Vert a_{i}(j)\Vert}{\Vert \bw_{i}(j)\Vert}\cdot \frac{\Gamma\left(\frac{d}{2}\right)}{\sqrt{\pi} \, \Gamma\left(\frac{d-1}{2}\right)}\\ 
     &\leq \frac{2\sqrt{d}}{\sqrt{\pi}}\cdot\dfrac{\Vert a_{j}(j)\Vert}{\Vert \bw_{i}(j)\Vert} \\
     &\leq \dfrac{\rho}{m^{1/8}}
\end{align*}

The second-to-last inequality is because $\frac{\Gamma\left(\frac{d}{2}\right)}{\Gamma\left(\frac{d-1}{2}\right)}\leq \sqrt{d}$. So we retain $\Delta W_i(j) \leq 0$ with probability at least $1 - \frac{\rho}{m^{1/8}}$
and $\Delta W_i(j) > 0$ with probability at most $\frac{\rho}{m^{1/8}}$.

By Lemma \ref{bound lemma} (i) , $\nabla \hat\ell_{\xi_j}(\btheta(j))\leq O(m^{1/4})$. So with \cref{cond:main}A(5), $\Vert \bx_{\xi_j}\Vert\leq C_{data}$  we have
\begin{align*}
    \Delta W_i(j) &= -\nabla \hat\ell_{\xi_j}(\btheta(j))^2\cdot(\bx_{\xi_j}^\top \cdot \bw_{i}(j))^2 + \nabla \hat\ell_{\xi_j}(\btheta(j))^2\cdot(a_{i}(j)^2\cdot\Vert \bx_{\xi_j}\Vert^2)\\[2mm]
    &\leq \nabla \hat\ell_{\xi_j}(\btheta(j))^2\cdot(a_{i}(j)^2\cdot\Vert \bx_{\xi_j}\Vert^2) \\[2mm]
    &\leq \dfrac{O(m^{1/4})^2\cdot C_{data}^2}{m^{1/2}}  = O(1)
\end{align*}
Finally, we prove $\operatorname{Pr}[\Delta W_i(j) \leq -(\dfrac{\sigma}{4})^2]\geq \dfrac{1}{4}$. We have 
\begin{align*}
&\scalebox{0.8}{}{\operatorname{Pr}[\Delta W_i(j) \leq -(\dfrac{\sigma}{4})^2]=\operatorname{Pr}[\nabla \hat\ell_{\xi_j}(\btheta(j))^2\{(\bx_{\xi_j}^\top \cdot \bw_{i}(j))^2 -  a_{i}(j)^2\cdot\Vert \bx_{\xi_j}\Vert^2\}\geq (\dfrac{\sigma}{4})^2]} \\
    &=\scalebox{0.5}{}{\operatorname{Pr}[\nabla \hat\ell_{\xi_j}(\btheta(j))^2\{(\bx_{\xi_j}^\top \cdot \bw_{i}(j))^2 -  a_{i}(j)^2\cdot\Vert \bx_{\xi_j}\Vert^2\}\geq (\dfrac{\sigma}{4})^2|\ \bx_{\xi_j} : |\dfrac{\bx_{\xi_j}^\top}{\Vert \bx_{\xi_j}\Vert}\cdot \bw_{i}(j)|<|a_{i}(j)|] \cdot \operatorname{Pr}[|\bx_{\xi_j}^\top\cdot \bw_{i}(j)|<|a_{i}(j)|]}\\
    &+\scalebox{0.5}{}{\operatorname{Pr}[\nabla \hat\ell_{\xi_j}(\btheta(j))^2\{(\bx_{\xi_j}^\top \cdot \bw_{i}(j))^2 -  a_{i}(j)^2\cdot\Vert \bx_{\xi_j}\Vert^2\geq (\dfrac{\sigma}{4})^2|\ \bx_{\xi_j}: |\dfrac{\bx_{\xi_j}^\top}{\Vert \bx_{\xi_j}\Vert}\cdot \bw_{i}(j)|\geq|a_{i}(j)|] \cdot \operatorname{Pr}[|\bx_{\xi_j}^\top\cdot \bw_{i}(j)|\geq|a_{i}(j)|]}   \\
    &\geq \scalebox{0.5}{}{\operatorname{Pr}[\nabla \hat\ell_{\xi_j}(\btheta(j))^2\{(\bx_{\xi_j}^\top \cdot \bw_{i}(j))^2 -  a_{i}(j)^2\cdot\Vert \bx_{\xi_j}\Vert^2\}\geq(\dfrac{\sigma}{4})^2|\ \bx_{\xi_j}:  |\dfrac{\bx_{\xi_j}^\top}{\Vert \bx_{\xi_j}\Vert}\cdot \bw_{i}(j)|\geq|a_{i}(j)|] \cdot \operatorname{Pr}[|\bx_{\xi_j}^\top\cdot \bw_{i}(j)|\geq|a_{i}(j)|]} \\
     &\geq \scalebox{0.7}{}{(1-\dfrac{\rho}{m^{1/8}})\cdot\operatorname{Pr}[\nabla \hat\ell_{\xi_j}(\btheta(j))^2\{(\bx_{\xi_j}^\top \cdot \bw_{i}(j))^2 -  a_{i}(j)^2\cdot\Vert \bx_{\xi_j}\Vert^2\}\geq\ (\dfrac{\sigma}{4})^2|\ \bx_{\xi_j}\ :|\dfrac{\bx_{\xi_j}^\top}{\Vert \bx_{\xi_j}\Vert}\cdot \bw_{i}(j)|\geq|a_{i}(j)|]}  
\end{align*}
Since $\nabla \hat\ell_{\xi_j}(\btheta(j))^2 = ((\btheta(j) - \btheta^*)\bx_{\xi_j} - \epsilon_j)^2$ and $\epsilon_j$ is chosen uniformly,the probability that $(\btheta(j) - \btheta^*)\bx_{\xi_j}$ and $\epsilon_j$ have the same sign is at least  $\frac{1}{2}$. If the two elements have the same sign, then $\nabla \hat\ell_{\xi_j}(\btheta(j))^2 \geq \sigma^2 $. So we have 
\begin{align*}
        &\operatorname{Pr}[\nabla \hat\ell_{\xi_j}(\btheta(j))^2\{(\bx_{\xi_j}^\top \cdot \bw_{i}(j))^2 -  a_{i}(j)^2\cdot\Vert \bx_{\xi_j}\Vert^2\}\geq\ (\dfrac{\sigma}{4})^2|\ \bx_{\xi_j}\ \ s.t.\ \ |\dfrac{\bx_{\xi_j}^\top}{\Vert \bx_{\xi_j}\Vert}\cdot \bw_{i}(j)|\geq|a_{i}(j)|]  \\
       &\geq\dfrac{1}{2}\cdot\operatorname{Pr}[\{(\bx_{\xi_j}^\top \cdot \bw_{i}(j))^2 -  a_{i}(j)^2\cdot\Vert \bx_{\xi_j}\Vert^2\}\geq (\dfrac{1}{4})^2|\ \bx_{\xi_j}\ \ s.t.\ \ |\dfrac{\bx_{\xi_j}^\top}{\Vert \bx_{\xi_j}\Vert}\cdot \bw_{i}(j)|\geq|a_{i}(j)|]  \\
        &\geq\dfrac{1}{2}\cdot\operatorname{Pr}[\{(\bx_{\xi_j}^\top \cdot \bw_{i}(j))^2 -  \dfrac{1}{m^{1/4}}\cdot\Vert \bx_{\xi_j}\Vert^2\}\geq (\dfrac{1}{4})^2|\ \bx_{\xi_j}\ \ s.t.\ \ |\dfrac{\bx_{\xi_j}^\top}{\Vert \bx_{\xi_j}\Vert}\cdot \bw_{i}(j)|\geq|a_{i}(j)|]    \quad\text{($a_i(j)^2\leq m^{1/4}$)}\\
          &=\dfrac{1}{2}\cdot\operatorname{Pr}[\left\lvert\dfrac{\bx_{\xi_j}^\top }{\Vert \bx_{\xi_j}\Vert}\cdot \bw_{i}(j)\right\lvert\geq \sqrt{(\dfrac{1}{4\cdot\Vert \bx_{\xi_j}\Vert})^2+\dfrac{1}{m^{1/4}}}|\ \bx_{\xi_j}\ \ s.t.\ \ |\dfrac{\bx_{\xi_j}^\top}{\Vert \bx_{\xi_j}\Vert}\cdot \bw_{i}(j)|\geq|a_{i}(j)|]  \\
          &=\dfrac{1}{2}\cdot\operatorname{Pr}[\left\lvert\dfrac{\bx_{\xi_j}^\top }{\Vert \bx_{\xi_j}\Vert}\cdot \bw_{i}(j)\right\lvert\geq \sqrt{(\dfrac{1}{4\cdot\Vert \bx_{\xi_j}\Vert})^2+\dfrac{1}{m^{1/4}}}\ ]  \quad\quad(|a_i(j)|\leq\dfrac{1}{m^{1/8}}\leq \sqrt{\dfrac{1}{\Vert \bx_{\xi_j}\Vert^2}+\dfrac{1}{m^{1/4}}}\ )\\
         &\geq \dfrac{1}{2}\cdot\operatorname{Pr}[\left\lvert\dfrac{\bx_{\xi_j}^\top }{\Vert \bx_{\xi_j}\Vert}\cdot \bw_{i}(j)\right\lvert\geq \sqrt{(\dfrac{1}{4\cdot\Vert \bx_{\xi_j}\Vert})^2\cdot\dfrac{3}{2}}\ ] 
\end{align*}
By \cref{cond:main}A(6), we have
\begin{equation}
    \operatorname{Pr}[\left\lvert \| \bx_{i}\| - \sqrt{d}\right\lvert \geq \dfrac{\sqrt{d}}{3}]\leq 2\exp(-\dfrac{4\cdot c\cdot d}{9K^4})\leq\dfrac{1}{2} \label{d is not small}
\end{equation}
which implies $\operatorname{Pr}[\|\bx_{\xi_j}\|\geq \frac{2}{3\sqrt{d}}] \geq \frac{2}{3}$. So we have
\begin{align*}
    \operatorname{Pr}[\left\lvert\dfrac{\bx_{\xi_j}^\top }{\Vert \bx_{\xi_j}\Vert}\cdot \bw_{i}(j)\right\lvert\geq \sqrt{(\dfrac{1}{4\cdot\Vert \bx_{\xi_j}\Vert})^2\cdot\dfrac{3}{2}}\ ] &\geq \operatorname{Pr}[\left\lvert\dfrac{\bx_{\xi_j}^\top }{\Vert \bx_{\xi_j}\Vert}\cdot \bw_{i}(j)\right\lvert\geq \dfrac{3}{2\sqrt{d}}\cdot \dfrac{1}{4\cdot\Vert \bx_{\xi_j}\Vert}] \\
    &= 1 - \operatorname{Pr}[\left\lvert\dfrac{\bx_{\xi_j}^\top }{\Vert \bx_{\xi_j}\Vert}\cdot \bw_{i}(j)\right\lvert\leq \dfrac{3}{8\sqrt{d}}] \\
     &\geq 1 - \rho\cdot\dfrac{3}{8\sqrt{d}} \\
     &=  1 - \dfrac{2\sqrt{d}}{\sqrt{\pi}}\cdot\dfrac{3}{8\sqrt{d}}= 1-\dfrac{3}{4\sqrt{\pi}} 
\end{align*}

By \cref{cond:main}(A1), $m\geq ((1-\dfrac{3}{4\sqrt{\pi}})\cdot\rho/(\dfrac{1}{2}-\dfrac{3}{4\sqrt{\pi}}))^8$. Therefore, we have
\begin{align*}
    \operatorname{Pr}[\Delta W_i(j) \leq -(\dfrac{\sigma}{4})^2] &\geq (1-\dfrac{\rho}{m^{1/8}})\cdot \dfrac{1}{2}\cdot(1-\dfrac{3}{4\sqrt{\pi}})  \\
    &= \dfrac{1}{4} + \dfrac{1}{2}\cdot((\dfrac{1}{2}-\dfrac{3}{4\sqrt{\pi}})-(1-\dfrac{3}{4\sqrt{\pi}})\cdot\dfrac{\rho}{m^{1/8}})\\
    &\geq \dfrac{1}{4}
\end{align*}
which completes the proof. \quad\(\square\)

\begin{theorem}[Escaping the lazy regime]
    Suppose \cref{cond:main}~(A1-2, 4-6) holds and consider the update rule in \cref{passage:theta update}.
    With probability at least $1-O(\frac{1}{m})$, all the neurons $\bw_{i}$ ($i\in[m]$) escape from the lazy regime at time $T_1 = \frac{384\sqrt{\log m}}{\sigma^2\eta^2\sqrt{m}}$. \label{Phase I main theorem}
\end{theorem}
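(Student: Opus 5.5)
We argue by contradiction. Suppose that some neuron $\bw_i$ is still in the lazy regime (\cref{def:lazy_regime}) at every step $t\le T_1$. Then the hypotheses of \cref{bound lemma} and \cref{norm decay each step} hold on the whole window $[0,T_1]$, and we work on the intersection of $B_0$, the events $C_t$, and the events of \cref{norm decay each step}. This intersection has probability at least $1-O(\frac1m)$. The starting point is the norm identity derived above:
\[
\Vert \bw_{i}(T_1)\Vert^2 =\Vert \bw_{i}(0) \Vert^2 + \eta^2\sum_{j = 0}^{T_1-1}\Delta W_i(j) -a_{i}(0)^2+a_{i}(T_1)^2 .
\]
By \cref{bound lemma}(ii), the boundary terms satisfy $|a_i(T_1)^2-a_i(0)^2|\le m^{-1/2}$, so they are negligible. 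The whole argument therefore reduces to showing that $\eta^2\sum_j\Delta W_i(j)$ is substantially negative.

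\textbf{Lower bound on the drift.} Write $\Delta W_i(j)=\Delta W_i(j)^- + \Delta W_i(j)^+$ for its negative and positive parts. By \cref{norm decay each step}, at each step we have $\Delta W_i(j)\le-(\sigma/4)^2$ with conditional probability at least $1/4$. We also have $\Delta W_i(j)>0$ with conditional probability at most $\rho m^{-1/8}$, and then $\Delta W_i(j)\le O(1)$. Since $\bx_{\xi_j},\epsilon_j$ are fresh given $\mathcal F_j$, these bounds hold conditionally on the past. Hence
\[
\mathbb E[\Delta W_i(j)\mid\mathcal F_j]\le -\tfrac{\sigma^2}{64}+O(\rho m^{-1/8})\le-\tfrac{\sigma^2}{128}.
\]
To pass from expectation to a high-probability statement, I would count the steps with $\Delta W_i(j)\le-(\sigma/4)^2$. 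This count stochastically dominates a Binomial$(T_1,1/4)$, so Chernoff's inequality (\cref{Chernoff's inequality}) shows it is at least $T_1/8$ except with probability $e^{-\Omega(T_1)}$. Similarly, the number of positive steps is at most $2\rho m^{-1/8}T_1+O(\log m)$, again by Chernoff. Combining the two counts gives
\[
\eta^2\sum_{j<T_1}\Delta W_i(j)\le -\eta^2 T_1\Big(\tfrac{\sigma^2}{128}-O(m^{-1/8})\Big)\le -\tfrac{\sigma^2\eta^2T_1}{256}.
\]
We then take a union bound over $i\in[m]$. This costs only $m e^{-\Omega(T_1)}$, which is $O(1/m)$ because $T_1\gg\log m$ under (A1)–(A2).

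\textbf{Contradiction with laziness.} Plugging in $T_1=\frac{384\sqrt{\log m}}{\sigma^2\eta^2\sqrt m}$ gives
\[
\Vert\bw_i(T_1)\Vert^2\le\Vert\bw_i(0)\Vert^2-\tfrac{3}{2}\sqrt{\log m/m}+m^{-1/2}.
\]
On the other hand, laziness gives $\Vert\bw_i(T_1)\Vert\ge\Vert\bw_i(0)\Vert-m^{-1/2}$. Squaring this yields
\[
\Vert\bw_i(T_1)\Vert^2\ge\Vert\bw_i(0)\Vert^2-2m^{-1/2}\Vert\bw_i(0)\Vert .
\]
The two bounds are incompatible once $\Vert\bw_i(0)\Vert<\tfrac12\sqrt{\log m}$ (up to constants). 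To guarantee this, I would restrict to neurons satisfying $\Vert\bw_i(0)\Vert\le 1+o(1)$, which holds with high probability by the concentration in \cref{B_0} since $\bw_i(0)$ has norm concentrated at $1$. The constant $384$ is what leaves room for this comparison. Hence every neuron must have left the lazy regime by time $T_1$.

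\textbf{Main obstacle.} The hard step is the second one. The bounds in \cref{norm decay each step} are stated per step and are conditional on still being lazy and on $C_j$. Turning them into a single trajectory-level bound requires a stopping-time or supermartingale argument: stop at the first exit from laziness, then apply Azuma or Freedman to $\sum_j(\Delta W_i(j)-\mathbb E[\Delta W_i(j)\mid\mathcal F_j])$, whose increments are bounded by $O(1)$. I would try Azuma first. Its deviation of order $\sqrt{T_1\log m}$ is $o(T_1)$, so it is dominated by the linear drift.

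A second difficulty is the $\Vert\bw_i(0)\Vert$-dependence in the final comparison. If $B_0$'s crude bound $m^{1/12}$ were all that is available, the contradiction would fail. So the argument genuinely needs the sharper concentration of $\Vert\bw_i(0)\Vert$ around $1$.
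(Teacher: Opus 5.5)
Your skeleton matches the paper's: contradiction via the norm identity, a Chernoff count of at least $T_1/8$ steps with $\Delta W_i(j)\le-(\sigma/4)^2$, and a comparison with laziness. Your handling of the conditioning is more careful than the paper's: you use stochastic domination by a Binomial$(T_1,1/4)$ along a stopped process, where the paper simply asserts that the per-step events $\Theta_j$ are independent with probability exactly $1/4$. The same is true of your Chernoff bound on the number of positive steps.

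The gap is in the final comparison. You assert that every neuron has $\|\bw_i(0)\|\le 1+o(1)$ with probability $1-O(1/m)$, and this is false here. Since $d$ is a constant, $\|\bw_i(0)\|^2\sim\chi^2_d/d$ has order-one fluctuations, and $\max_{i\le m}\|\bw_i(0)\|$ grows like $\sqrt{2\log m/d}$. After a union bound, the most you can certify is $\max_i\|\bw_i(0)\|\le 1+O(\sqrt{\log m/d})$. Restricting to the neurons that happen to satisfy your bound would not prove the statement for every $i$.

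You also don't need concentration at $1$: the paper uses $\max_i\|\bw_i(0)\|\le\sqrt{2\log m}$. With that bound, however, your constants no longer close. Laziness allows $\|\bw_i\|^2$ to drop by up to $2m^{-1/2}\|\bw_i(0)\|\le 2\sqrt{2}\sqrt{\log m/m}$. After your weakening to $\sigma^2\eta^2T_1/256$, your guaranteed drift is only $\tfrac32\sqrt{\log m/m}$, which is smaller, so no contradiction follows.

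The fix is not to halve. The positive steps cost only a relative $O(m^{-1/8})+O(\log m/T_1)$, so
\[
\eta^2\sum_{j<T_1}\Delta W_i(j)\le-(1-o(1))\,\frac{\sigma^2\eta^2T_1}{128}=-(3-o(1))\sqrt{\log m/m}.
\]
Since $3>2\sqrt2$, this exceeds $2\sqrt{2}\sqrt{\log m/m}+m^{-1/2}$ (the laziness allowance plus the $a_i$ boundary terms) for large $m$. That margin is exactly what $384=3\cdot 128$ is calibrated for.

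One further point, which you inherit from the paper rather than introduce: \cref{bound lemma} and \cref{norm decay each step} are stated under the hypothesis that the whole network is lazy. Assuming only that neuron $i$ stays lazy therefore does not put you in their setting. Your stopping-time formulation is the rigorous version of the argument. But if you stop at the first exit of any neuron, what it proves is that the network leaves the lazy regime by $T_1$, not that every neuron does.
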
 

\noindent
\textbf{Proof.} Let event $C_t = \{|a_i(t_1)|\leq m^{-1/4} \text{\ for all $t_1\leq t$ and for all $i\in[m]$}\}$.  By \cref{bound lemma} (ii), with probability at least $1-O(\frac{1}{m})$, the event $C_{T_1}$ happens. We assume $C_{T_1}$ happens and all the "Pr" in this theorem conditioned on $C_j$. 

In the following, we will provide a proof by contradiction. Assume with probability at least $O(\frac{1}{m})$, there exists some neurons $\bw_i$ ($i\in[m]$) s.t. $\|\bw_i(t) - \bw_i(0)\|\leq \frac{1}{\sqrt{m}}$ holds for all $T_1$ steps. By  \cref{Concentration of the norm}, we have
\begin{equation}
    \operatorname{Pr} \left\{ \left\lvert\| \bw_{i}(0) \| - \sqrt{2\log m} \geq t \right\lvert\right\} \leq 2 \exp \left( - \frac{c\cdot\log m}{k^4} \right) = O(\dfrac{1}{m})
\end{equation}
where $k$ and \( c \) is an absolute constant. 

Using union bound, with probability at least $1-O(\frac{1}{m})$, for these neurons stuck in the lazy regime,  $\Vert \bw_i(0)\Vert\leq \sqrt{2\log m}$.

We view $\Delta W_i(j)$ as a random variable.  We use $\Omega$ to denote the whole sample space. Let event \[\Omega_i = \{\omega\in\Omega\ |\Delta W_i(j)(\omega)\leq -(\dfrac {\sigma}{4})^2 \} \]

For any $\omega\in\Omega$, by  \cref{norm deacy every step}, we have $\operatorname{Pr}[\Omega_i]\geq\frac{1}{4}$. Since the event $C_{T_1}$ happens and $\bw_{i}$ stays in the lazy regime, the estimation of the lower bound of $\Delta W_i(j)$ only depends on the randomness of $\bx_{\xi_j}$. Thus, there exists a subset of $\Omega_i$  we denote $\Theta_i$ such that $$\Theta_i  = \{\omega\in\Omega_i |\Delta W_i(\omega)\leq  -(\sigma/4)^2, \text{$\|\bw_{i}(j) - \bw_{i}(0)\|\leq\frac{1}{\sqrt{m}},|a_{i}(j)|\leq 1/m^{1/8}$}\}$$
and $\operatorname{Pr}[\Theta_i]=\frac{1}{4}$. For $\Theta_i$ ($i\in [T_1-1]$), the random  randomness only depends on $\bx_{\xi_t}$. Therefore, $\Theta_0,\Theta_1,\cdots,\Theta_{T_1-1}$ are mutual independent events.

Then we define indicator random variable $X_0, X_1,\cdots,X_{T_1-1}$ as follow: for every $\omega\in\Omega$, for any $j\in [T_1-1]$, 
\[X_j(\omega) = \left\{
		\begin{array}{l}
                   1 \quad\text{if $\omega\in\Theta_i$.}\\
                   0 \quad\text{otherwise.}\\     
		\end{array}
		\right.\]
So we have $X_0,X_1,\cdots,X_{T_1-1}$ are independent and $\operatorname{Pr}[X_j(\omega) = 1] = \frac{1}{4}$. And we have
\begin{equation}
    \mathbb{E}[\sum_{j=0}^{T_1-1}X_j] = \sum_{j=0}^{T_1-1}\mathbb{E}[X_j] = \dfrac{T_1}{4}
\end{equation}
Then we have
\begin{align*}
    &\operatorname{Pr}[\eta^2\cdot\sum_{j=0}^{T_1-1}\Delta W_i(j)\leq \dfrac{-2\sqrt{2\log m}}{\sqrt{m}}] \\
    &= \operatorname{Pr}[\eta^2\cdot(\sum_{j=0}^{T_1-1}\Delta W_i(j)\cdot \mathbb{I}[\Delta W_i(j) > 0] + \sum_{j=0}^{T_1-1}\Delta W_i(j)\cdot \mathbb{I}[\Delta W_i(j) \leq 0])\leq \dfrac{-2\sqrt{2\log m}}{\sqrt{m}}] \\
    &\geq \operatorname{Pr}[\eta^2\cdot(\sum_{j=0}^{T_1-1}\Delta W_i(j)\cdot O(\dfrac{1}{m^{1/8}}) + \sum_{j=0}^{T_1-1}\Delta W_i(j)\cdot \mathbb{I}[\Delta W_i(j) \leq -(\dfrac{\sigma}{4})^2])\leq \dfrac{-2\sqrt{2\log m}}{\sqrt{m}}] \\
    &\geq \operatorname{Pr}[\eta^2 \cdot T_1\cdot O(1)\cdot O(\dfrac{1}{m^{1/8}}) - \eta^2\cdot(\dfrac{\sigma}{4})^2\sum_{j=0}^{T-1}X_j\leq \dfrac{-2\sqrt{2\log m}}{\sqrt{m}}] \\
    &= \operatorname{Pr}[- \eta^2\cdot(\dfrac{\sigma}{4})^2\cdot\sum_{j=0}^{T_1-1}X_j\leq \dfrac{-2\sqrt{2\log m}}{\sqrt{m}} - o(\dfrac{1}{\sqrt{m}})] \quad(\text{$T_1\cdot\eta^2 = O(\dfrac{1}{\sqrt{m}}$}))\\
    &\geq \operatorname{Pr}[- \eta^2\cdot(\dfrac{\sigma}{4})^2\cdot\sum_{j=0}^{T_1-1}X_j\leq -\dfrac{3\sqrt{\log m}}{\sqrt{m}}]\\
    &= \operatorname{Pr}[\sum_{j=0}^{T_1-1}X_i \geq\dfrac{48\sqrt{\log m}}{\sigma^2\eta^2\sqrt{m}}] = \operatorname{Pr}[\sum_{j=0}^{T_1-1}X_i \geq\dfrac{T_1}{8}]\\
    &\geq 1 - \exp(-\dfrac{T_1}{4})\cdot(\dfrac{e\cdot T_1/4}{T_1/8})^{T_1/8} = 1-(\dfrac{2}{e})^{T_1/8}
\end{align*}
The last inequality is by \cref{Chernoff's inequality}. 

Since $T_1 = \frac{1}{\eta^2\cdot \sqrt{m}} \gg m$ and $a_i(T)^2\leq\frac{1}{\sqrt{m}}$, with probability at least $(1-O(\frac{1}{m}))\cdot(1-(\frac{2}{e})^{T_1/8}) = 1-O(\frac{1}{m})$, we have
\begin{align*}
     \Vert \bw_{i}(T)\Vert^2 - \Vert \bw_{i}(0) \Vert^2 &= \eta^2\cdot\sum_{j = 0}^{T-1}\Delta W_i(j) -a_{i}(0)^2 +a_{i}(T)^2 \\
    &\leq \eta^2\cdot\sum_{j = 0}^{T-1}\Delta W_i(j) + \dfrac{1}{\sqrt{m}} \\
    &\leq -2\dfrac{\sqrt{2\log m}}{\sqrt{m}} + \dfrac{1}{\sqrt{m}} \leq -\dfrac{2\sqrt{\log m}}{\sqrt{m}}
\end{align*}
Thus
\begin{align*}
    \Vert \bw_{i}(T)\Vert - \Vert \bw_{i}(0) \Vert &\leq -\dfrac{2\sqrt{\log m}}{\sqrt{m}}\cdot\dfrac{1}{\Vert \bw_{i}(T)\Vert + \Vert \bw_{i}(0) \Vert} \\
    &\leq -\dfrac{2\sqrt{\log m}}{\sqrt{m}}\cdot \dfrac{1}{\frac{1}{\sqrt{m}} + \sqrt{2\log m}} \\
     &< -\dfrac{2\sqrt{\log m}}{\sqrt{m}}\cdot \dfrac{1}{2\sqrt{\log m}} = -\dfrac{1}{\sqrt{m}}
\end{align*}
Which is a contradiction to the definition of lazy regime! Therefore, we complete the proof. \quad \(\square\)

\subsection{Algorithm 2: oscillation of the second layer within low range}
We design a new update rule as follow: for every neuron $\bw_i$ and $a_i$ at step $t$, we have
\begin{gather}
    \label{iter_w1_eta}
    \bw_{i}(t+1) = \bw_{i}(t) - \eta\cdot a_{i}(t)\cdot\dfrac{1}{n}\sum_{i=1}^n(\ba(t)^\top\bW(t)\bx_{i} - y_{i})\cdot \bx_{i} \\
    \label{iter_w2_eta}
    a_{i}(t+1) = a_{i}(t)  + \delta_i(t)
\end{gather}
where 
\begin{equation}
    \delta_i(t)=\left\{
\begin{aligned}
& -\eta^{0.25} \quad \text{if $a_{i}(t)= \eta^{0.25}$}\\
& \eta^{0.25} \quad\quad \text{if $a_{i}(t)= -\eta^{0.25}$} \\
& \text{$\sim\{-\eta^{0.25}, \eta^{0.25}\}$}
\ \ \ \ \text{if $a_{i}(t)= 0$}
\end{aligned}
\right.
\end{equation}
Besides, we design a new initialization for $\ba$: for every $i\in [m]$, we have 
\begin{equation}
    a_i(0)=\left\{
\begin{aligned}
& -\eta^{0.25} \quad \text{with probability 1/4}\\
& \eta^{0.25} \ \quad\quad \text{with probability 1/4} \\
& 0\quad\quad\quad\ \ \  \text{with probability 1/2}
\end{aligned}
\right.
\end{equation}
Notice that $a_i(0)$ follows the stationary distribution. Since 
the transition matrix of $a_i$ is \[
Q = \begin{bmatrix}
0 & 1 & 0  \\
\frac{1}{2} & 0 & \frac{1}{2} \\
0 & 1 & 0 \\
\end{bmatrix}
\]
it is easy to verify that $[1/4,1/2,1/4] = [1/4,1/2,1/4]\cdot Q$.

Thus by large number law, we can approximate $\Vert\ba(t)\Vert^2$ by
\begin{equation}
        \Vert\ba(t)\Vert^2 = \sum_{i=1}^m |a_i(t)|^2= m\cdot\mathbb{E}[|a_i(t)|^2]= \dfrac{m\cdot\eta^{0.5}}{2}
\end{equation}
\begin{lemma}
    Using algorithm 2, we have \begin{equation}
    \mathbb{E}[\btheta(t)] =\mathbb{E}[\btheta(0)]
\end{equation}\label{expectation of theta(t)}
\end{lemma}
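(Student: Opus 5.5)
We argue by induction on $t$, working with the one-step conditional expectation of $\btheta(t+1)-\btheta(t)$. Under algorithm 2 we have $\btheta(t)=\sum_{i=1}^m a_i(t)\bw_i(t)^\top$, so
\begin{equation*}
\btheta(t+1)=\sum_{i=1}^m \bigl(a_i(t)+\delta_i(t)\bigr)\bigl(\bw_i(t)-\eta\, a_i(t)\, \bm{g}(t)\bigr)^\top ,
\end{equation*}
where $\bm{g}(t)=\frac{1}{n}\sum_{k=1}^n(\ba(t)^\top\bW(t)\bx_k-y_k)\bx_k$ is the full-batch gradient vector. This vector is $\mathcal{F}_t$-measurable. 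Expanding the product gives three kinds of terms.
\begin{enumerate}[topsep=0em,leftmargin=0.15in]
\item The main term is $\delta_i(t)\bw_i(t)^\top$.
\item The GD contraction term is $-\eta\, a_i(t)^2\bm{g}(t)^\top$.
\item The cross term is $-\eta\,\delta_i(t)a_i(t)\bm{g}(t)^\top$.
\end{enumerate}

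The first step is to handle the main term. Conditional on $\mathcal{F}_t$, $\delta_i(t)$ is independent of $\bw_i(t)$, so its contribution is $\mathbb{E}[\delta_i(t)\mid a_i(t)]\,\bw_i(t)^\top$. This conditional mean is not zero in general: it equals $\mp\eta^{0.25}$ in the states $\pm\eta^{0.25}$ and $0$ in state $0$. We therefore cannot condition on the realized state. Instead we use that $a_i(0)$ starts from the stationary law $[1/4,1/2,1/4]$, which was verified via $Q$. Since the chain for $a_i$ evolves autonomously, its marginal law stays stationary and symmetric under $a\mapsto -a$ for every $t$.

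The key structural claim is that the whole system is invariant under flipping the sign of $a_i$ jointly with the sign of $\bw_i$'s deviation from $\bw_i(0)$. The dynamics of $\bw_i$ depend only on $a_i\bm{g}$, and $\btheta$ depends only on products $a_i\bw_i$. We would formalize this as follows. Conditionally on $a_i(0)$, the pair $(a_i(t),\bw_i(t))$ has the property that $\mathbb{E}[a_i(t)\bw_i(t)]$ is generated only through the symmetric, mean-zero increments $\delta_i$ and terms of order $\eta$.

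The second step is to show that the $O(\eta)$ terms vanish in expectation. In the term $\mathbb{E}[a_i(t)^2\bm{g}(t)]$ we replace $a_i(t)^2$ by its stationary value $\eta^{0.5}/2$, using the law-of-large-numbers approximation $\Vert\ba(t)\Vert^2=m\eta^{0.5}/2$ already stated. This leaves a quantity proportional to $\mathbb{E}[\bm{g}(t)]$, which is linear in $\mathbb{E}[\btheta(t)]-\btheta^\star$ because $\bm{g}$ is linear in $\btheta$. For the cross term, $\mathbb{E}[\delta_i a_i\mid a_i]=-a_i^2$ on the states $a_i=\pm\eta^{0.25}$. Summing over $i$, this cancels against the contraction term, since $-\eta a_i^2\bm{g}-\eta(-a_i^2)\bm{g}=0$. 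Together with the first step this gives $\mathbb{E}[\btheta(t+1)-\btheta(t)]=0$, and induction yields $\mathbb{E}[\btheta(t)]=\mathbb{E}[\btheta(0)]$.

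The main obstacle is the first step. There we must justify that $\mathbb{E}[\delta_i(t)\bw_i(t)]=0$, even though $\bw_i(t)$ is correlated with the past of $a_i$ and $\mathbb{E}[\delta_i(t)\mid a_i(t)]=-a_i(t)$ is not zero. I would first try the sign-flip coupling $(a_i,\bw_i-\bw_i(0))\mapsto(-a_i,\ \cdot)$. If that fails, I would instead expand $\bw_i(t)=\bw_i(0)+O(\eta\, t\, \eta^{0.25})$. The zeroth-order piece then contributes $\mathbb{E}[\delta_i(t)]\,\mathbb{E}[\bw_i(0)]=0$, because $a$ is independent of $\bw(0)$, $\mathbb{E}[\bw_i(0)]=0$, and stationarity gives $\mathbb{E}[\delta_i(t)]=0$. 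The remaining correction terms would be absorbed, matching the approximate level of rigor the paper adopts for $\Vert\ba(t)\Vert^2$.
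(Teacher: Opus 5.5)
Your expansion is the paper's, and your second step is right. In fact the $O(\eta)$ terms cancel pointwise, since $\delta_i(t)a_i(t)=-a_i(t)^2$ identically; equivalently $a_i(t+1)a_i(t)=0$, which is the paper's $\ba(t+1)^\top\ba(t)=0$. So $\btheta(t+1)=\btheta(t)+\bm{\delta}(t)\bW(t)$ holds exactly, and no law-of-large-numbers substitution for $a_i(t)^2$ is needed.

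The gap is your first step. You correctly flag it as the crux; the paper itself passes over it by treating $\bm{\delta}(t)$ as independent of $\bW(t)$. But you do not close it.
\begin{itemize}
\item The sign flip you propose is not a symmetry of the dynamics. Flipping $a_i$ and $\bw_i-\bw_i(0)$ with $\bw_i(0)$ held fixed changes $a_i\bw_i$ by $-2a_i\bw_i(0)$. That changes $\btheta$ and $\bm{g}$, so the recursion is not mapped to itself.
\item The genuine symmetry $(a_i,\bw_i,\delta_i)\mapsto(-a_i,-\bw_i,-\delta_i)$ leaves $\delta_i(t)\bw_i(t)$ invariant, so it says nothing about its mean.
\item The perturbative fallback can only bound the correction at step $s$ by $O(m\eta^{1.5}s)\cdot\sup\Vert\bm{g}\Vert$, which accumulates like $t^2$. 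That does not give the exact identity claimed. It is not even small on the horizon $t\sim\eta^{-2}$ where the lemma is used next.
\end{itemize}

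The missing idea is to sum the conditional computation you already did, rather than try to make each term vanish. Since $\bw_i(t)$ is $\mathcal{F}_t$-measurable and $\mathbb{E}[\delta_i(t)\mid\mathcal{F}_t]=-a_i(t)$, you get $\mathbb{E}[\bm{\delta}(t)\bW(t)\mid\mathcal{F}_t]=-\btheta(t)$. Hence $\mathbb{E}[\btheta(t+1)\mid\mathcal{F}_t]=0$, and $\mathbb{E}[\btheta(t+1)]=0$ for every $t\ge 0$. Equivalently, $\btheta(t+1)=\sum_i a_i(t+1)\bw_i(t)^\top$ with $\mathbb{E}[a_i(t+1)\mid\mathcal{F}_t]=0$.

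The lemma then reduces to $\mathbb{E}[\btheta(0)]=\sum_i\mathbb{E}[a_i(0)]\,\mathbb{E}[\bw_i(0)]^\top=0$, which holds by independence and symmetry of the initialization. This input is indispensable: from a fixed initialization with $\btheta(0)\neq 0$ one gets $\mathbb{E}[\btheta(1)]=0\neq\btheta(0)$.

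The main term does vanish in expectation, but as a consequence rather than a premise. For $t\ge 1$, $\mathbb{E}[\delta_i(t)\bw_i(t)]=-\mathbb{E}[a_i(t)\bw_i(t-1)]=0$. At $t=0$ it equals $-\mathbb{E}[a_i(0)]\,\mathbb{E}[\bw_i(0)]=0$.
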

\textbf{Proof.} Notice that
\begin{gather} 
    \bW(t+1)= \bW(t+1)- \eta\cdot \ba(t)\cdot\dfrac{1}{n}\sum_{i=1}^n(\ba(t)^\top\bW(t)\bm{x_i} - y_i)\cdot \bm{x_i^\top}   \\
    \ba(t+1) = \ba(t) + \bm{\delta}(t)^\top
\end{gather}

Where $\bm{\delta}(t) = [\delta_1(t),\delta_2(t),\cdots,\delta_m(t)]^\top$.

Notice that $\ba(t+1)^\top\cdot\ba(t) = 0$, then we have
\begin{align}
    \btheta(t+1) &= \ba(t+1)^\top\cdot \bW(t+1) \\
    &= (\ba(t)^\top +\bm{\delta}(t))\cdot(\bW(t) - \eta\cdot \ba(t)\cdot \dfrac{1}{n}\sum_{i=1}^n(\ba(t)^\top\bW(t)\bx_i - y_i)\cdot \bx_i^\top)\\
    &= \btheta(t) + \bm{\delta}(t)\cdot \bW(t) -(\ba(t+1)^\top \cdot\ba(t))\cdot \eta\cdot\dfrac{1}{n}\sum_{i=1}^n(\ba(t)^\top\bW(t)\bx_i - y_i)\cdot \bx_i^\top\\
    &= \btheta(t) + \bm{\delta}(t)\cdot \bW(t) \label{theta_equation}
\end{align}
Since $\mathbb{E}[\bm{\delta}(t)] = \bm{0}$, we have
\begin{align*}
        \mathbb{E}[\btheta(t)] &= \mathbb{E}[\btheta(0)] 
\end{align*}

\subsection{Progressive diminishing of norm using Algorithm 2}

Since $\bx_i\sim N(0,I)$ i.i.d, by law of large numbers, $\frac{1}{n}\sum_{i=1}^n\bx_i\cdot \bx_i^\top$ is approximately to $\mathbb{E}[\bx_i\cdot\bx_i^\top] = I$. So we have 
\begin{gather}
    \bm{w_{i}}(t+1) = \bm{w_{i}}(t) - \eta\cdot a_{i}(t)\cdot(\btheta(t) - \btheta^*)^\top \\
    a_{i}(t+1) = a_{i}(t)  + \delta_i(t)
\end{gather}

Square both sides of the equation and we have
\begin{align}
    \Vert \bw_{i}(t+1)\Vert^2 = \Vert \bw_{i}(t)\Vert^2 - 2\eta\cdot a_{i}(t)\cdot(\btheta(t)-\btheta^*)^\top\cdot\bw_{i}(t+1) + \eta^2\cdot a_{i}(t)^2\cdot\Vert\btheta(t)-\btheta^*\Vert^2 
\end{align}

Since $|a_{i}(t)|\leq \eta^{0.25}$, we have 
\begin{align}
    \dfrac{1}{m}\sum_{i=1}^m \Vert \bw_{i}(t+1)\Vert^2 &= \dfrac{1}{m}\sum_{i=1}^m \Vert \bw_{i}(t)\Vert^2 - 2\eta\cdot(\btheta(t)-\btheta^*)^\top\cdot\dfrac{1}{m}\sum_{i=1}^m a_{i}(t)\bw_{i}(t) + \eta^2\cdot\dfrac{1}{m}\sum_{i=1}^m a_{i}(t)^2\Vert \btheta(t)-\btheta^*\Vert^2 \\
    &= \dfrac{1}{m}\sum_{i=1}^m \Vert \bw_{i}(t)\Vert^2 - 2\eta\cdot\dfrac{1}{m}\cdot(\btheta(t)-\btheta^*)^\top\cdot \btheta(t) + \eta^2\cdot\dfrac{1}{m}\sum_{i=1}^m a_{i}(t)^2\Vert \btheta(t)-\btheta^*\Vert^2 \\
    &\leq \dfrac{1}{m}\sum_{i=1}^m \Vert \bw_{i}(t)\Vert^2 - 2\eta\cdot\dfrac{1}{m}\cdot(\btheta(t)-\btheta^*)^\top\cdot\btheta(t) + \eta^2\cdot(\eta^{0.25})^2\Vert \btheta(t)-\btheta^*\Vert^2\\
    &= \dfrac{1}{m}\sum_{i=1}^m \Vert \bw_{i}(t)\Vert^2 - (\dfrac{2\eta}{m} - \eta^{2.5})\Vert(\btheta(t)-\btheta^*)\Vert^2 - \dfrac{2\eta}{m}\cdot(\btheta(t)-\btheta^*)^\top\cdot \btheta^* \label{relation}\\
\end{align}

\begin{lemma}
    If we set $a_{i}(0)$ to stationary distribution, then for any $t,k\in\mathbf{N}^+$, we have
\begin{equation}
\mathbf{E}[\delta_i(t)\cdot\delta_i(t+k)] =\left\{
\begin{aligned}
& -\dfrac{\eta^{0.5}}{2}\quad \text{if $k=1$}\\
& 0  \quad\quad\quad\quad \text{if $k>1$}
\end{aligned}
\right.
\end{equation}\label{Markov Stationary}
\end{lemma}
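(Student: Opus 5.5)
The plan is to exploit two one-step conditional identities of the three-state chain $a_i(t)\in\{-h,0,h\}$, where $h=\eta^{0.25}$, and then reduce both cases of the lemma to them by the tower property.

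Let $\mathcal{G}_s$ be the $\sigma$-algebra generated by $a_i(0),\dots,a_i(s)$. By the definition of $\delta_i$, the step $\delta_i(s)$ is a deterministic function of $a_i(s)$ when $a_i(s)=\pm h$. When $a_i(s)=0$, it is a fair $\pm h$ coin independent of $\mathcal{G}_s$. Checking the three states gives the first identity,
\begin{equation*}
\mathbb{E}[\delta_i(s)\mid \mathcal{G}_s] = -a_i(s).
\end{equation*}
Indeed, the state $h$ gives $-h$, the state $-h$ gives $h$, and the state $0$ gives $0$. Since $a_i(s+1)=a_i(s)+\delta_i(s)$, this is equivalent to the second identity, $\mathbb{E}[a_i(s+1)\mid\mathcal{G}_s]=0$. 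Two further facts hold for every $s$: $\delta_i(s)^2=h^2$ deterministically, and $\delta_i(s)\in\mathcal{G}_{s+1}$.

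The next step uses stationarity. Because $a_i(0)$ is drawn from the stationary law $[1/4,1/2,1/4]$ (verified above via $Q$), $a_i(t)$ has this law for every $t$. Hence $\mathbb{E}[a_i(t)^2]=h^2/2$. Moreover $\delta_i(t)a_i(t)=-a_i(t)^2$ in every state: it equals $-h^2$ at $\pm h$ and $0$ at $0$. Therefore $\mathbb{E}[\delta_i(t)a_i(t)]=-h^2/2$.

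The main computation conditions on $\mathcal{G}_{t+k}$. Since $\delta_i(t)$ is $\mathcal{G}_{t+k}$-measurable for $k\ge1$,
\begin{equation*}
\mathbb{E}[\delta_i(t)\delta_i(t+k)] = \mathbb{E}\big[\delta_i(t)\,\mathbb{E}[\delta_i(t+k)\mid\mathcal{G}_{t+k}]\big] = -\mathbb{E}[\delta_i(t)\,a_i(t+k)].
\end{equation*}

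\emph{Case $k=1$.} Substitute $a_i(t+1)=a_i(t)+\delta_i(t)$:
\begin{equation*}
-\mathbb{E}[\delta_i(t)a_i(t)]-\mathbb{E}[\delta_i(t)^2] = \tfrac{h^2}{2}-h^2 = -\tfrac{\eta^{0.5}}{2}.
\end{equation*}

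\emph{Case $k\ge2$.} Condition instead on $\mathcal{G}_{t+k-1}$, which contains $\delta_i(t)$ because $t+1\le t+k-1$. The second identity gives $\mathbb{E}[a_i(t+k)\mid\mathcal{G}_{t+k-1}]=0$, so $\mathbb{E}[\delta_i(t)a_i(t+k)]=0$ and the correlation vanishes.

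I expect no serious obstacle. The only delicate point is the measurability bookkeeping: the fresh coin used at state $0$ must be independent of the past, so that $\mathbb{E}[\delta_i(s)\mid\mathcal{G}_s]=-a_i(s)$ holds. This requires the filtration to include all earlier coins, which is what $\mathcal{G}_s$ does once we note that the earlier coins are recoverable from the path $a_i(0),\dots,a_i(s)$. As a sanity check, the $k=1$ value can also be obtained by direct case analysis. From state $\pm h$ (probability $1/2$), the next step is a fresh coin, so the product has mean zero. From state $0$ (probability $1/2$), the next step is forced to be $-\delta_i(t)$, so the product is $-h^2$. This again gives $-h^2/2$.
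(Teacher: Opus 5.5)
Your proof is correct, but it takes a different route from the paper's.

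\textbf{The paper's route.} The paper conditions on the value of $\delta_i(t)$, taking $\delta_i(t)=\eta^{0.25}$ without loss of generality by symmetry. It computes that $a_i(t+1)$ then has conditional law $(0,\tfrac12,\tfrac12)$ on $(-\eta^{0.25},0,\eta^{0.25})$, and reads off $\mathbb{E}[\delta_i(t)\delta_i(t+1)]=\tfrac12\cdot\eta^{0.25}\cdot(-\eta^{0.25})$. It then notes that one further transition returns the conditional law to the stationary $(\tfrac14,\tfrac12,\tfrac14)$, so the correlation vanishes for $k>1$.

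\textbf{Your route.} You never condition on $\delta_i(t)$. Instead you use the one-step identity $\mathbb{E}[\delta_i(s)\mid\mathcal{G}_s]=-a_i(s)$, i.e.\ $\mathbb{E}[a_i(s+1)\mid\mathcal{G}_s]=0$, together with the tower property. Your closing sanity check is close in spirit to the paper's computation, but it splits on $a_i(t)$ rather than on $\delta_i(t)$.

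\textbf{What your route buys.} It shows exactly where stationarity is needed. The $k\ge 2$ case holds for an arbitrary initial law of $a_i(0)$, and in the $k=1$ case stationarity enters only through $\mathbb{E}[a_i(t)^2]=\eta^{0.5}/2$. It also avoids the symmetry reduction. Finally, it avoids the implicit appeal to the Markov property that the paper's two-line argument needs: after conditioning on $\delta_i(t)$, the chain must be shown to evolve from the conditional law of $a_i(t+1)$.

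\textbf{What the paper's route buys.} Under stationarity it gives a slightly stronger structural fact. Conditionally on either value of $\delta_i(t)$, the chain is back at the stationary law at time $t+2$. Hence the whole future from $t+2$ on is independent of $\delta_i(t)$, so $\delta_i(t+k)$ is independent of $\delta_i(t)$ for $k\ge 2$, not merely uncorrelated with it.
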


\textbf{Proof.} Without loss of generality, assume we know that $\delta_i(t) = \eta^{0.25}$, then the distribution changes to $(0,\frac{1}{2},\frac{1}{2})$. So we have $$\mathbb{E}[\delta_i(t)\cdot\delta_i(t+1)] = \dfrac{1}{2}\cdot \eta^{0.25}\cdot (-\eta^{0.25}) + \dfrac{1}{2}\cdot 0 = -\dfrac{\eta^{0.5}}{2}$$

Then just after that, the distribution changes from $(0,\frac{1}{2},\frac{1}{2})$ to stationary distribution $(\frac{1}{4},\frac{1}{2},\frac{1}{4})$ again. Thus for any $k>1$ we have
$$\mathbb{E}[\delta_i(t)\cdot \delta_i(t+k)] = 0 $$

\begin{lemma}[Progressively diminishing under simulation setup]\label{algorithm 2 decay}
     Suppose \cref{cond:main}~(A1-3, 5-6) holds (let $m = \frac{1}{\sqrt{\eta}}$) and consider the update rule in \cref{iter_w1_eta,iter_w2_eta}, there exists a step $t_0\leq \frac{1}{\eta^2}$ such that
     \begin{equation}
        \mathbb{E}[\dfrac{1}{m}\sum_{i=1}^m \Vert \bw_{i}(t_0)\Vert^2] \leq\sqrt{\eta}
    \end{equation}
\end{lemma}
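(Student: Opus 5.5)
The plan is to argue by contradiction from the averaged recursion \eqref{relation}. Write $S(t)=\frac{1}{m}\sum_{i=1}^m\Vert\bw_i(t)\Vert^2$. Suppose instead that $\mathbb{E}[S(t)]>\sqrt{\eta}$ for every $t\leq 1/\eta^2$. Taking expectations in \eqref{relation} and telescoping from $0$ to $T=1/\eta^2$ gives
\begin{equation*}
\mathbb{E}[S(T)]\leq \mathbb{E}[S(0)]-\Bigl(\frac{2\eta}{m}-\eta^{2.5}\Bigr)\sum_{t<T}\mathbb{E}\Vert\btheta(t)-\btheta^\star\Vert^2-\frac{2\eta}{m}\sum_{t<T}\mathbb{E}[(\btheta(t)-\btheta^\star)^\top\btheta^\star].
\end{equation*}
Here $\mathbb{E}[S(0)]=1$ by the initialization \cref{eq:initialization}. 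Since $m=\eta^{-1/2}$, we have $\frac{2\eta}{m}=2\eta^{1.5}\gg\eta^{2.5}$, so the quadratic term carries a coefficient of order $\eta^{1.5}$.

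The next step is the cross term. By \cref{expectation of theta(t)}, $\mathbb{E}[\btheta(t)]=\mathbb{E}[\btheta(0)]=0$, because $\ba(0)$ is independent of $\bW(0)$ and has mean zero. Hence $\mathbb{E}[(\btheta(t)-\btheta^\star)^\top\btheta^\star]=-\Vert\btheta^\star\Vert^2$. This term therefore contributes $+2\eta^{1.5}T\Vert\btheta^\star\Vert^2$, which is only $O(\eta^{-1/2}\Vert\btheta^\star\Vert^2)$.

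It remains to lower bound $\mathbb{E}\Vert\btheta(t)-\btheta^\star\Vert^2\geq\mathbb{E}\Vert\btheta(t)\Vert^2-\Vert\btheta^\star\Vert^2$ in terms of $\mathbb{E}[S(t)]$. I would use \eqref{theta_equation}, $\btheta(t+1)=\btheta(t)+\bm\delta(t)^\top\bW(t)$, to view $\btheta$ as a random walk driven by the oscillation. Alternatively, and more directly, $\btheta(t)=\sum_i a_i(t)\bw_i(t)^\top$. The $a_i(t)$ are, at each fixed time, stationary with $\mathbb{E}[a_i^2]=\eta^{0.5}/2$. The dependence between $a_i(t)$ and $\bw_i(t)$ is only through $O(\eta)$-sized updates. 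So the diagonal part gives
\begin{equation*}
\mathbb{E}\Vert\btheta(t)\Vert^2\approx\sum_i\mathbb{E}[a_i(t)^2]\,\mathbb{E}\Vert\bw_i(t)\Vert^2=\frac{\eta^{0.5}}{2}\,m\,\mathbb{E}[S(t)]=\tfrac12\mathbb{E}[S(t)],
\end{equation*}
with the cross terms $i\neq j$ vanishing in expectation because the signs of distinct $a_i$ are independent and symmetric. I would make this rigorous by conditioning on the $\bW$-history and using the symmetry $a_i\mapsto -a_i$ of the chain, together with \cref{Markov Stationary} to control how far $a_i(t)$ is correlated with $\bw_i(t)$.

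Under the contradiction hypothesis, each summand is then at least $\frac{1}{2}\sqrt{\eta}-\Vert\btheta^\star\Vert^2-o(\sqrt\eta)$. The quadratic term thus removes at least about
\begin{equation*}
2\eta^{1.5}\cdot\eta^{-2}\cdot\tfrac{1}{2}\sqrt{\eta}=1
\end{equation*}
from $\mathbb{E}[S(0)]=1$. The cross term returns only a lower-order amount, and the $\Vert\btheta^\star\Vert^2$ losses are lower order as well. To make the margin strict, I would allow $t_0$ up to $1/\eta^2$ and tune constants, or better, run the argument as a one-step drift inequality $\mathbb{E}[S(t+1)]\leq(1-c\eta^{1.5}\cdot\eta^{0.5}m)\,\mathbb{E}[S(t)]+O(\eta^{1.5}\Vert\btheta^\star\Vert^2)$. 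This gives geometric decay of $\mathbb{E}[S(t)]$ down to the level $O(\Vert\btheta^\star\Vert^2)\leq\sqrt\eta$ well before $1/\eta^2$ steps. The data approximation $\frac1n\sum\bx_i\bx_i^\top\approx I$ is handled by (A3) and (A5), contributing $O(\eta)$ errors.

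The main obstacle is the decoupling step $\mathbb{E}\Vert\btheta(t)\Vert^2\gtrsim\frac{\eta^{0.5}}{2}\sum_i\mathbb{E}\Vert\bw_i(t)\Vert^2$. The difficulty is that $\bw_i(t)$ depends on past $a_i$ values, and through the update it is correlated with the current $a_i(t)$ over one step, which is exactly the $k=1$ correlation in \cref{Markov Stationary}. I would first try writing $\bw_i(t)=\bw_i(t-1)+O(\eta\, a_i(t-1))$ and bounding the resulting correlation by $O(\eta\cdot\eta^{0.5})$ per neuron. I would also need $\Vert\btheta^\star\Vert^2\ll\sqrt\eta$; since (A4) is excluded from the hypotheses, this may have to be assumed or absorbed into the cross-term accounting.
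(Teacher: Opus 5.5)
You have the paper's skeleton: contradiction via the telescoped \eqref{relation}, $\mathbb{E}[\btheta(t)]=0$ for the cross term, and a lower bound on $\mathbb{E}\Vert\btheta(t)-\btheta^\star\Vert^2$ by the first-layer mass. There is a genuine gap at the step that carries the argument, $\mathbb{E}\Vert\btheta(t)\Vert^2\approx\frac{\eta^{1/2}}{2}\mathbb{E}\Vert\bW(t)\Vert^2$; what you propose does not establish it.

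The chain never stays put: $a_i(t)=0$ forces $a_i(t+1)=\pm\eta^{1/4}$, and vice versa. So $a_i(t)^2$ is a deterministic function of the parity of $t$ and of whether $a_i(0)=0$. That class label is fixed at time $0$, and it also dictates at which steps $\bw_i$ has been updated. The correlation between $a_i(t)^2$ and $\Vert\bw_i(t)\Vert^2$ is therefore built over the whole history. Bounding the last-step correlation by $O(\eta^{1.5})$ does not control it, and stationarity of the marginal of $a_i(t)$ does not help. (The paper's lag-one computation factorizes $\mathbb{E}[\delta_l(i)\delta_l(i+1)\Vert\bw_l(i)\Vert^2]$ in the same unjustified way.)

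What works is to use the structure exactly. Let $\mathcal{F}_t$ be the history up to time $t$. Neurons with $a_i(t)=0$ are frozen at step $t$, each newly activated $a_i(t+1)$ carries a fresh independent sign, and $\bW(t+1)$ is $\mathcal{F}_t$-measurable. Hence $\mathbb{E}[\btheta(t+1)\mid\mathcal{F}_t]=0$, which also gives a clean proof of $\mathbb{E}\btheta(t)=0$, and
\[
\mathbb{E}\bigl[\Vert\btheta(t+1)\Vert^2\mid\mathcal{F}_t\bigr]=\eta^{1/2}\sum_{i:\,a_i(t)=0}\Vert\bw_i(t)\Vert^2 .
\]
At a single step only the currently active class contributes, so your one-step drift inequality in terms of $\mathbb{E}[S(t)]$ is not available. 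Pairing consecutive steps gives the exact identity $\mathbb{E}\Vert\btheta(t)\Vert^2+\mathbb{E}\Vert\btheta(t+1)\Vert^2=\eta^{1/2}\mathbb{E}\Vert\bW(t)\Vert^2$.

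You also cannot borrow the paper's margin. The paper expands $\btheta(t)-\btheta^\star=\btheta(0)-\btheta^\star+\sum_{i<t}\bm{\delta}(i)\bW(i)$ and telescopes with \cref{Markov Stationary}. It obtains $\Vert\btheta^\star\Vert^2+\eta^{1/2}\mathbb{E}\Vert\bW(t-1)\Vert^2$, twice your constant, which is where its $1-2<0$ comes from. Your $\tfrac12$ is the correct value. At $t=1$, $\btheta(1)=\ba(1)^\top\bW(0)$ exactly, so $\mathbb{E}\Vert\btheta(1)\Vert^2=\frac{\eta^{1/2}}{2}\mathbb{E}\Vert\bW(0)\Vert^2$. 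The paper's expansion sets $\mathbb{E}[(\btheta(0)-\btheta^\star)^\top\bm{\delta}(0)\bW(0)]$ to zero, but it equals $-\mathbb{E}\Vert\btheta(0)\Vert^2$.

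With the correct constant, the main term removes only about $1=\mathbb{E}[S(0)]$ by $T=\eta^{-2}$, and your $\btheta^\star$ bookkeeping then breaks the contradiction. Using $\mathbb{E}\Vert\btheta-\btheta^\star\Vert^2\ge\mathbb{E}\Vert\btheta\Vert^2-\Vert\btheta^\star\Vert^2$ and handling the cross term separately leaves $+4\eta^{-1/2}\Vert\btheta^\star\Vert^2$. That is not lower order: under (A4) it can be as large as order $\eta^{-1/4}$.

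The fix is not to split. Since $\mathbb{E}\btheta(t)=0$, $\mathbb{E}[(\btheta(t)-\btheta^\star)^\top\btheta(t)]=\mathbb{E}\Vert\btheta(t)\Vert^2$, so $\btheta^\star$ survives only in the $\eta^{2.5}$ term. Combining this with the paired identity and $m=\eta^{-1/2}$ gives
\[
\mathbb{E}[S(t+2)]\le\bigl(1-2\eta^{1.5}+\eta^{2.5}\bigr)\,\mathbb{E}[S(t)]+2\eta^{2.5}\Vert\btheta^\star\Vert^2 .
\]
So $\mathbb{E}[S(t)]$ decays geometrically to the level $O(\eta\Vert\btheta^\star\Vert^2)$. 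It falls below $\sqrt{\eta}$ after $O(\eta^{-1.5}\log(1/\eta))\le\eta^{-2}$ steps, and this needs only $\Vert\btheta^\star\Vert^2\lesssim\eta^{-1/2}$ rather than $\Vert\btheta^\star\Vert^2\ll\sqrt{\eta}$.
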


\textbf{Proof. }
 Firstly, we give a lower bound of $\mathbb{E}[\Vert \btheta(t)-\btheta^*\Vert^2]$, which shows the decrease of each step.
According to equ (\ref{theta_equation}), we retain
\begin{equation}
    \btheta(t)-\btheta^* = \btheta(0) - \btheta^* + \sum_{i=0}^{t-1}\bm{\delta}(i)\cdot \bW(i) 
\end{equation}

And Since $\mathbb{E}[\bm{\delta}(i)] = 0$, we have
\begin{equation}
    \mathbb{E}[(\btheta(0)-\btheta^*)\cdot\sum_{i=0}^{t-1}\bm{\delta}(i)\cdot \bW(i)] = \mathbb{E}[\bm{\delta}(i)]\cdot\mathbb{E}[(\btheta(0)-\btheta^*)\cdot\sum_{i=0}^{t-1}\bW(i)] = 0
\end{equation} 

So we have 
\begin{equation}    
    \mathbb{E}[\Vert(\btheta(t)-\btheta^*)\Vert^2] =
   \mathbb{E}[\Vert(\btheta(0)-\btheta^*)\Vert^2] + (\eta^{0.25})^2\cdot\sum_{i=0}^{t-1}\mathbb{E}[\Vert \bW(i)\Vert^2]  + 2\sum_{i=0}^{t-1}\sum_{j<i}\mathbb{E}[(\bm{\delta}(i)\bW(i))\cdot(\bm{\delta}(j) \bW(j))^\top] \label{theta^2}\\
\end{equation}

Notice that
\begin{align*}
    \mathbb{E}[(\bm{\delta}(i)\bW(i))\cdot(\bm{\delta}(i+1) \bW(i+1)^\top] &=\mathbb{E}[\bm{\delta}(i)(\bW(i)\bW(i+1)^\top)\bm{\delta}(i+1)^\top ] \\[2mm]
    &=\sum_{1\leq l,r\leq m}\mathbb{E}[\delta_{l}(i)\delta_{r}(i+1)\cdot (\bW(i)\bW(i+1)^\top)[l,r]] \\[2mm]
    &= \sum_{1\leq l,r\leq m} \mathbb{E}[\delta_{l}(i)\delta_{r}(i+1)\cdot\bw_{l}(i)^\top\bw_{r}(i+1)]
\end{align*}

Case 1: $r\neq l$. In this case, $\delta_l$ are independent with $\delta_r$. So we have
\begin{align*}
    &\mathbb{E}[\delta_{l}(i)\delta_{r}(i+1)\cdot\bw_{l}(i)^\top\bw_{r}(i+1)] \\[2mm]
    &= \mathbb{E}[\delta_{l}(i)\delta_{r}(i+1)\cdot\bw_{l}(i)^\top(\bw_{r}(i)-\eta\cdot a_{r}(i)\cdot(\btheta(i)-\btheta^*)^\top] \\[2mm]
    &= \mathbb{E}[\delta_{l}(i)\delta_{r}(i+1)\cdot\bw_{l}(i)^\top\bw_{r}(i)] -\mathbb{E}[\delta_{l}(i)\delta_{r}(i+1)\cdot\bw_{l}(i)^\top\eta\cdot a_{r}(i)\cdot(\btheta(i)-\btheta^*)^\top] \\[2mm]
    &= \mathbb{E}[\delta_{l}(i)]\cdot\mathbb{E}[\delta_{r}(i+1)\cdot\bw_{l}(i)^\top\bw_{r}(i)] - \mathbb{E}[\delta_{l}(i)]\cdot\mathbb{E}[\delta_{r}(i+1)\cdot\bw_{l}(i)^\top\eta\cdot a_{r}(i)\cdot(\btheta(i)-\btheta^*)^\top] \\[2mm]
    &= 0
\end{align*}

Case 2: $r = l$, by Lemma \ref{Markov Stationary}, we have
\begin{align*}
    &\mathbb{E}[\delta_{l}(i)\delta_{l}(i+1)\cdot\bw_{l}(i)^\top\bw_{l}(i+1)] \\[2mm]
    &= \mathbb{E}[\delta_{l}(i)\delta_{l}(i+1)\cdot\bw_{l}(i)^\top(\bw_{l}(i)-\eta\cdot a_{l}(i)\cdot(\btheta(i)-\btheta^*)^\top)] \\[2mm]
    &= \mathbb{E}[\delta_{l}(i)\delta_{l}(i+1)\cdot\bw_{l}(i)^\top\bw_{l}(i)] -\mathbb{E}[\delta_{l}(i)\delta_{l}(i+1)\cdot\bw_{l}(i)^\top\eta\cdot a_{l}(i)\cdot(\btheta(i)-\btheta^*)^\top] \\[2mm]
    &= \mathbb{E}[\delta_{l}(i)\delta_{l}(i+1)]\cdot\mathbb{E}[\Vert\bw_{l}(i)\Vert^2] -\mathbb{E}[\delta_{l}(i)\delta_{l}(i+1)\cdot\bw_l(i)^\top\eta\cdot a_{l}(i)\cdot(\btheta(i)-\btheta^*)^\top] \\[2mm]
    &= -\dfrac{\eta^{0.5}}{2}\mathbb{E}[\Vert\bw_{l}(i)\Vert^2] - \eta\mathbb{E}[\delta_{l}(i)\delta_{l}(i+1)\cdot a_{l}(i)\cdot(\btheta(i)-\btheta^*)^\top\cdot\bw_l(i)] \\[2mm]
    &= -\dfrac{\eta^{0.5}}{2}\mathbb{E}[\Vert\bw_{l}(i)\Vert^2] - O(\eta^{1.75})
\end{align*}

Thus we have 

\begin{align*}
\mathbb{E}[(\bm{\delta}(i)\bW(i))\cdot(\bm{\delta}(i+1) \bW(i+1))^\top] &= \sum_{1\leq l,r\leq m} \mathbb{E}[\delta_{l}(i)\delta_{r}(i+1)\cdot\bw_{l}(i)^\top\bw_{r}(i+1)]\\
    &= \sum_{l=1}^m \mathbb{E}[\delta_{l}(i)\delta_{l}(i+1)]\cdot\bw_{l}(i)^\top\bw_{l}(i+1))\\ 
    &= -\dfrac{\eta^{0.5}}{2}\sum_{l=1}^m \mathbb{E}[\Vert\bw_{l}(i)\Vert^2] - O(m\cdot \eta^{1.75})\\
    &= -\dfrac{\eta^{0.5}}{2}\mathbb{E}[\Vert \bW(i)\Vert^2] - O(m\cdot \eta^{1.75})
\end{align*}

Then we retain

\begin{align} 
    \sum_{i=0}^{t-1}\sum_{j<i}\mathbb{E}[(\bm{\delta}(i)\bW(i))\cdot(\bm{\delta}(j) \bW(j))^\top] &= \sum_{i=0}^{t-2}\mathbb{E}[(\bm{\delta}(i)\bW(i))\cdot(\bm{\delta}(i+1) \bW_{i+1})^\top] \\[2mm]
    &= -\dfrac{\eta^{0.5}}{2}\cdot\sum_{i=0}^{t-2}\mathbb{E}[\Vert \bW(i)\Vert^2] - O(m\cdot\eta^{1.75})\label{deltaW}
\end{align}

Combining equ (\ref{theta^2}) and  equ (\ref{deltaW}), we have

\begin{align*}
    \mathbb{E}[\Vert(\btheta(t)-\btheta^*)\Vert^2] &=\mathbb{E}[\Vert(\btheta(0)-\btheta^*)\Vert^2] \\
    &\quad + \eta^{0.5}\cdot\sum_{i=0}^{t-1}\mathbb{E}[\Vert \bW(i)\Vert^2] + 2\sum_{i=0}^{t-1}\sum_{j<i}\mathbb{E}[(\bm{\delta}(i)\bW(i))\cdot(\bm{\delta}(j) \bW(j))^\top] +O(m\cdot\eta^{1.75})\\
    &= \mathbb{E}[\Vert(\btheta(0)-\btheta^*)\Vert^2] + \eta^{0.5}\cdot\sum_{i=0}^{t-1}\mathbb{E}[\Vert \bW(i)\Vert^2] - \eta^{0.5}\cdot\sum_{i=0}^{t-2}\mathbb{E}[\Vert \bW(i)\Vert^2]+O(m\cdot\eta^{1.75}) \\
    &= \mathbb{E}[\Vert(\btheta(0)-\btheta^*)\Vert^2] + \eta^{0.5}\mathbb{E}[\Vert \bW(t-1)\Vert^2] - O(m\cdot\eta^{1.75})\\[2mm]
    &\geq \mathbb{E}[\|\btheta^\star\|^2] + \eta^{0.5}\mathbb{E}[\Vert \bW(t-1)\Vert^2] 
\end{align*}

Then we give a proof of this theorem by contradiction. Assume for every step $t\leq \frac{1}{\eta^2}$, we have 
\begin{equation}
\mathbb{E}[\dfrac{1}{m}\sum_{i=1}^m \Vert \bm{w_{i}}(T)\Vert^2]\geq \sqrt{\eta}
\end{equation}
i.e. 
\begin{equation}
\mathbb{E}[\|\bW(t)\|^2]\geq m\cdot\sqrt{\eta}
\end{equation}

When $T = \dfrac{1}{\eta^{2}}$, according to equ (\ref{relation}), sum up from $0$ to $T-1$ and we have 
\begin{align*}
 \mathbb{E}[\dfrac{1}{m}\sum_{i=1}^m \Vert \bm{w_{i}}(T)\Vert^2] &= \mathbb{E}[\dfrac{1}{m}\sum_{i=1}^m \Vert \bw_{i}(0)\Vert^2] - (\dfrac{2\eta}{m} - \eta^{2.5})\cdot\sum_{t=0}^{T-1}\mathbb{E}[\Vert(\btheta(t)-\btheta^*)\Vert^2] - \dfrac{2\eta}{m}\cdot\sum_{t=0}^{T-1}\mathbb{E}[(\btheta(t)-\btheta^*)^\top\cdot \btheta^*]\\ 
 &\leq \mathbb{E}[\dfrac{1}{m}\sum_{i=1}^m \Vert \bw_{i}(0)\Vert^2] - (\dfrac{2\eta}{m} - \eta^{2.5})\cdot(T\cdot\mathbb{E}[\Vert\btheta^*\Vert^2] + \eta^{0.5}\sum_{t=0}^{T-1}\mathbb{E}[\Vert \bW_{t}\Vert^2]) + \dfrac{T\cdot2\eta}{m}\cdot \mathbb{E}[\|\btheta^*\|^2] \\
 &\leq \mathbb{E}[\dfrac{1}{m}\sum_{i=1}^m \Vert \bw_{i}(0)\Vert^2] - 2\eta^{1.5}\cdot T\cdot\sqrt{\eta}+  O(T\cdot\eta^{2.5}) \\
 &= 1 - 2\eta^{1.5}\cdot \dfrac{1}{\eta^2}\cdot\sqrt{\eta}+  O(T\cdot\eta^{2.5}) = -1 + O(T\cdot\eta^{2.5}) < 0
\end{align*}
Which is absolutely a contradiction! Therefore, we complete the proof. \quad\(\square\) 

\section{Phase II: Feature Learning and Convergence}
\subsection{Rotation to alignment}
We use gradient descent in phase II. For every neuron $\bw_i$ and $a_i$ ($i\in[m]$) at step $t$:

\begin{gather}
    \dfrac{\partial \cL(\btheta(t))}{\partial \bw_{i}(t)^\top} = a_{i}(t)\cdot \dfrac{1}{n} \sum_{j=1}^m(\ba(t)^\top\bW(t)\bx_j - \by_j)\cdot\bx_j^\top \\
     \dfrac{\partial \cL(\btheta(t))}{\partial a_{i}(t)} =  \dfrac{1}{n} \sum_{j=1}^m(\ba(t)^\top\bW(t)\bx_j - \by_j)\cdot \bx_j^\top \cdot \bw_{i}(t)^\top 
\end{gather}

The update rule of gradient descent is
\begin{gather}
    \bw_{i}(t+1) = \bw_{i}(t) - \eta\cdot a_{i}(t)\cdot \dfrac{1}{n} \sum_{j=1}^m(\ba(t)^\top\bW(t)\bx_j - \by_j)\bx_j \label{w_i update GD}\\[2mm] 
    a_{i}(t+1) = a_{i}(t) - \eta\cdot\dfrac{1}{n} \sum_{j=1}^m(\ba(t)^\top\bW(t)\bx_j - \by_j)\bx_{j}^\top \cdot\bw_{i}(t) \label{a_i update GD}
\end{gather}

By law of large numbers, we have
\begin{equation}
    \dfrac{1}{n}\sum_{j=1}^n \bx_j\cdot\bx_j^\top = \mathbb{E}[\bx_j\cdot\bx_j^\top] + O(\dfrac{1}{\sqrt{n}}) = I + O(\dfrac{1}{\sqrt{n}})
\end{equation}
Then we have
\begin{align}
    \bw_{i}(t+1) &= \bw_{i}(t) - \eta\cdot a_{i}(t)\cdot \dfrac{1}{n} \sum_{j=1}^m \bx_j\cdot \bx_j^\top\cdot(\btheta(t) - \btheta^*) \\
    &= \bw_{i}(t) - \eta\cdot a_{i}(t)\cdot(\btheta(t) - \btheta^*) + O(\dfrac{\eta}{\sqrt{n}})  
\end{align}
    
And 
\begin{align*}
    a_{i}(t+1) &= a_{i}(t) - \eta\cdot(\btheta(t) - \btheta^*)\dfrac{1}{n} \sum_{j=1}^m \bx_j \cdot \bx_{j}^\top \cdot\bw_{i}(t) \\
    &=  a_{i}(t) - \eta\cdot(\btheta(t) - \btheta^*)\cdot\bw_{i}(t)+O(\dfrac{\eta}{\sqrt{n}})  
\end{align*}

Since $a_i(t),\|\bw_i(t)\|\leq n^{0.5}$ when phase II begins,  we have
\begin{align*}
     \bw_{i}(t+1) &= \bw_{i}(t) - \eta\cdot a_{i}(t)\cdot(\btheta(t) - \btheta^*) + O(\dfrac{\eta}{\sqrt{n}})   \\[2mm]
     & = \bw_{i}(t) + \eta\cdot (a_{i}(t)\cdot\btheta^*) + O(\eta^{1.5})
\end{align*}

Similarly, we have:
\begin{align*}
     a_{i}(t+1) 
     &=a_{i}(t)+ \eta\cdot\btheta^*\cdot\bw_{i}(t) + O(\eta^{1.5})
\end{align*}

Combine both we have:

\begin{equation}
    \left[\begin{matrix}
        \bw_{i}(t+1) \\[2mm]
        a_{i}(t+1)
    \end{matrix}\right] = (\bm{I} + \eta\bm{M})\cdot \left[\begin{matrix}
        \bw_{i}(t)\\[2mm]
        a_{i}(t)
    \end{matrix}\right] +O(\eta^{1.5}) \label{phaseIIiter}
\end{equation}
where
$
\bm{M}=
\left[
    \begin{matrix}
    \bm{0} & {\btheta^*}^\top \\[2mm]
    \btheta^* & 0
    \end{matrix}
    \right] 
$.
The top eigenvalue of $\bm{M}$ is $\lambda_1 = \Vert \btheta^*\Vert$ and the lowest eigenvalue of $\bm{M}$ is $\lambda_{n+1} = -\Vert \btheta^*\Vert$. All the
other eigenvalues of $\bm{M}$ are equal to 0. Since $\bm{M}$ is symmetry matrix, there exists orthogonal matrix $\bm{Q_M}$ such that \begin{equation}
   \bM =  \bQ_M^\top\cdot\text{diag}(\Vert \btheta^*\Vert, 0, \dots, -\Vert \btheta^*\Vert)\cdot \bQ_M 
\end{equation}

\begin{lemma}[Alignment]\label{phase II alignment}
     Suppose \cref{cond:main}~(A1-3, 5-6) holds and consider gradient descent for updates.
     Assume that phase II begins at time $t_1$, then at time $t_2=t_1+T_2$, $T_2=\frac{1}{\|\btheta^{\star}\|}\cdot \ln(\frac{1}{\eta})$, for any neuron $\bw_{i}$ it holds
    \begin{equation}
        \dfrac{\left\lvert \langle\boldsymbol{\theta}^{\star},\boldsymbol{w}_{i}(t_2)\rangle\right\lvert}{\|\boldsymbol{\theta}^{\star}\|\cdot\|\boldsymbol{w}_{i}(t_2)\|}\geq  1- \left\lvert O(\ln\frac{1}{\eta}\cdot\sqrt{\eta})\right\lvert.
    \end{equation} 
\end{lemma}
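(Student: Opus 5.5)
The plan is to work directly with the linearized recursion \eqref{phaseIIiter}, $\bm{v}_i(t+1)=(\bm{I}+\eta\bM)\bm{v}_i(t)+O(\eta^{1.5})$ with $\bm{v}_i=[\bw_i;a_i]$. In Phase II all neurons have $\|\bw_i\|,|a_i|\le\sqrt{\eta}$, so $\btheta$ is negligible and the dynamics are essentially this linear map. We then read off alignment from the spectral decomposition of $\bM$.

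The eigenvectors of $\bM$ are:
\begin{itemize}
\item $\bm{u}_+=\frac{1}{\sqrt2}[\btheta^{\star}/\|\btheta^{\star}\|;1]$, with eigenvalue $\|\btheta^{\star}\|$;
\item $\bm{u}_-=\frac{1}{\sqrt2}[\btheta^{\star}/\|\btheta^{\star}\|;-1]$, with eigenvalue $-\|\btheta^{\star}\|$;
\item the $(d-1)$-dimensional kernel $\{[\bm{z};0]:\bm{z}\perp\btheta^{\star}\}$.
\end{itemize}
Write $\bm{v}_i(t_1)=\alpha_+\bm{u}_++\alpha_-\bm{u}_-+[\bm{z};0]$. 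Iterating $T_2$ steps of the noiseless map multiplies these components by $(1+\eta\lambda_1)^{T_2}$, $(1-\eta\lambda_1)^{T_2}$ and $1$ respectively.

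\textbf{Time scale.} With $T_2=\frac{1}{\|\btheta^{\star}\|}\ln\frac{1}{\eta}$ and step $\eta$, the growth factor is $(1+\eta\|\btheta^{\star}\|)^{T_2}\approx e^{\eta\ln(1/\eta)}$. This is only $1+O(\eta\ln\frac1\eta)$, which is not enough amplification. I therefore take the intended reading to be that $T_2$ is measured in units of $1/\eta$ (continuous time), so that the amplification is $e^{\ln(1/\eta)}=1/\eta$. I will state this interpretation explicitly: either $t_2=t_1+T_2/\eta$ steps, or equivalently the statement is read in gradient-flow time.

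\textbf{Alignment estimate.} After amplification, the $\bm{u}_+$ component has size $|\alpha_+|/\eta$. The kernel part stays $O(\sqrt\eta)$, and the $\bm{u}_-$ part decays. The $O(\eta^{1.5})$ per-step error accumulated over $\ln(1/\eta)/\eta$ steps and propagated through factors at most $1/\eta$ contributes roughly $O(\sqrt{\eta}\ln\frac1\eta)$ relative to the leading term.

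Projecting onto the $\bw$-block gives
\[
\bw_i(t_2)=\frac{\alpha_+}{\sqrt2\,\eta}\frac{\btheta^{\star}}{\|\btheta^{\star}\|}+\bm{z}+\bm{e},
\]
where $\|\bm{z}\|=O(\sqrt\eta)$ and $\bm{e}$ collects the $\bm{u}_-$ and error contributions. The cosine is then at least $1-\|\bm{z}+\bm{e}_\perp\|^2/\|\text{main}\|^2$, or more crudely $1-O(\|\bm z+\bm e_\perp\|/\|\text{main}\|)$. If $|\alpha_+|\gtrsim\sqrt\eta$, this ratio is $O(\eta\ln\frac1\eta)\le O(\sqrt\eta\ln\frac1\eta)$, which gives the claimed bound.

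\textbf{Main obstacle.} The hard step is lower-bounding $|\alpha_+|=\frac{1}{\sqrt2}\bigl|\langle\bw_i(t_1),\btheta^{\star}\rangle/\|\btheta^{\star}\|+a_i(t_1)\bigr|$ uniformly over neurons. A neuron with $\alpha_+\approx 0$ never aligns. I would argue this holds with high probability, using that after Phase I the state is random (label-noise driven) and not adversarially placed on the stable manifold. Alternatively, I would weaken to the cancellation being at most polynomially small, at the cost of the logarithmic factor.

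A second point to check is that the higher-order terms dropped in \eqref{phaseIIiter}, namely $a_i\btheta$ and $\btheta^\top\bw_i$, stay $O(\eta^{1.5})$ throughout $[t_1,t_2]$. This fails once norms grow to order $1$. I would control it by a bootstrap: as long as $\|\bm{v}_j\|\le\eta^{1/4}$ for all $j$, the cubic terms are $O(m\eta^{3/4}\cdot\eta^{1/4})$, which is small under (A1). I would stop the clock at the first time this fails, noting alignment has already been reached by then.
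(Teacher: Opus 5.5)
Your time-scale objection is right, and it is exactly the step the paper's proof does not survive. The paper takes the same route you do: it iterates (\ref{phaseIIiter}) and passes to $\exp(T_2\eta\bM)$. It then replaces $\exp(T_2\eta\bM)$ by $e\,\bM/\|\btheta^{\star}\|$, which discards the identity part that carries $\bw_i(t_1)$ forward. Since $T_2\eta\|\btheta^{\star}\|=\eta\ln\frac1\eta$, in fact $(\boldsymbol{I}+\eta\bM)^{T_2}=\boldsymbol{I}+O(\eta\ln\frac1\eta)$. Take a neuron that enters Phase II with $\|\bw_i(t_1)\|=\sqrt\eta$ orthogonal to $\btheta^{\star}$. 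It acquires only an $O(\eta^{1.5}\ln\frac1\eta/\|\btheta^{\star}\|)$ component along $\btheta^{\star}$, so it is still nearly orthogonal at $t_2$ whenever $\|\btheta^{\star}\|\gg\eta\ln\frac1\eta$. The lemma with the horizon as written is therefore out of reach by this method. By moving to $T_2/\eta$ steps you are proving a different statement; that is legitimate since you flag it, but the modified argument must then close, and it does not.

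\textbf{The $1/\eta$ amplification is never realized in the linear regime.} Your bootstrap stops the clock at $\|\boldsymbol{v}_j\|\le\eta^{1/4}$, which allows amplification of at most $\eta^{-1/4}$ when $|\alpha_+|\sim\sqrt\eta$. Worse, the dropped terms $\eta a_i\btheta$ and $\eta\,\btheta^{\top}\bw_i$ perturb $\bM$ by a relative amount $\|\btheta\|/\|\btheta^{\star}\|$. Growing neurons also add to $\btheta$ coherently: along $\boldsymbol{u}_+$ one has $a_j\bw_j=\tfrac{1}{2}G^2\alpha_{+,j}^2\,\btheta^{\star}/\|\btheta^{\star}\|$, with $G$ the common growth factor. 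So the linear phase ends once $G\approx(2\|\btheta^{\star}\|/\sum_j\alpha_{+,j}^2)^{1/2}$. If a constant fraction of neurons grow comparably, then at norm $\eta^{1/4}$ you already have $\|\btheta\|\sim m\sqrt\eta=\Omega(1)\gg m^{-1/4}\ge\|\btheta^{\star}\|$. (A1) is a lower bound on $m$, so it works against your bootstrap, not for it. The alignment you can certify is governed by this $G$, not by $1/\eta$, so your $O(\eta\ln\frac1\eta)$ ratio is not available.

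\textbf{The error accounting fails.} You treat the $O(\eta^{1.5})$ in (\ref{phaseIIiter}) as an additive per-step error. Summed over $\ln(1/\eta)/(\eta\|\btheta^{\star}\|)$ steps with amplification up to $1/\eta$, it can reach order $\eta^{-1/2}/\|\btheta^{\star}\|$. That is no smaller than the leading term $|\alpha_+|/\eta\lesssim\eta^{-1/2}$. Your claimed relative $O(\sqrt\eta\ln\frac1\eta)$ implicitly takes $|\alpha_+|$ of order one. These terms are state-proportional and must be kept as perturbations of $\bM$, not as additive errors.

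\textbf{The uniform bound over neurons is missing.} ``For any neuron'' needs a lower bound on $|\alpha_{+,i}|$ for every $i$, and your randomness heuristic points the other way. Over $m$ neurons the smallest $|\alpha_{+,i}|$ is generically of order $\sqrt\eta/m$. Since $G\ll m$, such neurons leave the linear phase with their $O(\sqrt\eta)$ kernel component dominant. The paper's last line also tacitly needs $|a_i(t_1)|\gtrsim\eta$. You need either an explicit non-degeneracy hypothesis or to restrict the claim to neurons of non-negligible norm, as the Convergence lemma does.

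Finally, alignment at your stopping time still has to be shown to persist until $t_2$.
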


% if $|\langle(\bm{A^*})^\top,\bm{w_{1l}}(t_0)\rangle| \geq \dfrac{1}{2}$ (\textcolor{red}{has a good alignment}) and $\langle(\bm{A^*})^\top,\bm{w_{1l}(t_0)}\rangle\cdot\langle\mathbb{E}[w_{2l}(t)](\bm{A^*})^\top,\bm{w_{1l}(t_0)}\rangle < 0$, and $\Vert \bm{w_{1l}(t_0)}\Vert \geq c(\eta)$( \textcolor{red}{still active} ) then 

\textbf{Proof}. By equ (\ref{phaseIIiter}), we have
\begin{align*}
  \left[\begin{matrix}
        \bw_{i}(t_1+T_2) \\[2mm]
        a_{i}(t_1+T_2)
    \end{matrix}\right]&= (\bm{I} +\eta\cdot\bm{M})^{T_2}\cdot \left[\begin{matrix}
        \bw_{i}(t_1) \\[2mm]
        a_{i}(t_1)
    \end{matrix}\right] + O(T_2\cdot \eta^{1.5}) \\[2mm]
    &= \sum_{k=0}^{T_2} \binom{T_2}{k} (\eta \bm{M})^k\cdot \left[\begin{matrix}
        \bw_{i}(t_1) \\[2mm]
        a_{i}(t_1)
    \end{matrix}\right] + O(T_2\cdot \eta^{1.5})  \\[2mm]
    &= (I + T_2\eta\bm{M} + \frac{T_2(T_2-1)}{2}(\eta \bm{M})^2 + \cdots + (\eta M)^T)\cdot \left[\begin{matrix}
        \bw_{i}(t_1) \\[2mm]
        a_{i}(t_1)
    \end{matrix}\right] + O(T_2\cdot \eta^{1.5}) \\[2mm]
    &= (\sum_{i=0}^{T_2}\frac{(T_2\eta\bm{M})^i}{i!} + O(T_2\cdot\eta^2))\cdot \left[\begin{matrix}
        \bw_{i}(t_1) \\[2mm]
        a_{i}(t_1)
    \end{matrix}\right] + O(T_2\cdot \eta^{1.5})  \\[2mm]
    &= (\exp(T_2\eta\cdot \bM) + O(\frac{(T_2\eta\cdot \bM)^{T+1}}{(T+1)!}))\cdot \left[\begin{matrix}
        \bw_{i}(t_1) \\[2mm]
        a_{i}(t_1)
    \end{matrix}\right] + O(T_2\cdot \eta^{1.5}) \quad(\text{taylor expansion}) \\[2mm] 
     &= \dfrac{e\cdot\bM}{\|\btheta^*\|}\left[\begin{matrix}
        \bw_{i}(t_1) \\[2mm]
        a_{i}(t_1)
    \end{matrix}\right] + O(T_2\cdot \eta^{1.5}) \quad(\text{$\eta^{-\ln\eta} \ll (-\ln\eta)^{-\ln\eta}$}) \\[2mm] 
\end{align*}

So we have 
\begin{equation}
     \bw_{i}(t_1+T_2) =  \dfrac{e\cdot a_{i}(t_1)}{\Vert \btheta^*\Vert
    _2}\cdot {\btheta^*}^\top + O(\ln \dfrac{1}{\eta}\cdot \eta^{1.5})  
\end{equation}

Then we have, 
\begin{align*}
     \left\lvert \langle\btheta^*,\bw_{i}(t_2)\rangle\right\lvert &= \left\lvert\langle\btheta^*,\dfrac{e\cdot a_{i}(t_1)}{\Vert \btheta^*\Vert
    _2}\cdot {\btheta^*}^\top +O(\ln \dfrac{1}{\eta}\cdot \eta^{1.5})\rangle\right\lvert    \\[2mm]
    &= \dfrac{\left\lvert\Vert\btheta^*\Vert\cdot e\cdot a_i(t_1)+\btheta^*\cdot O(\ln \dfrac{1}{\eta}\cdot \eta^{1.5})\right\lvert}{\Vert\btheta^*\Vert\cdot\Vert\bw_{i}(t_2)\Vert} \\[2mm]
    &=\dfrac{\left\lvert e\cdot a_i(t_1)+\dfrac{\btheta^*}{\Vert\btheta^*\Vert}\cdot O(\ln \dfrac{1}{\eta}\cdot \eta^{1.5})\right\lvert}{\left\Vert\dfrac{e\cdot a_{i}(t_1)}{\Vert \btheta^*\Vert}\cdot {\btheta^*}^\top + O(\ln \dfrac{1}{\eta}\cdot \eta^{1.5})  \right\Vert} \\[2mm]
     &\geq \dfrac{e\cdot a_i(t_1) - \left\lvert O(\ln\dfrac{1}{\eta}\cdot\eta^{1.5})\right\lvert}{e\cdot a_i(t_1) + \left\lvert O(\ln\dfrac{1}{\eta}\cdot\eta^{1.5}))\right\lvert}\quad(\text{triangle inequality})\\[2mm]
     &= 1-\dfrac{\left\lvert O(\ln\frac{1}{\eta}\cdot\eta^{1.5})\right\lvert}{e\cdot a_i(t_1) + \left\lvert O(\ln\frac{1}{\eta}\cdot\eta^{1.5})\right\lvert} \geq 1 - \left\lvert O(\ln\frac{1}{\eta}\cdot\sqrt{\eta})\right\lvert 
\end{align*}
Therefore, we complete the proof.\quad \(\square\)

\subsection{Convergence to sparse solution}

We assume that all the neurons are perfect aligned, i.e.
for every neuron $\bw_i$ ($i\in[m]$) at step $t$, there exists coefficient $\gamma_i(t)$ such that $\bw_i(t) = \gamma_i(t)\cdot \btheta^*$. 

\begin{lemma}[Convergence]\label{phase II norm increase}
    Suppose \cref{cond:main}~(A1-3, 5-6) holds and consider gradient descent for updates.
    Assume all the neurons are perfectly aligned at step $t_2$. Let $t_3 = t_2 + \frac{1}{\|\btheta^{\star}\|^2}\cdot\frac{\ln(1/\eta)}{\eta} $. Using gradient descent, we have $\|\btheta(t_3) - \btheta^{\star}\|\leq |O(\eta\cdot\ln\frac{1}{\eta})|$. Furthermore, for any neuron $\|\bw_i(t_3)\|\geq \sqrt{\eta}$ ($i\in[m]$), we have 
    \begin{equation}
\dfrac{\left\lvert\langle\boldsymbol{\theta}^{\star},\boldsymbol{w}_i(t_3)\rangle\right\lvert}{\|\boldsymbol{\theta}^{\star}\|\cdot\|\boldsymbol{w}_{i}(t_3)\|}\geq 1 - \left\lvert O(\eta\cdot\ln\frac{1}{\eta})\right\lvert.
    \end{equation}
\end{lemma}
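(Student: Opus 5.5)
Under the perfect-alignment assumption, I will write $\bw_i(t)=\gamma_i(t)\,\btheta^{*\top}/\|\btheta^*\|$ and track the scalars $(\gamma_i,a_i)$. Replacing $\frac1n\sum_j\bx_j\bx_j^\top$ by $I+O(n^{-1/2})$, with $O(n^{-1/2})=O(\eta)$ by (A3), the GD updates \eqref{w_i update GD}--\eqref{a_i update GD} keep every $\bw_i$ on the line spanned by $\btheta^*$, up to an $O(\eta^2)$ perturbation per step. Indeed, $\btheta(t)-\btheta^*=(s(t)-\|\btheta^*\|)\btheta^*/\|\btheta^*\|$, where $s(t)=\sum_i a_i\gamma_i$. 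The dynamics therefore reduce to
\[
\gamma_i(t+1)=\gamma_i(t)-\eta\,a_i(t)\,r(t),\qquad a_i(t+1)=a_i(t)-\eta\,\gamma_i(t)\,r(t),\qquad r(t)=s(t)-\|\btheta^*\|,
\]
plus $O(\eta^2)$ terms. This is the standard scalar two-layer linear model.

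Step 1 (conserved quantity and convergence of $s$). I will use $\gamma_i^2-a_i^2$, which is conserved up to $O(\eta^2 r^2)$ per step, so each neuron moves along a hyperbola. Writing $u_i=\gamma_i+a_i$ and $v_i=\gamma_i-a_i$ gives $u_i(t+1)=(1-\eta r)u_i$ and $v_i(t+1)=(1+\eta r)v_i$. Since $s=\frac14\sum_i(u_i^2-v_i^2)$, while $r<0$ the $u$ components grow and the $v$ components shrink geometrically. At $t_2$ the alignment lemma (\cref{phase II alignment}) gives $\bw_i\approx e\,a_i(t_1)\btheta^{*\top}/\|\btheta^*\|$, so $u_i$ dominates (or, by symmetry, $v_i$ with sign flipped). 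The dynamics of $s$ then look like $s(t+1)\approx s(t)-\eta\,r(t)\,(\sum_i\gamma_i^2+a_i^2)$, and this quantity is at least of order $\eta\cdot s\cdot(\text{const})$ once $s$ is comparable to $\|\btheta^*\|$. That gives logistic-type growth: a saddle-escape phase from small $s$ lasting $O(\frac{\ln(1/\eta)}{\eta\|\btheta^*\|})$ steps, followed by linear convergence of $r$ at rate $1-\Theta(\eta\|\btheta^*\|)$. I will bound both durations by $t_3-t_2=\frac{\ln(1/\eta)}{\eta\|\btheta^*\|^2}$ and conclude $|r(t_3)|\le O(\eta)$. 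Adding the accumulated $O(\eta^2)$ errors over $O(\ln(1/\eta)/\eta)$ steps gives $O(\eta\ln\frac1\eta)$, which yields $\|\btheta(t_3)-\btheta^*\|\le O(\eta\ln\frac1\eta)$.

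Step 2 (alignment of large neurons). The components of $\bw_i$ orthogonal to $\btheta^*$ start at zero. They can only be created by the $O(\eta/\sqrt n)=O(\eta^2)$ perturbation at each step, and they evolve multiplicatively with factor $1+O(\eta|r|)$. Over the horizon they therefore stay $O(\eta^2\cdot\frac{\ln(1/\eta)}{\eta})\cdot O(1)=O(\eta\ln\frac1\eta)$ in absolute size. For a neuron with $\|\bw_i(t_3)\|\ge\sqrt\eta$, the cosine is then at least $1-O(\|\bw_i^\perp\|^2/\|\bw_i\|^2)$. To obtain the stated relative bound $1-O(\eta\ln\frac1\eta)$ I will instead bound the angle directly, as $\|\bw_i^\perp\|/\|\bw_i\|$. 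This is where the threshold $\sqrt\eta$ matters: I will refine the perpendicular estimate so that it scales like $O(\eta\ln\frac1\eta)\|\bw_i\|$, using that the perturbation at each step is proportional to $a_i$ and hence to the size of the neuron.

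The main obstacle is Step 1's saddle escape. Controlling the time for $s$ to leave the small initial value $\Theta(\eta)$ to within the prescribed $t_3$ requires a lower bound on $\sum_i a_i(t_2)^2$ that is inherited from Phase II's start. It also requires ensuring that the $v$-components (neurons of the wrong sign) do not cancel the growth. I would handle both by using the conservation law to show that $\sum_i(u_i^2-v_i^2)$ grows monotonically while $r<0$, and then invoke a discrete Grönwall comparison with the logistic ODE $\dot s=2\|\btheta^*\|\,s\,(\|\btheta^*\|-s)$.
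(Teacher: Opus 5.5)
The gap is the one you flag yourself in Step 1, and it cannot be closed from the stated hypotheses: without a quantitative lower bound on the unstable mode at $t_2$, the lemma is false.

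\textbf{Counterexample.} The configuration $\ba(t_2)=\bW(t_2)=0$ is perfectly aligned (all $\gamma_i=0$) and is a fixed point of GD. By continuity of the finitely many GD steps, every sufficiently small nonzero aligned configuration still ends with $\|\btheta(t_3)-\btheta^*\|\approx\|\btheta^*\|$.

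\textbf{The bound is not available.} It cannot be ``inherited from Phase II's start'', which supplies only the upper bounds $|a_i|,\|\bw_i\|\le\sqrt\eta$. The alignment lemma says nothing about magnitudes, or about the relative sign of $a_i(t_2)$ and $\gamma_i(t_2)$. So $s(t_2)=\Theta(\eta)$ is not available either.

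\textbf{You also target the wrong quantity.} In your reduced dynamics, take $a_i(t_2)=-\gamma_i(t_2)$ for all $i$. Then $u\equiv0$ is invariant, the $v_i$ decay, and $s$ increases only up to $0$. This happens even though $\sum_i a_i(t_2)^2$ can be as large as $m\eta$ and $\sum_i(u_i^2-v_i^2)$ is monotone. Likewise, ``$v_i$ dominates with sign flipped'' is not a symmetry: $(\gamma_i,a_i)\mapsto(-\gamma_i,-a_i)$ sends $u_i\mapsto-u_i$ and does not exchange $u$ and $v$.

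\textbf{What repairs Step 1.} You need an added hypothesis such as $s(t_2)\ge\eta^{C}$ for a constant $C$, or a lower bound on $\sum_i(\gamma_i+a_i)^2$. Given it, the exact identity $s(t+1)=s(t)-\eta r\sum_i(a_i^2+\gamma_i^2)+\eta^2r^2s(t)$ together with $\sum_i(a_i^2+\gamma_i^2)\ge2|s|$ yields $s(t+1)\ge s(t)\bigl(1+2\eta(\|\btheta^*\|-s(t))\bigr)$ for $0<s<\|\btheta^*\|$. Escape plus local convergence then take $O\bigl(\frac{(1+C)\ln(1/\eta)}{\eta\|\btheta^*\|}\bigr)$ steps. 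This fits in $t_3-t_2$ because $\|\btheta^*\|\le m^{-1/4}$ is small.

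Your comparison ODE $\dot s=2\|\btheta^*\|s(\|\btheta^*\|-s)$ carries a superfluous factor $\|\btheta^*\|$. With it, your bound on the escape time from $\eta^{C}$ is about $\frac{C\ln(1/\eta)}{2\eta\|\btheta^*\|^2}$, which already exceeds the budget once $C>2$.

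\textbf{Why the paper does not hit this.} The paper has no escape phase, but only because of an algebra slip. It asserts $\btheta(t+1)-\btheta^*=(\btheta(t)-\btheta^*)\bigl(1-\eta(\sum_i a_i^2+\|\btheta^*\|^2)\bigr)+O(\eta^2)$ and contracts at the uniform rate $1-\eta\|\btheta^*\|^2$ from $t_2$ on. With $\bw_i=\gamma_i\btheta^*$, however, the $\bW^\top\bW$ term contributes $\|\btheta^*\|^2\sum_i\gamma_i^2$, not $\|\btheta^*\|^2$. The true factor is $1-\eta\sum_i(a_i^2+\|\bw_i\|^2)$. It has no $\|\btheta^*\|^2$ floor and is $1-O(m\eta^2)$ at $t_2$. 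Your two-stage picture is therefore the right one, but it holds only under the extra hypothesis.

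\textbf{Step 2 needs a similar repair.} A per-step factor $1+O(\eta|r|)$ is not $O(1)$ over the horizon. During the escape $|r|\approx\|\btheta^*\|$ for $\Theta\bigl(\frac{\ln(1/\eta)}{\eta\|\btheta^*\|}\bigr)$ steps, so $\eta\sum_t|r(t)|=\Theta(\ln\frac1\eta)$ and the product is $\eta^{-\Theta(1)}$.

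What actually controls the orthogonal part is that directions orthogonal to $\btheta^*$ are non-expanding. They correspond to eigenvalue $0$ of $\bM$, and the homogeneous update is $\bW_\perp\mapsto(I-\eta\ba\ba^\top)\bW_\perp$ up to $O(\eta\epsilon)$ couplings, where $\epsilon=\|\frac1n\sum_j\bx_j\bx_j^\top-I\|$. The source term is $O(\eta\epsilon|a_i||r|)$. In the reduced dynamics $a_i^2-\gamma_i^2$ is multiplied by $1-\eta^2r^2$ each step, so $|a_i|\le|\gamma_i|+\sqrt\eta$. Combining these gives an orthogonal error of relative size $O(\epsilon\ln\frac1\eta)$ for neurons with $\|\bw_i(t_3)\|\ge\sqrt\eta$. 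This is your refined estimate, and it is where the threshold $\sqrt\eta$ enters.

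The relative form is genuinely needed. The paper's last step divides an absolute $O(\eta\ln\frac1\eta)$ error by $\|\bw_i\|\ge\sqrt\eta$, which yields only $1-O(\sqrt\eta\ln\frac1\eta)$.
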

By equ (\ref{w_i update GD}), when $t=t_2 $, we have
\begin{align}
    \bw_{i}(t+1) &= \bw_{i}(t) - \eta\cdot a_{i}(t)\cdot \dfrac{1}{n} \sum_{j=1}^m(\ba(t)^\top\bW(t) \bx_j - \by_j)\bx_j \\
    &= \gamma_i(t)\cdot \btheta^* - \eta\cdot a_{i}(t)\cdot(\dfrac{1}{n}\sum_{j=1}^m \bx_j\cdot\bx_j^\top)\cdot(\sum_{k=1}^m a_k(t)\cdot\gamma_k(t)\cdot\btheta^*-\btheta^*)  \\
    &= \left\{\gamma_i(t) -\eta\cdot a_{i}(t)\cdot (\sum_{k=1}^m a_k(t)\cdot\gamma_k(t)-1)\right\}\cdot \btheta^* + O(\dfrac{\eta}{\sqrt{n}}) 
\end{align}
The last equation is due to $\text{$\dfrac{1}{n}\sum_{j=1}^m \bx_j\cdot\bx_j^\top = I + O(\frac{1}{\sqrt{n}}) = I + O(\eta^2)$ by large number law} $ and \cref{cond:main}(A3). Notice that
\begin{equation}
    \btheta(t) = \ba(t)^\top\cdot\bW(t) = \sum_{i=1}^m a_i(t)\cdot\bw_i(t)= (\sum_{i=1}^m a_i(t)\cdot\gamma_i(t))\cdot\btheta^* 
\end{equation}

Then we have 
\begin{align*}
    \btheta(t+1) &= \btheta(t) - \eta\cdot \ba(t)^\top\ba(t) \cdot \dfrac{1}{n}\sum_{i=1}^n(\ba(t)^\top \bW(t)\bx_i - y_i)\cdot \bx_i^\top \\
    &\quad - \eta\cdot \dfrac{1}{n}\sum_{i=1}^n(\ba^\top(t)\bW(t)\bx_i - y_i)\cdot \bx_i^\top \bW(t)^\top \bW(t) + O(\eta^2) \\
    &= \btheta(t) - \eta\cdot\sum_{i=1}^m a_i(t)^2\cdot (\sum_{j=1}^m a_i(t)\cdot\gamma_i(t)-1)\cdot\btheta^* - \eta\cdot(\sum_{j=1}^m a_i(t)\cdot\gamma_i(t)-1)\cdot\|\btheta^*\|^2\cdot\btheta^*+O(\eta^2) \\
    &= \btheta(t) -\eta\cdot(\sum_{j=1}^m a_i(t)\cdot\gamma_i(t)-1)\cdot(\sum_{i=1}^m a_i(t)^2 + \|\btheta^*\|^2)\cdot\btheta^* + O(\eta^2)
\end{align*}
We consider the change of $\btheta(t)-\btheta^*$.
Subtracting $\theta^*$ on both side and we retain
\begin{align}
    \btheta(t+1) - \btheta^* &= \btheta(t) -\btheta^*-\eta\cdot(\sum_{j=1}^m a_i(t)\cdot\gamma_i(t)-1)\cdot(\sum_{i=1}^m a_i(t)^2 + \|\btheta^*\|^2)\cdot\btheta^* + O(\eta^2) \\
    &= (\sum_{k=1}^m a_k(t)\cdot\gamma_k(t)-1)\cdot(1 - \eta\cdot(\sum_{i=1}^m a_i(t)^2 + \|\btheta^*\|^2))\cdot\btheta^* + O(\eta^2) \\
    &= (\btheta(t)-\btheta^*)\cdot(1 - \eta\cdot(\sum_{i=1}^m a_i(t)^2 + \|\btheta^*\|^2)) + O(\eta^2) \label{perfect iter}
\end{align}

In the beginning, $|a_k(t_2)|,\|\bw_k(t_2)\|\leq \sqrt{\eta}$, so $\gamma_i(t_2)\leq \frac{\sqrt{\eta}}{\|\btheta^*\|}$ and we have
\begin{equation}
    \left\lvert\sum_{k=1}^m a_k(t_2)\cdot\gamma_k(t_2)
    \right\lvert \leq m\cdot\dfrac{\eta}{\|\btheta^*\|} 
\end{equation}

% We claim: there exists a step $t_1 = t_0 + T$ such that 
% \begin{equation}
%     % \left\lvert\sum_{k=1}^m a_k(t +T)\cdot\gamma_k(t+T)\right\lvert\geq 1- \varepsilon \quad or \quad 
%     \sum_{i=1}^m a_i(t_1)^2 \geq \dfrac{1}{\eta} - \|\btheta^*\| - \varepsilon
% \end{equation}
% Otherwise,  
by equ (\ref{perfect iter}), for any $t_2\leq t\leq t_3$ it holds
\begin{align}
   \| \btheta(t+1) - \btheta^*\| &=  \| \btheta(t) - \btheta^*\|\cdot\left\lvert(1 - \eta\cdot(\sum_{i=1}^m a_i(t)^2 + \|\btheta^*\|^2))\right\lvert + O(\eta^2) \\[2mm]
   &\leq \| \btheta(t) - \btheta^*\|\cdot (1-\eta\cdot\|\btheta^*\|^2)+O(\eta^2)
\end{align}

Let $T = \frac{-\ln\eta}{\|\btheta^*\|^2\cdot\eta}$, we retain
\begin{equation}
    \| \btheta(t_3) - \btheta^*\|\leq  \| \btheta(t_2) - \btheta^*\|\cdot(1 - \eta\cdot\|\btheta^*\|^2)^T + O(T\cdot\eta^2)
\end{equation}

Since $(1-x)^T\leq \exp({-x\cdot T})$, we have
\begin{align*}
     \| \btheta(t_3) - \btheta^*\|&\leq  \| \btheta(t_2) - \btheta^*\|\cdot(1 - \eta\cdot\|\btheta^*\|^2)^T + O(T\cdot\eta^2)\\[2mm]
     &\leq  \| \btheta(t_2) - \btheta^*\|\cdot\exp(-\eta\cdot\|\btheta^*\|^2\cdot T) + O(\eta\cdot\ln\frac{1}{\eta}) \\[2mm]
      &=  \| \btheta(t_2) - \btheta^*\|\cdot\eta + O(\eta\cdot\ln\frac{1}{\eta}) \\[2mm]
      &= \left\lvert O(\eta\cdot\ln\frac{1}{\eta})\right\lvert
\end{align*}

So we reatin 
\begin{equation}
   \left\lvert 1 -  \sum_{k=1}^m a_k(t_3)\cdot\gamma_k(t_3)\right\lvert\leq \left\lvert O(\eta\cdot\ln\frac{1}{\eta})\right\lvert \label{t_3bound}
\end{equation}
Since the summation of $\bw_i(i)$ approach to $\btheta^*$, which shows that there exists some neurons with "big" norm. For any neuron $\|\bw_i(t_3)\|\geq \sqrt{\eta}$, we have  $\gamma_i(t_3)\geq \frac{\sqrt{\eta}}{\|\btheta^*\|}$. For these neuron, we retain
\begin{align*}
\left\lvert\langle\bw_i(t_3),\btheta^*\rangle\right\lvert &= \left\lvert\langle (\sum_{k=1}^m a_k(t_3)\cdot\gamma_k(t_3))\cdot \btheta^* + O(T\cdot\eta^2) , \btheta^*\rangle\right\lvert\\[2mm]
&= \left\lvert\dfrac{ (\sum_{k=1}^m a_k(t_3)\cdot\gamma_k(t_3))\cdot\|\btheta^*\|^2 + O(\eta\cdot\ln\dfrac{1}{\eta})}{{\gamma_i(t_3)\cdot\|\btheta^*\|^2 + O(\eta\cdot\ln\frac{1}{\eta})}}\right\lvert \\[2mm]
&\geq \dfrac{\left\lvert\gamma_i(t_3)\cdot\Vert\btheta^*\Vert^2\right\lvert - \left\lvert O(\eta\cdot\ln\dfrac{1}{\eta})\right\lvert }{\left\lvert \gamma_i(t_3)\cdot\|\btheta^*\|^2\right\lvert  + \left\lvert O(\eta\cdot\ln\frac{1}{\eta})\right\lvert }\quad\text{(by triangle inequality and equ (\ref{t_3bound}) )}\\[2mm]
&= 1 - \left\lvert O(\eta\cdot\ln\dfrac{1}{\eta})\right\lvert
\end{align*}
Therefore, we complete the proof. \quad\(\square\)

\section{Experiments details}
\textbf{Main Experimental Setup.}
We perform our empirical observations on commonly used image classification datasets CIFAR-10 \citet{krizhevsky2009learning}, with the standard architectures ResNet-18 \citet{kaiming2016residual}.
Optimization is done with vanilla and label noise SGD, with no weight decay or momentum.
The learning rate is set to $0.1$, and the total number of epochs is $160$.

\textbf{Model Pruning and Sparsity.}
% Modern deep neural networks are often dense and computationally expensive. 
Model pruning is a compression technique that reduces network size by selectively removing redundant components, such as weights and neurons, while maintaining comparable performance. 
The pruning limit of a trained neural network is often linked to its sparsity~\citep{diao2023pruning}, as weights and neurons with small magnitudes are typically considered redundant or non-influential~\citep{Hagiwara1993removal,han2015deep,frankle2018lottery}.
Therefore, a sparser model tends to retain higher performance after pruning to the same level.
In this work, we use model pruning to investigate the sparsity of solutions obtained through different training algorithms.
Specifically, we focus on the Iterative Feature Merging (IFM) algorithm~\citep{chen2024going}. 
To the best of our knowledge, IFM is the first pruning technique that does not require fine-tuning or modifications to the training procedure and thus can be directly applied to a pre-trained model.

\section{Additional experiment}

\begin{figure*}[tb!]
    \begin{center}
\includegraphics[width=0.75\textwidth]{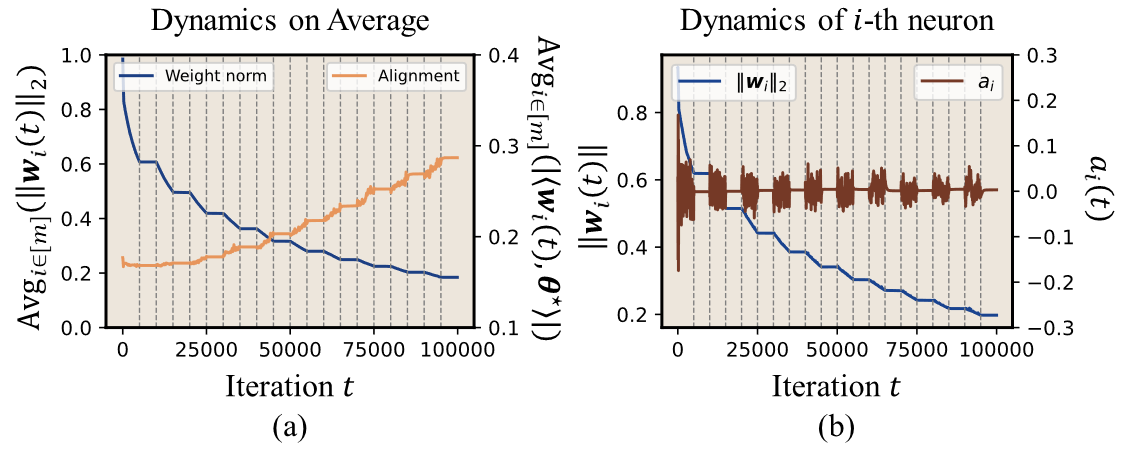}
        \caption{ \textbf{Neuron dynamics in alternating noisy/noiseless SGD training.}
        We replicate the synthetic problem setup from Subsection "Setup and Overview: A Two-Layer Linear Network", changing the standard SGD label noise to training with alternating noisy and noiseless SGD every 5000 steps.
         \textbf{(a) Learning dynamics on average.}
        The averaged neuron norm $\text{Avg}_{i\in [m]}(\Vert \bw_i (t) \Vert_2)$ and the averaged neuron alignment $\text{Avg}_{i\in [m]}(\langle \bw_i(t), \btheta^{\star} \rangle)$ vs. training iteration $t$.
        \textbf{(b) Learning dynamics of $i$-th neuron.}
        The $i$-th neuron norm $\Vert \bw_i (t) \Vert_2$ and the corresponding absolute value $|a_i(t)|$ in the second layer vs. training iteration $t$.
            \label{fig:alternating nosie}
        }
    \end{center}
\end{figure*}

In this section, we present the supplementary experiment to help understand the mechanistic principles of label noise SGD.

\noindent
\textbf{Alternating label noise SGD:}
We trained the model using SGD but alternate label noise every 5,000 steps (adding label noise for 5000 steps and then remove label noise for 5000 steps). As shown in \cref{fig:alternating nosie}, this experiment reveals two key phenomena:  (i) the norms of first-layer neurons gradually decrease when adding label noise and the norm reduction stops when removing label noise. (ii) the second-layer neuron weights oscillate near zero under noise but stabilize.
These observations collectively demonstrate that the crucial role  of label noise in facilitating the transition from lazy to rich regimes. Our theoretical analysis further in paper establishes that label noise SGD induces the second-layer oscillations, driving the progressive reduction of first-layer norms, thereby enabling the model to escape the lazy regime.

% \newpage

\end{document}